\documentclass{article}

\usepackage{microtype}
\usepackage{graphicx}
\usepackage{subcaption}
\usepackage{booktabs} % for professional tables
\usepackage{multirow}   % For multi-row cells
\usepackage{xcolor}     % For row coloring
\usepackage{colortbl}   % For coloring table cells
\usepackage{amsmath, amssymb, amsfonts, amsthm}
\usepackage{bm} % For bold math vectors
\usepackage{siunitx} % For decimal alignment
\usepackage{etoolbox} % For bolding in S-columns
\usepackage[most]{tcolorbox}
\usepackage{enumitem}
\usepackage{natbib}
\usepackage{fontawesome5} % Essential for the icons
\usepackage{array}
\usepackage{tabularx}
\usepackage{xspace}
\usepackage{CJKutf8}
\usepackage{mdframed}
\usepackage{mathtools}
\usepackage{balance}
\usepackage{wrapfig}
\usepackage{tikz}
\usepackage{pgfplots}
\pgfplotsset{compat=1.18}
\usepackage{rotating} % Required for sidewaystable
\newmdenv[
    linecolor=gray!40,
    backgroundcolor=gray!5,
    roundcorner=5pt,
    innertopmargin=10pt,
    innerbottommargin=10pt,
    innerleftmargin=10pt,
    innerrightmargin=10pt,
    skipabove=10pt,
    skipbelow=10pt
]{examplebox}
\newcommand{\methodname}{\textsc{DibJudge}}

\DeclareMathOperator{\E}{\mathbb{E}}

\definecolor{HeaderBlue}{HTML}{2C3E50} % Dark blue for text
\definecolor{HighlightBlue}{HTML}{EBF2FA} % Soft blue for 'Ours' rows
\definecolor{BestGreen}{HTML}{145A32}    % Dark green for best scores
\definecolor{commentboxcolor}{RGB}{30, 60, 100} % Dark Blue for frame/title
\definecolor{commentboxbg}{RGB}{240, 245, 250}   % Very light blue for background
\definecolor{insightColor}{HTML}{2E86C1} % Professional Blue
\definecolor{alertColor}{HTML}{C0392B}   % Deep Red for bias/warnings
\definecolor{boxBg}{HTML}{F4F6F7}        % Very light gray-blue background

\definecolor{bestgreen}{HTML}{CFE8D0} 
\definecolor{secondyellow}{HTML}{F6F6D5}

\definecolor{bestgreen}{RGB}{192, 228, 205}
\definecolor{secondyellow}{RGB}{227, 237, 189}
\newtcolorbox{insightbox}[1][]{
    enhanced,
    colback=boxBg,
    colframe=insightColor,
    arc=4pt,
    boxrule=1.2pt,
    left=8pt, right=8pt, top=8pt, bottom=8pt,
    fonttitle=\bfseries\sffamily,
    title={\faLightbulb[regular]~~Key Insight},
    attach boxed title to top left={xshift=12pt, yshift*=-8pt},
    boxed title style={
        colback=insightColor,
        frame hidden,
        arc=3pt,
        shadow={1mm}{-1mm}{0mm}{black!20}
    },
    #1
}

\newtcolorbox{biasbox}[1][]{
    enhanced,
    colback=red!5,
    colframe=alertColor,
    arc=4pt,
    boxrule=1.2pt,
    left=35pt, right=8pt, top=8pt, bottom=8pt, % Extra left padding for icon
    overlay={
        \node[anchor=west, color=alertColor] at ([xshift=10pt]frame.west) {\Huge\faSearchMinus};
    },
    fonttitle=\bfseries\sffamily,
    #1
}

\usepackage{hyperref}

\usepackage[accepted]{icml2026}

\usepackage[capitalize,noabbrev]{cleveref}

\theoremstyle{plain}
\newtheorem{theorem}{Theorem}[section]
\newtheorem{proposition}[theorem]{Proposition}

\theoremstyle{definition}

\theoremstyle{remark}

\usepackage[textsize=tiny]{todonotes}

\icmltitlerunning{Mitigating Translationese Bias in Multilingual LLM-as-a-Judge via Disentangled Information Bottleneck}

\begin{document}

\twocolumn[
  \icmltitle{Mitigating Translationese Bias in Multilingual \\ LLM-as-a-Judge via Disentangled Information Bottleneck}

  % It is OKAY to include author information, even for blind submissions: the
  % style file will automatically remove it for you unless you've provided
  % the [accepted] option to the icml2026 package.

  % List of affiliations: The first argument should be a (short) identifier you
  % will use later to specify author affiliations Academic affiliations
  % should list Department, University, City, Region, Country Industry
  % affiliations should list Company, City, Region, Country

  % You can specify symbols, otherwise they are numbered in order. Ideally, you
  % should not use this facility. Affiliations will be numbered in order of
  % appearance and this is the preferred way.
  \icmlsetsymbol{equal}{*}

  \begin{icmlauthorlist}
    \icmlauthor{Hongbin Zhang}{hit,pcl}
    \icmlauthor{Kehai Chen}{hit,pcl}
    \icmlauthor{Xuefeng Bai}{hit}
    \icmlauthor{Youcheng Pan}{pcl}
    \icmlauthor{Yang Xiang}{pcl}
    \icmlauthor{Jinpeng Wang}{keeta}
    \icmlauthor{Min Zhang}{hit,pcl}
  \end{icmlauthorlist}

  \icmlaffiliation{hit}{Institute of Computing and Intelligence, Harbin Institute of Technology, Shenzhen, China}
  \icmlaffiliation{pcl}{Pengcheng Laboratory, Shenzhen, China}
  \icmlaffiliation{keeta}{Keeta AI, Meituan, Beijing, China}

  \icmlcorrespondingauthor{Kehai Chen}{chenkehai@hit.edu.cn}
  % \icmlcorrespondingauthor{Firstname2 Lastname2}{first2.last2@www.uk}

  % You may provide any keywords that you find helpful for describing your
  % paper; these are used to populate the "keywords" metadata in the PDF but
  % will not be shown in the document
  \icmlkeywords{Machine Learning, ICML}

  \vskip 0.3in
]

% this must go after the closing bracket ] following \twocolumn[ ...

% This command actually creates the footnote in the first column listing the
% affiliations and the copyright notice. The command takes one argument, which
% is text to display at the start of the footnote. The \icmlEqualContribution
% command is standard text for equal contribution. Remove it (just {}) if you
% do not need this facility.

% Use ONE of the following lines. DO NOT remove the command.
% If you have no special notice, KEEP empty braces:
\printAffiliationsAndNotice{}  % no special notice (required even if empty)
% Or, if applicable, use the standard equal contribution text:
% \printAffiliationsAndNotice{\icmlEqualContribution}

\begin{abstract}

Large language models (LLMs) have become a standard for multilingual evaluation, yet they exhibit a severe systematic ``\emph{translationese bias}''. 
In this paper, ``translationese bias'' is characterized as LLMs systematically favoring machine-translated text over human-authored references, particularly in low-resource languages. 
We attribute this bias to spurious correlations with (i) latent manifold alignment with English and (ii) cross-lingual predictability.
To mitigate this bias, we propose \textsc{DIBJudge}, a robust fine-tuning framework that learns a minimally sufficient, judgment-critical representation via variational information compression, while explicitly isolating spurious factors into the dedicated bias branch. 
Furthermore, we incorporate a cross-covariance penalty that explicitly suppresses statistical dependence between robust and bias representations, thereby encouraging effective disentanglement.
% To mitigate these issues, we present \textsc{DIBJudge}, a robust fine-tuning framework designed to extract robust representation retains minimal and sufficient judgment-critical information via vector-quantized compression and to explicitly isolate spurious attributes into a dedicated bias branch; In particular, to decouple robust features and bias representations, we penalize mutual dependence to enforce disentanglement.
Extensive evaluations on multilingual reward modeling benchmarks and a dedicated translationese bias evaluation suite demonstrate that the proposed \textsc{DIBJudge} consistently outperforms strong baselines and substantially mitigates translationese bias.
% Code and data are available \href{www.github.com}{here}.
% Extensive evaluations on multilingual reward modeling benchmarks and a specially designed translationese bias evaluation set demonstrate that DIBJudge outperforms strong baselines and effectively reduces translationese bias severity.

\end{abstract}

\section{Introduction}

% The advent of Large Language Models (LLMs) has fundamentally transformed the evaluation paradigms by establishing ``LLM-as-a-Judge'' as a scalable alternative to costly human annotation~\citep{gu2024survey,li2024llmasajudge,li-etal-2025-generation}. In multilingual contexts, these automated judges have become the de facto standard for facilitating Reinforcement Learning from Human Feedback (RLHF) and benchmarking cross-lingual capabilities \citep{son2024mm,ghiasvand-mohammadkhani-beigy-2025-checklist,pombal2025mprometheus,anugraha2025mr3,hada-etal-2024-large,fu2025reliable,doddapaneni-etal-2025-cross}. Consequently, ensuring the accuracy and robustness of LLM judges across languages has become a critical necessity.

The emergence of Large Language Models (LLMs) has revolutionized evaluation paradigms~\citep{gu2024survey,li-etal-2025-generation}, establishing ``LLM-as-a-Judge'' as a standard framework for multilingual assessment~\citep{son2024mm,pombal2025mprometheus,anugraha2025mr3,hada-etal-2024-large,fu2025reliable,doddapaneni-etal-2025-cross}. Consequently, ensuring the accuracy and robustness of these automated judges across diverse languages has become a critical necessity~\citep{padarha2025evaluating,bogavelli2026evaluating}.

% The emergence of Large Language Models (LLMs) has revolutionized evaluation paradigms, establishing ``LLM-as-a-Judge'' as a scalable framework for automated assessment~\citep{gu2024survey,li2024llmasajudge,li-etal-2025-generation} and a cornerstone of multilingual benchmarking~\citep{son2024mm,ghiasvand-mohammadkhani-beigy-2025-checklist,pombal2025mprometheus,anugraha2025mr3,hada-etal-2024-large,fu2025reliable,doddapaneni-etal-2025-cross}. Consequently, ensuring the cross-lingual accuracy and robustness of these automated judges is imperative.

% The paradigm of automated evaluation has witnessed a fundamental transformation, including establishing ``LLM-as-a-Judge'' as a scalable surrogate for human annotation~\citep{zheng2024judging}, facilitating Reinforcement Learning from Human Feedback (RLHF) pipelines~\citep{dubois2024alpacafarm}, and standardizing cross-lingual benchmarking protocols~\citep{huang2023languages}. Consequently, these models have solidified their role as the de facto framework in multilingual contexts, successfully driving reward modeling and assessment across diverse language families.

\begin{figure}[t]
    \centering
    \includegraphics[width=0.95\linewidth]{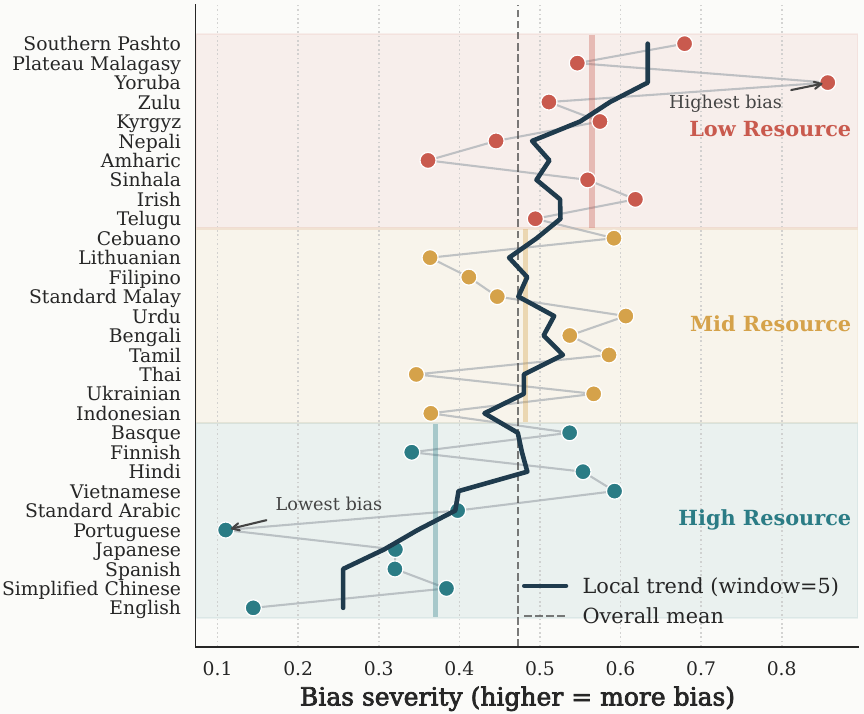}
    \caption{\textbf{Translationese Bias Severity of GPT-4o across languages.} Languages are sorted by resource availability from low (top) to high (bottom). The trend line illustrates the inverse relationship between resource availability and translationese bias.}
    \label{fig:bias_severity_by_resource_ordered}
    \vspace{-6mm}
\end{figure}

However, the reliability of LLM judges is frequently undermined by systematic biases~\citep{wang2025assessing,ye2025justice,gao2025evaluating}, such as position~\cite{shi-etal-2025-judging,wang-etal-2024-large-language-models-fair} and verbosity bias~\citep{saito2023verbosity}. While these limitations are well-studied in English contexts~\citep{chen-etal-2024-humans,zheng2023judging}, specific failure modes within multilingual settings remain underexplored. 

% In this work, we characterize a systematic phenomenon in multilingual contexts termed \emph{translationese bias}: the tendency of LLMs to exhibit a preference for machine-translated content over human-authored text, even when the former is semantically compromised. To investigate this, we conduct a comprehensive evaluation across a diverse spectrum of languages. Our findings demonstrate that this bias is pervasive and disproportionately acute in low-resource settings.

In this paper, we characterize a distinct bias of LLM-as-a-Judge in multilingual contexts, termed \emph{translationese bias}, in which LLMs favor machine-translated content over human-authored reference, even when the former is semantically flawed.
To investigate this bias, we first conduct a comprehensive evaluation across a diverse spectrum of languages. As shown in Figure~\ref{fig:bias_severity_by_resource_ordered}, this bias is not only pervasive but is significantly exacerbated in low-resource languages.
Crucially, our further attribution analysis suggests that LLM judges may conflate generation quality with two potential spurious factors: (a) latent manifold alignment with English, and (b) cross-lingual predictability.
While recent advancements in multilingual LLM judges have yielded promising results~\citep{pombal2025mprometheus,anugraha2025mr3,zhang2025evaluating}, most existing methods remain grounded in standard Supervised Fine-Tuning (SFT). 
However, SFT is susceptible to exploiting spurious correlations~\citep{shuieh2025assessing,gui2025mitigating,chen2025safety}, thereby limiting its efficacy in mitigating translationese bias.
To this end, we propose the Disentangled Information Bottleneck Judge (\textsc{DIBJudge}), a robust fine-tuning framework that explicitly decouples the latent representation into two components: a robust representation that preserves judgment-critical semantic information, and a bias representation that isolates the spurious factors identified above (i.e., latent manifold alignment with English and cross-lingual predictability). We leverage variational information compression to learn a robust, minimally sufficient representation that preserves only information essential for accurate judgment. To further encourage disentanglement between robust and bias representation, we penalize their mutual dependence during training.
% Motivated by the limitations and preliminary findings, we propose the \textbf{D}isentangle \textbf{I}nformation \textbf{B}ottleneck \textbf{J}udge (\textbf{DIBJudge}), a robust fine-tuning framework grounded in the Information Bottleneck (IB) principle~\citep{tishby2000information}. Unlike vanilla SFT, DIBJudge explicitly factorizes the latent representation into two distinct variables: a \textit{robust} variable containing task-critical semantic information, and a \textit{bias} variable capturing spurious attributes. By penalizing the mutual information between these channels while maximizing predictive accuracy, we force the model to discard non-robust shortcuts. 
Extensive experiments on multilingual reward modeling benchmarks, including M-RewardBench~\citep{gureja-etal-2025-rewardbench} and MM-Eval~\citep{son2024mm}, demonstrate that \textsc{DIBJudge} consistently outperforms strong baselines, yielding improved multilingual reward modeling performance. Moreover, evaluations on a dedicated translationese bias suite confirm that \textsc{DIBJudge} substantially mitigates the severity of translationese bias.
% Extensive empirical evaluations on the multilingual reward modeling benchmarks M-RewardBench and MM-Eval demonstrate that DIBJudge consistently outperforms strong baselines, exhibiting superior multilingual reward modeling capabilities. Furthermore, validation on a constructed bias evaluation suite confirms the model's effectiveness in mitigating the severity of translationese bias. 

In summary, we make three key claims: (i) we characterize \emph{translationese bias} in multilingual LLM judges and identify two related spurious factors—latent manifold alignment with English and cross-lingual predictability, (ii) we propose \textsc{DIBJudge}, a robust fine-tuning framework that disentangles judgment-critical semantics from spurious factors, and (iii) we show that \textsc{DIBJudge} consistently outperforms strong baselines on multilingual reward modeling benchmarks while effectively mitigating translationese bias.\footnote{\href{https://anonymous.4open.science/r/dibjudge-E341}{Our anonymous code repo is here.}}

% In summary, we make three key claims: (i) we first characterize \emph{translationese bias} in multilingual LLM judges and identify two potential factors that spurious correlates with judgment; (ii) we propose a fine-tuning framework, named\textsc{DIBJudge}, that disentangles judgment-critical from spurious translation artifacts; and (iii) we demonstrate that \textsc{DIBJudge} significantly outperforms baselines on several benchmarks and effective mitigate translationese bias.

% In summary, we make three key claims:
% \begin{enumerate}
%     \item We identify a distinct bias in multilingual LLM judges—translationese bias—where models favor machine-translated text over human-authored content due to ``English-Anchoring'' and ``Distributional Congruence'', particularly in low-resource settings.
%     \item We propose the DIBJudge, a robust fine-tuning framework grounded in the Information Bottleneck principle that disentangles task-critical semantics from spurious artifacts, preventing the model from overfitting to translationese shortcuts.
%     \item We demonstrate that DIBJudge significantly outperforms strong baselines on benchmarks like M-RewardBench and MM-Eval, while effectively mitigating bias and generalizing robustly to unseen biases and LLMs.
% \end{enumerate}

\section{Preliminary Analysis of Translationese Bias}
\label{sec:preliminary_analysis}

% To systematically investigate \emph{translationese bias}, we frame our preliminary analysis around two central research questions: (i) \textbf{RQ1:} To what extent does translationese bias persist in LLMs with varying language resource availability? (ii) \textbf{RQ2:} What latent linguistic artifacts or heuristic priors are spuriously correlated with translationese bias?
To systematically study translationese bias, we structure our preliminary analysis around two research questions: (i) \textbf{RQ1:} How does translationese bias vary across languages with different levels of resource availability? (ii) \textbf{RQ2:} What kinds of spurious factors are associated with this bias?

% \begin{itemize}[leftmargin=*, noitemsep, topsep=2pt]
%     \item \textbf{RQ1:} To what extent does translationese preference persist in LLMs with varying language resource availability?
%     \item \textbf{RQ2:} What latent linguistic artifacts or heuristic priors are spuriously correlated with this observed bias?
% \end{itemize}

\textbf{Bias Evaluation Protocol.} We construct a controlled translationese bias benchmark derived from BELEBELE~\citep{bandarkar-etal-2024-belebele}, a multilingual reading comprehension dataset spanning 122 languages. Following the language-resource taxonomy of~\citet{joshi-etal-2020-state}, we stratify languages into high-, medium-, and low-resource tiers and sample 10 representative languages per tier. For each language, we evaluate on 200 instances, yielding a balanced benchmark across resource levels. The full list of selected languages is provided in Appendix~\ref{subsec:language_selection}.
We formulate translationese bias evaluation as a pairwise preference task, where an LLM judge compares two candidate responses for the same query: (i) a \emph{human-authored reference} $x_H$, and (ii) a \emph{machine-generated variant} $x_M$ obtained via back-translation to induce translationese artifacts. To avoid position bias, each instance $i$ is evaluated under both forwarding and reverse ordering. Details are available in Appendix~\ref{apdx:bias_benchmark_construction}.

\textbf{Bias Metric Definition.}  Let $y_i \in \{0,1\}$ indicate whether the judge prefers $x_M$ in the forward order, and let $\bar{y}_i \in \{0,1\}$ denote the corresponding preference in the reverse order. We define bias severity $\mathcal{S}_{\mathrm{bias}}$ as the fraction of consistent judgments that favor the machine-generated output:
\vspace{-1mm}
\begin{equation}
\label{eq:bias_metric}
\mathcal{S}_{\mathrm{bias}} =
\frac{\sum_{i=1}^{N} \mathbb{I}\left[y_i = 1 \land \bar{y}_i = 1\right]}
{\sum_{i=1}^{N} \mathbb{I}\left[y_i = \bar{y}_i\right]}.
\end{equation}
\vspace{-3mm}

\begin{figure*}[htbp]
    \centering
    \begin{subfigure}[t]{0.32\textwidth}
        \centering
        \includegraphics[width=\textwidth]{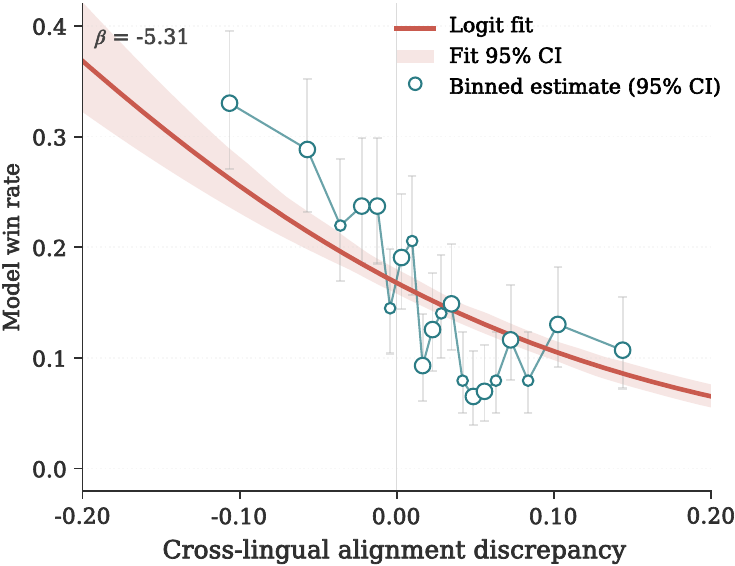}
        \caption{Machine Win Rate vs. CAD}
        \label{fig:cad_binned}
    \end{subfigure}
    \hfill
    \begin{subfigure}[t]{0.32\textwidth}
        \centering
        \includegraphics[width=\textwidth]{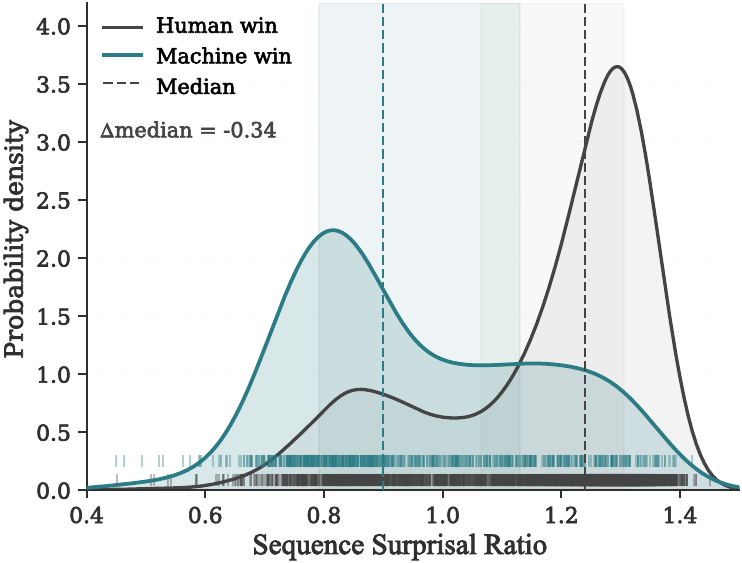}
        \caption{SSR Density (Human vs. Machine Win)}
        \label{fig:csr_density}
    \end{subfigure}
    \hfill
    \begin{subfigure}[t]{0.32\textwidth}
        \centering
        \includegraphics[width=\textwidth]{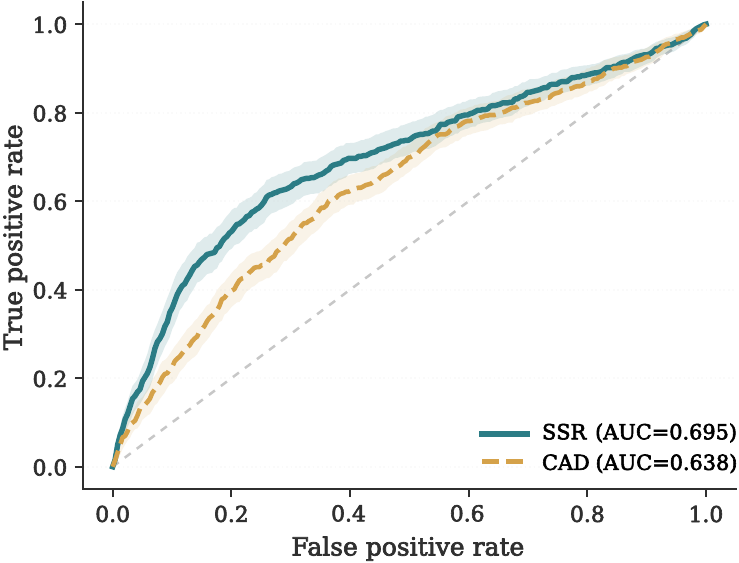}
        \caption{ROC Analysis of CAD and SSR}
        \label{fig:csr_cad_roc}
    \end{subfigure}
    \caption{\textbf{Correlation analysis of judge preference with confounding factors.} 
(a) Machine win rate decreases monotonically as CAD increases, indicating that judge preference spuriously tracks latent manifold isomorphism with English. 
(b) SSR distributions exhibit a clear drift between human-win and machine-win cases, showing that the judge systematically favors higher-likelihood outputs. 
(c) ROC curves confirm that both CAD and SSR reliably predict judge outcomes, reinforcing the attribution that translationese bias is mediated by latent manifold isomorphism with English and high predicative confidence.}
    \label{fig:csr_cad_subfig}
    \vspace{-5mm}
\end{figure*}

% \subsection{Quantifying Bias across Resource Spectrums (RQ1)}
% \label{subsec:rq1_bias_quantification}

% Figure~\ref{fig:bias_severity_by_resource_ordered} illustrates the bias severity of GPT-4o stratified by linguistic resource availability, revealing a pronounced inverse correlation. While high-resource languages (e.g., English, Portuguese) exhibit negligible bias ($\mathcal{S}_{\text{bias}} \approx 0.10$), low-resource languages (e.g., Yoruba) display extreme bias ($\mathcal{S}_{\text{bias}} \approx 0.85$). This trend indicates that diminishing pre-training exposure degrades the model's discriminative ability, causing it to systematically favor the structural regularity of machine outputs over natural human variation. Consequently, translationese preference bias is disproportionately concentrated in low-resource settings, implying that current LLM-as-a-Judge paradigms disadvantage under-represented languages by mistaking structural artifacts for quality.

\vspace{-2mm}
\subsection{Quantifying Bias across Resource Levels (RQ1)}
\label{subsec:rq1_bias_quantification}
% Figure~\ref{fig:bias_severity_by_resource_ordered} shows translationese bias severity of GPT-4o stratified by language resource level. We observe a clear inverse trend: high-resource languages (e.g., English, Portuguese) exhibit low bias ($\mathcal{S}_{\mathrm{bias}}\!\approx\!0.10$), whereas low-resource languages (e.g., Yoruba) show substantially higher bias ($\mathcal{S}_{\mathrm{bias}}\!\approx\!0.85$). This concentration of bias in low-resource settings indicates that LLM-as-a-Judge evaluation pipelines may be less reliable for underrepresented languages.
Figure~\ref{fig:bias_severity_by_resource_ordered} illustrates the translationese bias severity of GPT-4o across varying language resource levels. We observe two salient patterns: first, translationese bias is pervasive across the entire linguistic spectrum; second, there is a distinct inverse correlation between resource availability and the magnitude of bias. Specifically, while high-resource languages exhibit minimal bias, low-resource languages demonstrate significantly elevated severity. These findings expose translationese bias as a severe yet previously neglected failure mode in multilingual LLM judges, which critically undermines evaluation reliability and disproportionately compromises under-resourced languages.

\vspace{-2mm}
\subsection{Attribution Analysis of Translationese Bias (RQ2)}
\label{subsec:rq2_spurious_factors}
Due to the scarcity of high-quality native resources~\citep{QIN2025101118,huang2024survey}, multilingual LLMs are typically pre-trained on English-dominated corpora~\citep{kreutzer2022quality,weber2024redpajama} and subsequently adapted using translated or synthetic data~\citep{muennighoff-etal-2023-crosslingual,zhang-etal-2020-improving}. 
Accordingly, we hypothesize that translationese bias stems from spurious correlations induced across these two stages: (i) latent manifold alignment with English, where non-English representations are implicitly aligned to an English-centric latent space during pre-training; and (ii) cross-lingual predictability, where the judge over-relies on probability heuristics that favor the statistical patterns of machine-translated text, potentially amplified by exposure to translated or synthetic data during fine-tuning. However, causally attributing this bias to particular data mixtures remains non-trivial given the opacity and heterogeneity of LLM training pipelines~\citep{lai-etal-2025-survey}.
% Accordingly, we hypothesize that translationese bias arises from the combined effects of these two stages: (i) \emph{latent manifold isomorphism}, where non-English representations are shaped toward an English-centric embedding space during pre-training; and (ii) \emph{confidence-driven bias}, driven by the high predictability of synthetic regularities encountered during fine-tuning. However, causally attributing this bias to specific data mixtures remains non-trivial given the opacity and heterogeneity of contemporary training pipelines.

To address this, we introduce two measurable latent metrics that serve as quantitative proxies for these two factors: 
(i) \emph{Language Alignment Score (LAS)}, defined as the degree to which a representation is geometrically aligned with an English latent manifold: $\mathrm{LAS}(x) = \frac{1}{L} \sum_{l=1}^{L} \cos\big(\mathbf{h}_l(x), \mathbf{c}_{\mathrm{en},l}\big),$
where $x$ is input sequence, $\mathbf{h}_l(x)$ is the layer-$l$ hidden representation of $x$, $\mathbf{c}_{\mathrm{en},l}$ is the English centroid at layer $l$, and $L$ is the total number of layers. 
% Higher LAS indicates stronger anchoring to the English semantic pivot.
% (i) \textbf{English-Anchoring (EA)}, defined as the degree to which multilingual representations are geometrically anchored to the English manifold as a semantic pivot. Formally, given a sequence $x$, we define EA as the mean layer-wise cosine similarity between the hidden states $\mathbf{h}_l$ and a precomputed global English centroid $\mathbf{c}_{\text{en},l}$:
% \begin{equation}
%     \text{EA}(x) = \frac{1}{L} \sum_{l=1}^{L} \cos(\mathbf{h}_l, \mathbf{c}_{\text{en},l}),
% \end{equation}
% where $L$ denotes the total number of layers.
(ii) \emph{Cross-lingual Sequence Surprisal (CSS)}, defined as the length-normalized negative log-likelihood of a target sequence $x$ of $T$ tokens, conditioned on its English translated context $x_{\mathrm{en}}$: $\mathrm{CSS}(x) = -\frac{1}{T} \sum_{t=1}^{T} \log P(x_t \mid x_{\mathrm{en}}, x_{<t}).$
To answer RQ2, we then investigate the extent to which the distributional divergence of these metrics between $x_H$ and $x_M$ correlates with machine win rate: (i) \emph{Cross-lingual Alignment Discrepancy (CAD)}, $\text{CAD} = \text{LAS}(x_{H}) - \text{LAS}(x_M),$ where $\text{CAD} < 0$ implies that $x_M$ exhibits closer alignment to the English latent space than $x_H$ does. (ii) \emph{Sequence Surprisal Ratio (SSR)}: $\text{SSR} = \frac{\text{CSS}(x_{M})}{\text{CSS}(x_{H})},$ where $\text{SSR} < 1$ indicates that $x_M$ is more cross-lingual predictable by the model relative to $x_H$.

As shown in Figure~\ref{fig:csr_cad_subfig}, LLM judges exhibit strong correlations with the introduced latent metrics. 
Specifically, Figure~\ref{fig:cad_binned} reveals a negative association between CAD and the machine win rate, suggesting that the judge favors outputs that align closely with the English manifold. Meanwhile, Figure~\ref{fig:csr_density} demonstrates a pronounced distributional shift in SSR: machine-generated outputs preferred by the judge cluster significantly at lower SSR. This pattern indicates a bias toward sequences with high statistical cross-lingual predictability. The robustness of the observed correlations is further corroborated by the ROC analysis in Figure~\ref{fig:csr_cad_roc}, demonstrating meaningful discriminative power of these features. Collectively, these results ascribe the observed \emph{translationese bias} to two confounding factors: latent manifold alignment with English and cross-lingual predictability.

\begin{figure*}[t]
    \centering
    \includegraphics[width=.9\linewidth]{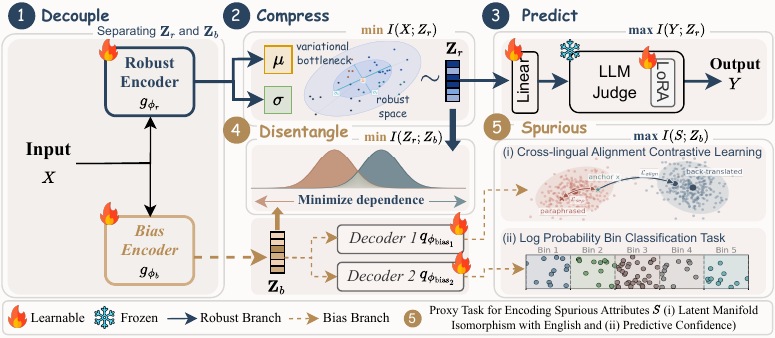}
    \caption{Overview of our \textsc{DIBJudge}, which grounded in Equation~\ref{eq:dib_objective}. (1) employs a robust encoder $g_{\phi_r}$ and a bias encoder $g_{\phi_b}$ to separate the input $X$ into robust representations $\mathbf{Z}_r$ and bias representations $\mathbf{Z}_b$. (2) introduces a variational bottleneck to minimize the mutual information $I(X; Z_r)$. (3) leverages the compressed $\mathbf{Z}_r$ through LLM Judge optimized using LoRA~\citep{hu2022lora} to generate the final output $Y$ by maximizing $I(Y; Z_r)$. (4) ensures feature independence by minimizing the dependence $I(Z_r; Z_b)$ between the robust and bias branches. (5) explicitly captures spurious attributes $S$ within $\mathbf{Z}_b$ by maximizing $I(S; Z_b)$ through two proxy tasks: (i) cross-lingual alignment contrastive learning and (ii) predictive confidence estimation via log-probability bin classification.}
    \label{fig:method_overview}
    \vspace{-3mm}
\end{figure*}

\vspace{-2mm}
\section{Disentangled Information Bottleneck Judge}
\label{sec:method}

% Motivated by the previous analysis, where we identified empirical correlations between LLM judges and spurious factors (e.g., \textit{English Anchoring} and \textit{Distributional Congruence}), we seek a trustworthy LLM judge whose decisions are driven by judge-relevant evidence rather than spurious cues in the noisy input. To this end, we propose the \textbf{D}isentangled \textbf{I}nformation \textbf{B}ottleneck \textbf{Judge} (DIB-Judge), a framework that combines the Information Bottleneck (IB) principle with disentangled representation learning. 

% Building on prior empirical analyses that revealed correlations between LLM-based judgments and spurious factors such as \emph{English Anchoring} and \emph{Distributional Congruence}, we aim to develop a robust LLM-as-a-Judge grounded in semantic evidence rather than superficial artifacts. To this end, we propose the \textbf{Disentangled Information Bottleneck Judge} (\textsc{dib-judge}), a framework that integrates the Information Bottleneck (IB) principle with disentangled representation learning to isolate judge-relevant signals from input noise.

To mitigate the spurious correlations identified in \S~\ref{subsec:rq2_spurious_factors}, we propose the Disentangled Information Bottleneck Judge (\methodname{}), as illustrated in Figure~\ref{fig:method_overview}. 
By explicitly disentangling these spurious factors, \methodname{} learns a compressed representation that retains sufficient, robust features essential for accurate quality assessment.

\vspace{-2mm}
\subsection{Preliminaries}

% \textbf{Mutual Information.}
% Mutual Information quantifies the statistical dependence between two random variables, $A$ and $B$. Given a joint distribution $p(a,b)$ and marginal distributions $p(a)$ and $p(b)$, MI is formally defined as:
% \begin{equation}
% \vspace{-1mm}
%     I(A;B) \triangleq \mathbb{E}_{p(a,b)}\left[\log \frac{p(a,b)}{p(a)p(b)}\right].
%     \label{eq:mutual_information}
% \vspace{-1mm}
% \end{equation}
% This metric measures non-linear correlation: $I(A;B) = 0$ iff $A$ and $B$ are independent ($A \perp B$), whereas larger values indicate a stronger association between the variables.

% \textbf{Information Bottleneck Principle.}
% The Information Bottleneck (IB) principle~\cite{tishby2000information,alemi2017deep} facilitates learning a compacted representation $Z$ that contains information \emph{sufficient} for a target $Y$ while being \emph{minimal} with respect to the input $X$, which is achieved by minimizing the following objective:
% \begin{equation}
%     \mathcal{L}_{\mathrm{IB}} = - I(Y; Z) + \beta I(X; Z),
%     \label{eq:ib_objective}
% \end{equation}
% where $\beta \ge 0$ is a multiplier balancing the trade-off between prediction and compression.
% However, as noted by~\citet{liu2022learning}, compression can inadvertently discard robust semantic evidence in favor of simpler, yet highly predictive, spurious signals. Consequently, relying solely on information compactness cannot guarantee the robustness of judgment.

\textbf{Mutual Information.}
Mutual Information (MI) quantifies the statistical dependence between two random variables, $A$ and $B$. Given the joint distribution $p(a,b)$ and marginals $p(a), p(b)$, MI is formally defined as :
\vspace{-1mm}
\begin{equation}
    I(A;B) \triangleq \mathbb{E}_{p(a,b)} \left[ \log \frac{p(a,b)}{p(a)p(b)} \right].
    \label{eq:mi_def}
\end{equation}
\vspace{-3mm}

\textbf{Information Bottleneck Principle.}
The Information Bottleneck (IB) principle~\citep{tishby2000information, alemi2017deep} seeks a compressed representation $Z$ that is \textit{sufficient} for a target task $Y$ while remaining \textit{minimal} with respect to the input $X$. This is formalised by minimizing the Lagrangian:
% \begin{equation}
%     \mathcal{L}_{\text{IB}} = - I(Y; Z) + \beta I(X; Z),
%     \label{eq:ib_objective}
% \end{equation}
$\mathcal{L}_{\text{IB}} = - I(Y; Z) + \beta I(X; Z),$
where $\beta \ge 0$ governs the trade-off between prediction and compression. However, compression may discard robust semantic features in favor of simpler spurious correlations~\citep{liu2022learning,zhang2024choose}. Thus, relying solely on information compactness cannot guarantee the robustness.

\vspace{-2mm}
\subsection{Disentangled Information Bottleneck Objective}
\label{subsec:dib_obj}
To prevent LLM judges from exploiting spurious shortcuts solely through information compression, we are inspired by the idea of disentangled representation learning~\citep{wang2024disentangled} to refine vanilla IB. Let $S$ be a spurious variable, $Z_r$ be a \emph{relevant} variable encoding features necessary for predicting the target $Y$, and $Z_b$ be a \emph{bias} variable serving as a dedicated ``sink'' to absorb $S$. We formalize the Disentangled Information Bottleneck Objective as:
\begin{equation}
\begin{aligned}
\mathcal{L}_{\mathrm{DIB}}
&=
\underbrace{- I(Y; Z_r)}_{\text{Prediction}}
+
\underbrace{\beta\, I(X; Z_r)}_{\text{Compression}} \\
&\quad
-\underbrace{\gamma\, I(S; Z_b)}_{\text{Bias Capture}}
+
\underbrace{\lambda\, I(Z_r; Z_b)}_{\text{Disentanglement}} .
\end{aligned}
\label{eq:dib_objective}
\end{equation}
The first two terms apply a vanilla IB constraint restricted to the robust channel $Z_r$; the third term makes $Z_b$ informative about spurious attributes $S$; and the final term penalizes dependence between $(Z_r, Z_b)$, encouraging $Z_r$ to exclude spurious correlations that are explicitly routed into $Z_b$.
Directly optimizing this objective is computationally intractable due to the difficulty of estimating mutual information in high-dimensional spaces~\citep{liu2024protecting}. To address this, we derive tractable variational surrogate objectives as follows:

\textbf{Decouple Robust and Bias Representation.} 
Using separate encoders $g_{\phi_r}$ and $g_{\phi_b}$, we decompose the input $X$ into a robust representation $\mathbf{Z}_r = g_{\phi_r}(X) \in \mathbb{R}^{T \times d}$ and a bias representation $\mathbf{Z}_b = g_{\phi_b}(X) \in \mathbb{R}^{T \times d},$ where $T$ denotes the sequence length and $d$ the feature dimension. 
We leverage $\mathbf{Z}_r$ for task prediction while using $\mathbf{Z}_b$ only during training.

\textbf{Compression via Variational Information Constraints.}
% To facilitate the compression objective $I(X;Z_r)$, we follow~\citet{alemi2017deep} to leverage the variational information bottleneck to strictly limit the mutual information between the robust channel and input via variational inference~\citep{alemi2017deep,kingma2019introduction}. 
% From an information-theoretic perspective, we penalize the divergence between the learned posterior distribution and a fixed prior. By enforcing the posterior to approximate an uninformative prior, we impose an upper bound on the mutual information.
% Specifically, we penalize the divergence between the variational posterior and a fixed prior, yielding an upper bound on $I(X;Z_r)$.
To facilitate compression, we leverage the Variational Information Bottleneck~\citep{alemi2017deep}, which imposes an upper bound on $I(X;Z_r)$ via variational inference. 
% We achieve this by penalizing the Kullback-Leibler divergence between the learned variational posterior and a fixed prior. 
% By forcing the posterior to approximate this uninformative prior, we effectively constrain the information flow from the input $X$ to the latent $Z_r$.

\begin{proposition}
\label{prop:vi_capacity}
Let $Z_r$ be a continuous random variable, with variational posterior $q_{\phi}(Z_r|X)$ and fixed prior $p(Z_r)$. Then $I(X; Z_r) \leq \mathbb{E}_{x \sim p(X)} \left[ D_{\mathrm{KL}}(q_{\phi}(Z_r|x) \| p(Z_r)) \right].$
\end{proposition}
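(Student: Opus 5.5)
We prove the bound with the standard variational argument of \citet{alemi2017deep}. The joint distribution is $p(x,z) = p(x)\,q_{\phi}(z|x)$: the encoder plays the role of the true conditional and $Z_r$ is defined by it. The induced marginal is $q_{\phi}(z) = \int p(x)\,q_{\phi}(z|x)\,dx$. By Eq.~\eqref{eq:mi_def},
\begin{equation*}
I(X;Z_r) = \mathbb{E}_{p(x)}\mathbb{E}_{q_{\phi}(z|x)}\left[\log \frac{q_{\phi}(z|x)}{q_{\phi}(z)}\right].
\end{equation*}
In general $q_{\phi}(z)$ is intractable, so the plan is to replace it by the fixed prior $p(Z_r)$ and control the error term.

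The key step is to insert the prior inside the logarithm:
\begin{equation*}
\log \frac{q_{\phi}(z|x)}{q_{\phi}(z)} = \log \frac{q_{\phi}(z|x)}{p(z)} - \log \frac{q_{\phi}(z)}{p(z)}.
\end{equation*}
Then take expectations over $p(x)q_{\phi}(z|x)$.

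For the first term, the inner expectation over $z$ for fixed $x$ is exactly $D_{\mathrm{KL}}(q_{\phi}(Z_r|x)\,\|\,p(Z_r))$. Averaging over $x$ therefore gives the right-hand side of Proposition~\ref{prop:vi_capacity}.

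For the second term, $z$ depends only on its marginal under the joint, so its expectation equals $\mathbb{E}_{q_{\phi}(z)}[\log q_{\phi}(z)/p(z)] = D_{\mathrm{KL}}(q_{\phi}(Z_r)\,\|\,p(Z_r))$. Gibbs' inequality, via Jensen applied to $-\log$, makes this nonnegative. Combining the two terms yields
\begin{equation*}
I(X;Z_r) = \mathbb{E}_{x}\left[D_{\mathrm{KL}}(q_{\phi}(Z_r|x)\|p(Z_r))\right] - D_{\mathrm{KL}}(q_{\phi}(Z_r)\|p(Z_r)) \le \mathbb{E}_{x}\left[D_{\mathrm{KL}}(q_{\phi}(Z_r|x)\|p(Z_r))\right].
\end{equation*}
Equality holds iff the aggregate posterior matches the prior.

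The main obstacle is measure-theoretic bookkeeping rather than the algebra itself.

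\textbf{Integrability.} We need the densities to exist and the expectations to be finite so that splitting the logarithm is legitimate.

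\textbf{Support.} We need $q_{\phi}(z|x)$ to be absolutely continuous with respect to $p(z)$; otherwise the right-hand side is $+\infty$ and the bound is trivial.

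\textbf{Exchange of integrals.} We need Fubini to justify exchanging the integrals when identifying the second term as a KL divergence of marginals.

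We will state these as standing assumptions, for instance Gaussian encoders with a standard normal prior, as used in the VIB implementation. Under those assumptions all quantities are finite and the argument goes through verbatim.
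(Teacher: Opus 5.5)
Your proof is correct and follows the paper's argument essentially step for step. You insert the fixed prior into the log-ratio, identify the first term as the expected KL to the prior, and show via Fubini and Gibbs' inequality that the second term is $D_{\mathrm{KL}}(q_{\phi}(Z_r)\,\|\,p(Z_r)) \ge 0$. Your added remarks on integrability, absolute continuity and the equality condition are sound refinements that the paper leaves implicit.
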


Guided by Proposition~\ref{prop:vi_capacity} (proved in Appendix.~\ref{app:proof_1}), we can constrain $I(X;Z_r)$ via penalizing the KL divergence between the variational posterior and fixed prior. Accordlingly, we adopt a standard Gaussian prior $p(Z_r) = \mathcal{N}(\mathbf{0}, \mathbf{I})$ and parameterize the variational posterior $q_{\phi}(\mathbf{z}_{r, t}|x)$ at each time step $t$ as a multivariate Gaussian $\mathcal{N}(\boldsymbol{\mu}_t, \boldsymbol{\sigma}^2_t)$. 
The resulting compression objective, defined as the average KL divergence over the sequence length $T$ and feature dimension $d$, is derived as follows (details in Appendix.~\ref{app:compress_derivation}):

\vspace{-6mm}
{\small
\begin{equation}
\label{eq:vi_loss}
\begin{aligned}
\mathcal{L}_{\mathrm{compress}}
&= \frac{1}{T}\sum_{t=1}^{T}
D_{\mathrm{KL}}\!\left(
\mathcal{N}\!\left(\boldsymbol{\mu}_{t}, \boldsymbol{\sigma}^{2}_{t}\right)
\,\big\|\,
\mathcal{N}\!\left(\mathbf{0}, \mathbf{I}\right)
\right) \\
&= -\frac{1}{2T}\sum_{t=1}^{T}\sum_{j=1}^{d}
\Bigl(1 + \log \sigma^{2}_{t,j} - \mu^{2}_{t,j} - \sigma^{2}_{t,j}\Bigr).
\end{aligned}
\end{equation}}
\vspace{-4mm}

To allow for backpropagation, we sample the latent representation $\mathbf{z}_{r, t}$ using the reparameterization trick~\citep{NIPS2015_bc731692}: $\mathbf{z}_{r, t} = \boldsymbol{\mu}_t + \boldsymbol{\sigma}_t \odot \boldsymbol{\epsilon}$, where $\boldsymbol{\epsilon} \sim \mathcal{N}(\mathbf{0}, \mathbf{I}).$

\textbf{Variational Mutual Information Maximization.}
\label{sec:variational_maximization}
The objective (Eq.~\ref{eq:dib_objective}) necessitates maximizing mutual information along two disentangled pathways: the task-predictive term $I(Y; Z_r)$ and the bias-capturing term $I(S; Z_b)$. We then maximize a variational lower bound on the mutual information guided by Proposition~\ref{prop:variational_bound} (proved in Appendix~\ref{app:proof_2}). 
% Specifically, we prove the following Proposition in Appendix~\ref{app:proof_2}.
% \begin{proposition}
% \label{prop:variational_bound}
% Let $U$ be a target variable and $V$ be a latent variable. For any variational distribution $q_\theta(U|V)$ parameterized by a decoder $\theta$, the mutual information $I(U; V)$ satisfies $I(U; V) \ge \E_{U, V} \left[ \log q_\theta(U|V) \right] + H(U),$ where $H(U)$ is the constant marginal entropy of the target. 
% Thus, maximizing mutual information is equivalent to minimizing the negative log-likelihood (cross-entropy) of the decoder.
% \end{proposition}
\begin{proposition}
\label{prop:variational_bound}
Let $U$ and $V$ be random variables with joint distribution $p(U,V)$.  
For any variational conditional distribution $q_{\theta}(U \mid V)$, the mutual information satisfies $I(U; V) \ge \mathbb{E}_{(U,V) \sim p(U,V)}\!\left[ \log q_{\theta}(U \mid V) \right] + H(U),$
where $H(U)$ denotes the marginal entropy of $U$.
\end{proposition}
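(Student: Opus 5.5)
The plan is the standard Barber--Agakov argument. I will write the mutual information as $I(U;V) = H(U) - H(U\mid V)$ and bound the conditional entropy term from below using a KL divergence. By the definition in Eq.~\ref{eq:mi_def},
\begin{equation*}
I(U;V) = \mathbb{E}_{p(u,v)}\left[\log \frac{p(u\mid v)}{p(u)}\right] = \mathbb{E}_{p(u,v)}\left[\log p(u\mid v)\right] + H(U),
\end{equation*}
using $p(u,v)=p(u\mid v)p(v)$ and $-\mathbb{E}_{p(u)}[\log p(u)] = H(U)$.

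The next step inserts the variational conditional. I add and subtract $\log q_\theta(u\mid v)$ inside the expectation:
\begin{equation*}
\mathbb{E}_{p(u,v)}\left[\log p(u\mid v)\right] = \mathbb{E}_{p(u,v)}\left[\log q_\theta(u\mid v)\right] + \mathbb{E}_{p(v)}\left[D_{\mathrm{KL}}\big(p(U\mid v)\,\|\,q_\theta(U\mid v)\big)\right].
\end{equation*}
Gibbs' inequality gives nonnegativity of the KL divergence for each $v$. Dropping the KL term therefore yields the claimed bound $I(U;V) \ge \mathbb{E}[\log q_\theta(U\mid V)] + H(U)$. Equality holds iff $q_\theta(\cdot\mid v) = p(\cdot\mid v)$ for $p(v)$-almost every $v$.

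The main obstacle is measure-theoretic bookkeeping rather than the algebra. I need the expectations to be well defined, for instance $H(U)$ finite and $\log q_\theta$ integrable. Otherwise the splitting of the logarithm could produce $\infty-\infty$.

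I would handle this by assuming $q_\theta(\cdot\mid v)$ and $p(\cdot\mid v)$ are mutually absolutely continuous with respect to a common base measure. This covers both the discrete case (targets $Y$, $S$ as labels or bins) and the continuous case. I would also note that if $\mathbb{E}[\log q_\theta] = -\infty$ the bound holds trivially.

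Since $H(U)$ does not depend on $\theta$, the proposition justifies replacing maximization of $I(Y;Z_r)$ and $I(S;Z_b)$ with minimization of decoder cross-entropy.
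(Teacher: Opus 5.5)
Your proposal is correct and follows essentially the same route as the paper: write $I(U;V)=H(U)+\mathbb{E}_{p(u,v)}[\log p(u\mid v)]$, then lower-bound the second term by $\mathbb{E}_{p(u,v)}[\log q_\theta(u\mid v)]$ using nonnegativity of $\mathbb{E}_{p(v)}[D_{\mathrm{KL}}(p(U\mid v)\,\|\,q_\theta(U\mid v))]$. Your remarks on finiteness of $H(U)$, integrability and the equality case are absent from the paper's proof and only make it more careful; mutual absolute continuity is more than you need, since only $p(\cdot\mid v)\ll q_\theta(\cdot\mid v)$ matters and its failure is already covered by your $-\infty$ case.
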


For the robust pathway, we treat the LLM judge $f_{\text{judge}}$ as the variational decoder $q_{\theta}$. We condition the generation on a sequence formed by concatenating the instruction embeddings $\mathbf{E}_{\text{inst}}$ with the sampled robust representation $\mathbf{Z}_r$. The task loss is defined as:

\vspace{-5mm}
{\small
\begin{equation}
\label{eq:loss_task}
\mathcal{L}_{\text{task}} = \E_{X, Y} \left[ - \sum_{t=1}^{|Y|} \log f_{\text{judge}}(Y_t \mid [\mathbf{E}_{\text{inst}}; \mathbf{Z}_r], Y_{<t}) \right].
\end{equation}}
\vspace{-5mm}

We employ lightweight MLP decoders $q_{\psi_{\text{bias}}}$ and minimize the negative log-likelihood of the spurious attribute ${S}$ given the bias representation $\mathbf{Z}_b$ to facilitate the encoding of spurious information into the bias pathway:
\begin{equation}
    \mathcal{L}_{\text{bias}} = \E_{X, S} \left[ - \log q_{\psi_{\text{bias}}}(S\mid \mathbf{Z}_b) \right].
    \label{eq:loss_bias}
\end{equation}
We operationalize the identified spurious factors through two designed proxy tasks. 
First, we address \textit{latent-manifold alignment with English} using a cross-lingual contrastive learning objective. 
Second, we estimate \textit{cross-lingual predictability} via a log-probability bin classification task. 
\textbf{Disentanglement via Cross-Covariance Penalty.}
% While directly minimizing the mutual information $I(Z_r; Z_b)$ is computationally prohibitive, in the high-dimensional feature spaces characteristic of LLMs, representation distributions can be reasonably approximated as Gaussian~\citep{lee2018deep,hron2020infinite}. 
% Although direct minimization of the mutual information $I(Z_r; Z_b)$ is generally intractable, high-dimensional representations in LLMs admit accurate Gaussian approximations~\citep{lee2018deep, hron2020infinite}.
Directly minimizing the mutual information $I(Z_r; Z_b)$ is generally intractable.  
However, in high-dimensional regimes typical of LLMs, representation distributions are often well-approximated by Gaussian statistics~\citep{lee2018deep, hron2020infinite}.
Under this Gaussian assumption, minimizing mutual information reduces to minimizing the cross-covariance between latent variables~\citep{cover1999elements, HYVARINEN2000411}. We formalize this relationship in Proposition~\ref{prop:xcov_bound} (proof provided in Appendix~\ref{app:proof_3}).
% Under this Gaussian assumption, minimizing mutual information reduces to minimizing the cross-covariance between latent variables~\citep{cover1999elements, HYVARINEN2000411}. 
% Specifically, we prove the following proposition in Appendix~\ref{app:proof_3}.

\begin{proposition}
\label{prop:xcov_bound}
Let $Z_r$ and $Z_b$ be jointly Gaussian random vectors with marginal covariance matrices
$\Sigma_r$ and $\Sigma_b$, and cross-covariance $\Sigma_{rb}$.
Define the normalized cross-covariance matrix as
$C = \Sigma_r^{-1/2}\Sigma_{rb}\Sigma_b^{-1/2}$.
Provided the spectral norm $\|C\|_2$ is sufficiently small, the mutual information admits the following second-order expansion:
\[
I(Z_r; Z_b) = \frac{1}{2}\|C\|_F^2 + o\left(\|C\|_F^2\right), \quad \text{as } \|C\|_2 \to 0.
\]
% Let $Z_r,Z_b$ be random vectors with a joint multivariate Gaussian distribution, $\Sigma_r$ ) and ( $\Sigma_b$ ) are the marginal covariance matrices of ($Z_r$) and ($Z_b$), and ( $\Sigma*{rb}$ ) is their cross-covariance matrix. Then $I(Z_r; Z_b) ;\le; \frac{1}{2},\big| \Sigma_r^{-1/2},\Sigma_{rb},\Sigma_b^{-1/2} \big|*F^2$
% The mutual information $I(Z_r; Z_b)$ is upper-bounded by the mutual information of a joint Gaussian distribution with the same covariance structure. Therefore, minimizing the squared Frobenius norm of the cross-covariance matrix, $\|\Sigma_{rb}\|_F^2$, minimizes this Gaussian upper bound.
\end{proposition}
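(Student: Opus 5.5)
The approach is to use the closed-form Gaussian mutual information and diagonalize it through the singular values of $C$.

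For jointly Gaussian $(Z_r,Z_b)$ with nonsingular marginal covariances, the formula for Gaussian differential entropy gives
\[
I(Z_r;Z_b)=\tfrac12\log\frac{\det\Sigma_r\,\det\Sigma_b}{\det\Sigma},\qquad
\Sigma=\begin{pmatrix}\Sigma_r & \Sigma_{rb}\\ \Sigma_{rb}^\top & \Sigma_b\end{pmatrix}.
\]
I would first whiten each block. Left- and right-multiplying $\Sigma$ by $\mathrm{diag}(\Sigma_r^{-1/2},\Sigma_b^{-1/2})$ turns the joint covariance into
\[
\tilde\Sigma=\begin{pmatrix} I & C\\ C^\top & I\end{pmatrix}.
\]
Mutual information is invariant under invertible linear maps applied separately to each variable, so $I(Z_r;Z_b)=-\tfrac12\log\det\tilde\Sigma$. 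The Schur complement then gives $\det\tilde\Sigma=\det(I-C^\top C)$, and therefore
\[
I(Z_r;Z_b)=-\tfrac12\log\det(I-C^\top C)=-\tfrac12\sum_i\log(1-s_i^2),
\]
where the $s_i$ are the singular values of $C$. These satisfy $s_i\le\|C\|_2<1$; note that positive definiteness of $\Sigma$ already forces $\|C\|_2<1$.

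The next step is the expansion. Using $-\log(1-u)=u+R(u)$ with $0\le R(u)\le \frac{u^2}{2(1-u)}$ for $u\in[0,1)$, I get
\[
I=\tfrac12\sum_i s_i^2+\tfrac12\sum_i R(s_i^2).
\]
The first sum is exactly $\tfrac12\|C\|_F^2$.

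The only delicate point is controlling the remainder uniformly in the sense the statement requires, namely relative to $\|C\|_F^2$ as $\|C\|_2\to0$. I would bound
\[
\sum_i R(s_i^2)\le \frac{\max_i s_i^2}{2(1-\|C\|_2^2)}\sum_i s_i^2=\frac{\|C\|_2^2}{2(1-\|C\|_2^2)}\,\|C\|_F^2 .
\]
This is $o(\|C\|_F^2)$ as $\|C\|_2\to0$, with no dependence on the dimension. It also yields the explicit two-sided bound
\[
\tfrac12\|C\|_F^2\le I(Z_r;Z_b)\le \tfrac12\|C\|_F^2\Big(1+\tfrac{\|C\|_2^2}{2(1-\|C\|_2^2)}\Big),
\]
which completes the argument. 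If singular marginal covariances need to be handled, I would restrict to their supports and use pseudo-inverses, but I would treat the nonsingular case as the intended statement.
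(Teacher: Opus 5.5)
Your proposal is correct and follows the paper's proof essentially step for step. Both arguments use the closed-form Gaussian mutual information, reduce it via a Schur complement to $-\tfrac12\log\det(I-CC^\top)$, and then control the tail of a series expansion by a factor of order $\|C\|_2^2/(1-\|C\|_2^2)$ times $\|C\|_F^2$. The differences are cosmetic: you whiten first and argue singular value by singular value with a scalar remainder bound, whereas the paper factors $\Sigma_r^{1/2}$ out of the Schur complement and uses $\mathrm{tr}(A^k)\le\|A\|_2^{k-1}\mathrm{tr}(A)$ for $A=CC^\top$; your route gives a remainder constant smaller by a factor of two.
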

Accordingly, we adopt the cross-covariance penalty as a computationally efficient surrogate for disengtanglement term in Eq.~\ref{eq:dib_objective}.
Given centered mini-batch representations $\bar{\mathbf{Z}}_r, \bar{\mathbf{Z}}_b \in \mathbb{R}^{N\times d}$, we compute the empirical cross-covariance matrix $\hat{\Sigma}_{rb} = \frac{1}{N-1} \bar{\mathbf{Z}}_r^\top \bar{\mathbf{Z}}_b$.
We then minimize the squared Frobenius norm of $\hat{\Sigma}_{rb}$ to suppress correlations:
% \vspace{-1mm}
% \begin{equation}
% \label{eq:loss_xcov}
% \mathcal{L}_{\text{disc}} = ||\Sigma_{rb}||_F^2 = \sum_{i=1}^d \sum_{j=1}^d (\Sigma_{rb})_{ij}^2.
% \end{equation}
% \vspace{-1mm}
% Formally, we compute the cross-covariance matrix over a mini-batch of size $N$. We first center the representations by subtracting their respective batch means, yielding $\bar{\mathbf{Z}}_r\in\mathbb{R}^{N\times d}$ and $\bar{\mathbf{Z}}_b\in\mathbb{R}^{N\times d}$. The cross-covariance matrix $\Sigma_{rb} \in \mathbb{R}^{d \times d}$ is then computed as: $\Sigma_{rb} = \frac{1}{N-1} \bar{\mathbf{Z}}_r^\top \bar{\mathbf{Z}}_b.$ To enforce disentanglement, we minimize the squared Frobenius norm of $\Sigma_{rb}$, formulated as:

\vspace{-1mm}
\begin{equation}
    \label{eq:loss_xcov}
    \mathcal{L}_{\text{disc}} = \|\hat{\Sigma}_{rb}\|_F^2 = \sum_{i=1}^d \sum_{j=1}^d (\hat{\Sigma}_{rb})_{ij}^2.
\end{equation}
\vspace{-1mm}

In practice, feature-wise normalization ensures that $\mathbf{Z}_r$ and $\mathbf{Z}_b$ have approximately unit variance along each dimension~\citep{ba2016layer}.
This objective penalizes second-order dependencies, thereby encouraging statistical independence in the learned representations~\citep{zbontar2021barlow}.

\textbf{Overall Learning Objective.}
We optimize \methodname{} end-to-end by minimizing a weighted sum of tractable objectives derived above. Concretely, the final training objective is
\begin{equation}
\mathcal{L}
=
\mathcal{L}_{\mathrm{task}}
+
\beta\,\mathcal{L}_{\mathrm{compress}}
+
\gamma\,\mathcal{L}_{\mathrm{bias}}
+
\lambda\,\mathcal{L}_{\mathrm{disc}},
\label{eq:final_objective}
\end{equation}
where the weights $\beta,\gamma,\lambda$ control the accuracy--compression--bias-capture--independence trade-off.

\begin{table*}[t]
    \centering
    \caption{
        Performance evaluation on multilingual reward benchmarks. 
        \textbf{Bold} indicates the best performance, and \underline{underlined} indicates the second-best. 
        Statistical significance compared to the best baseline is denoted by $^{\dagger}$ ($p < 0.05$) and $^{\ddagger}$ ($p < 0.01$).
    }
    \label{tab:combined-results}

    % Professional spacing
    \renewcommand{\arraystretch}{1.2} 
    \setlength{\tabcolsep}{6pt}
    \small
    
    % Configure siunitx
    % separate-uncertainty: converts "80 +- 0.2" to "80 \pm 0.2"
    % table-format: reserves space for "88.88" and the uncertainty
    \sisetup{
        separate-uncertainty = true,
        detect-weight = true, 
        mode = text
    }

    % Soft highlight colors
    \definecolor{bestgreen}{RGB}{192, 228, 205}
    \definecolor{secondyellow}{RGB}{227, 237, 189}

    \begin{tabular}{
        @{}
        l
        S[table-format=2.2(2)]
        S[table-format=2.2(2)]
        S[table-format=2.2(2)]
        @{}
    }
        \toprule
        & {\textbf{m-RewardBench}} & {\textbf{RewardBench}} & {\textbf{MM-Eval}} \\
        \textbf{Model} & {(Avg. 23 langs)} & {(English)} & {(Avg. 18 lang)} \\
        \midrule
        
        \multicolumn{4}{l}{\textit{Proprietary Models}} \\
        \hspace{3mm}GPT-4o~\citep{hurst2024gpt} & 85.75 +- 0.42 & 85.96 +- 0.35 & 71.85 +- 0.81 \\
        \hspace{3mm}Gemini-2.5-Flash~\citep{comanici2025gemini} & 88.06 +- 0.49 & 88.83 +- 0.47 & 77.47 +- 0.76 \\
        \midrule
        
        \multicolumn{4}{l}{\textit{General Open Models}} \\
        \hspace{3mm}Qwen2.5-3B-Instruct~\citep{qwen2025qwen25technicalreport} & 66.97 +- 1.12 & 68.99 +- 1.05 & 57.99 +- 1.20 \\
        \hspace{3mm}Qwen2.5-7B-Instruct~\citep{qwen2025qwen25technicalreport} & 77.89 +- 0.89 & 78.59 +- 0.91 & 65.64 +- 0.95 \\
        \hspace{3mm}Qwen3-4B~\citep{yang2025qwen3} & 85.06 +- 0.65 & 87.54 +- 0.55 & 80.85 +- 0.68 \\
        \hspace{3mm}Qwen3-8B~\citep{yang2025qwen3} & 86.12 +- 0.52 & 88.81 +- 0.48 & 82.20 +- 0.60 \\
        \midrule

        \multicolumn{4}{l}{\textit{Multilingual Open Reward Models}} \\
        \hspace{3mm}Nemotron-Multi-49B~\citep{wang-etal-2025-helpsteer3} & {\cellcolor{secondyellow}{\underline{88.83 $\pm$ 0.35}}} & 89.71 +- 0.31 & 76.31 +- 0.55 \\
        \hspace{3mm}M-\textsc{Prometheus} 3B~\citep{pombal2025mprometheus} & 68.45 +- 0.98 & 69.79 +- 0.92 & 64.17 +- 1.10 \\
        \hspace{3mm}M-\textsc{Prometheus} 7B~\citep{pombal2025mprometheus} & 78.03 +- 0.85 & 76.69 +- 0.78 & 69.38 +- 0.88 \\
        \hspace{3mm}mR3-Qwen3-4B~\citep{anugraha2025mr3} & 87.21 +- 0.45 & 89.75 +- 0.38 & 82.55 +- 0.52 \\
        \hspace{3mm}mR3-Qwen3-8B~\citep{anugraha2025mr3} & 88.58 +- 0.41 & {\cellcolor{secondyellow}{\underline{90.10 $\pm$ 0.40}}} & {\cellcolor{secondyellow}{\underline{85.29 $\pm$ 0.45}}} \\
        \hspace{3mm}Think-as-Locals 7B~\citep{zhang2025evaluating} & 84.51 +- 0.60 & 88.79 +- 0.52 & 72.95 +- 0.70 \\
        \midrule
        
        \multicolumn{4}{l}{\textit{\textbf{Ours}}} \\
        % Note: We use { } brackets around the superscript logic {$...$} so siunitx ignores it
        \hspace{3mm}DIBJudge-Qwen3-4B & 89.84 +- 0.28{$^{\dagger}$} & {\cellcolor{bestgreen}{\textbf{90.32 $\pm$ 0.25}}} & 85.16 +- 0.33 \\
        \hspace{3mm}DIBJudge-Qwen3-8B & {\cellcolor{bestgreen}{\textbf{91.37 $\pm$ 0.22}}$^{\ddagger}$} & 91.01 +- 0.20{$^{\dagger}$} & {\cellcolor{bestgreen}{\textbf{87.53 $\pm$ 0.28}}$^{\ddagger}$} \\
        \bottomrule
    \end{tabular}
    \vspace{-3mm}
\end{table*}

\vspace{-2mm}
\section{Experiments}

% We first introduce the multilingual reward modeling benchmark, followed by the specific configurations for downstream tasks and baseline models. The section concludes by presenting a comparison of DIBJudge with these baselines.

% \subsection{Task and Setups}

% \textbf{Datasets.} To evaluate the effectiveness of LLM judges and assess their retention of English capabilities, we utilize RewardBench~\citep{lambert2024rewardbench}. This standard benchmark consists of 3,000 pairwise comparisons across four categories: Chat, Chat Hard, Reasoning, and Safety. To assess multilingual performance, we employ M-RewardBench~\citep{gureja-etal-2025-rewardbench}—a translated adaptation of RewardBench covering 23 languages—and MM-Eval~\citep{son2024mm}. MM-Eval encompasses 18 languages and distinguishes itself by prioritizing native-speaker data over translations. Beyond standard categories, MM-Eval includes Linguistics (e.g., identifying homophones) and Language Hallucination (detecting code-switching). The primary evaluation metric is accuracy. For RewardBench, we report the average of category scores. For the multilingual benchmarks, we calculate the micro-average for each language and report the mean across all languages. Detailed results by category and language appear in Appendix [MASK].
\textbf{Evaluation Benchmarks.} To evaluate the effectiveness of LLM judges across multilingual contexts, we utilize three primary reward modeling benchmarks~\citep{lambert2024rewardbench,son2024mm,gureja-etal-2025-rewardbench} selected to ensure a balanced consideration of the following aspects: a) reasoning and safety alignment across diverse conversational contexts, b) performance across 23 distinct languages, c) the distinction between translated content and native-speaker data. Our primary evaluation metric is accuracy, reported as the category average and the mean of language-specific micro-averages. More details are provided in Appendix~\ref{app:eval_bench_details}.

\textbf{Training settings.}
We adopt the same training corpus as mR3~\citep{anugraha2025mr3} and fine-tune using LoRA~\citep{hu2022lora}. All experiments are optimized with the Adam optimizer~\citep{kingma2014adam} using a learning rate of $1\times10^{-4}$ and a maximum sequence length of $16384$. Further implementation details are provided in Appendix~\ref{app:training_settings}.

% \subsection{Baselines}
% \textbf{Baselines.} We evaluate DIBJudge against two primary baseline categories, selected to represent both general-purpose capabilities and state-of-the-art reward modeling.
% We compare against proprietary models, including GPT-4o and Gemini-2.5-Flash, to benchmark against closed-source upper bounds. In the open-source domain, we evaluate the Qwen2.5 and Qwen3 families (Instruct variants). As Qwen3 serves as the backbone for our method, these baselines allow us to quantify the specific gains attributable to our training recipe rather than the base model capabilities alone.
% We also compare against leading open-source multilingual reward models and judges. This includes the heavy-weight Nemotron-Multilingual-49B, the M-Prometheus family (3B and 7B), and mR3 (based on Qwen3). We also include Think-as-Locals, reporting the maximum score across language-specific variants to ensure a rigorous upper-bound comparison. 
\textbf{Baselines.}
We evaluate \textsc{DIBJudge} against proprietary general-purpose models (GPT-4o~\citep{hurst2024gpt}, Gemini-2.5-Flash~\citep{comanici2025gemini}) and open-source general-purpose LLMs (Qwen2.5/3~\citep{qwen2025qwen25technicalreport,yang2025qwen3}). Since Qwen3 is our backbone, these comparisons isolate gains from our training recipe beyond base model capacity. We additionally benchmark multilingual reward models/judges, including Nemotron-Multilingual-49B~\citep{wang-etal-2025-helpsteer3}, M-Prometheus (3B/7B)~\citep{pombal2025mprometheus}, mR3~\citep{anugraha2025mr3}, and Think-as-Locals (7B)~\citep{zhang2025evaluating}.

% \subsection{Main Results}
% \textbf{Main Results.}
% Table~\ref{tab:combined-results} summarizes the mean accuracy and standard deviation across selected benchmarks over three independent runs, with statistical significance determined via pairwise $t$-tests. On m-RewardBench, \textsc{DIBJudge}-Qwen3-8B establishes a new state-of-the-art for open-weights models (\SI{91.37}{\percent}). It significantly outperforms the substantially larger Nemotron-Multilingual-49B (\SI{88.83}{\percent}, $p < 0.01$) and exceeds its backbone-matched baseline, mR3-Qwen3-8B (\SI{88.58}{\percent}), validating the efficacy of the proposed method. Regarding generalizability, \textsc{DIBJudge}-8B achieves \SI{91.01}{\percent} on the English-centric RewardBench, statistically surpassing the previous best result (mR3, \SI{90.10}{\percent}, $p < 0.05$). This suggests that our approach enhances multilingual performance without compromising general reward modeling capabilities. Furthermore, \textsc{DIBJudge} consistently outperforms proprietary systems. These results underscore the efficiency of specialized, smaller-scale models relative to large, general-purpose systems for targeted evaluation tasks.
\paragraph{Main Results.}
Table~\ref{tab:combined-results} reports the mean accuracy and standard deviation across benchmarks over three independent runs, with statistical significance assessed using pairwise $t$-tests. 
On m-RewardBench, \textsc{DIBJudge}-Qwen3-8B establishes a new SOTA among open-weight models, significantly outperforming both its backbone-matched counterpart and a substantially larger multilingual baseline. These results confirm the effectiveness of the proposed approach. 
In terms of generalization, \textsc{DIBJudge}-Qwen3-8B achieves superior performance on the English-centric RewardBench, statistically surpassing prior leading methods. This indicates that the proposed method improves multilingual reward modeling without degrading performance on monolingual benchmarks. Detailed results are provided in Appendix~\ref{app:comprehensive_results}.
% Moreover, \textsc{DIBJudge} consistently outperforms proprietary systems, highlighting the efficiency of specialized, smaller-scale models compared to large, general-purpose alternatives for targeted evaluation tasks.

\vspace{-2mm}
\section{Analysis}
\label{sec:analysis}

To substantiate the theoretical claims proposed in this work, we conduct a series of targeted experiments designed to rigorously validate the efficacy and internal mechanics of \textsc{DIBJudge}. Our analysis focuses on verifying that the proposed information disentanglement objective translates into tangible performance gains and interpretable latent structures. We organize this empirical investigation around five core research questions:
(i) \textbf{RQ1 (Bias Mitigation):} To what extent does \textsc{DIBJudge} effectively mitigate translationese bias across languages with varying resource availability?
(ii) \textbf{RQ2 (Utility Trade-off):} How does the information bottleneck constraint shape the Pareto Frontier between bias mitigation and downstream task utility?
(iii) \textbf{RQ3 (Disentanglement):} Do the learned latent representations geometrically disentangle semantic content from translationese artifacts, as theoretically hypothesized?
(iv) \textbf{RQ4 (Generalization):} Does the model exhibit robustness against unseen bias types (e.g., length bias) that were not explicitly included in the spurious proxy task?
(v) \textbf{RQ5 (Ablation Study):} How do the distinct components of the DIB objective (Eq.~\ref{eq:dib_objective}) and spurious proxy task contribute to bias mitigation and reward modeling utility?
Additional analyses are deferred to the appendix, including studies of proxy-task design (Appx.~\ref{sec:ablation_analysis_proxy}), sensitivity to CAD and SSR (Appx.~\ref{sec:cad_sensitivity},~\ref{sec:bias_spectrum}), comparisons with alternative compression and disentanglement mechanisms (Appx.~\ref{sec:ablation_bottleneck},~\ref{sec:ablation_disentangle_mechanisms}), and linear probing to assess information leakage (Appx.~\ref{sec:linear_probing}).

% \begin{itemize}[leftmargin=*, nosep]
%     \item \textbf{RQ1 (Bias Mitigation):} To what extent does \textsc{DIBJudge} effectively mitigate translationese bias across languages with varying resource availability?
%     \item \textbf{RQ2 (Utility Trade-off):} How does the information bottleneck constraint shape the Pareto frontier between bias mitigation and downstream task utility?
%     \item \textbf{RQ3 (Disentanglement):} Do the learned latent representations geometrically disentangle semantic content from translationese artifacts, as theoretically hypothesized?
%     \item \textbf{RQ4 (Generalization):} Does the model exhibit robustness against unseen bias types (e.g., length bias) that were not explicitly included in the spurious proxy task?
%     \item \textbf{RQ5 (Ablation Study):} How do the distinct components of the Disentangled Information Bottleneck objective contribute to bias mitigation and reward modeling utility?
% \end{itemize}

% \subsection{RQ1: Efficacy in Multilingual Bias Mitigation}
% \label{sec:bias_validation}
\textbf{RQ1: Efficacy in Translationese Bias Mitigation.}
% To address \textbf{RQ1}, we expand the preliminary evaluation from \S\ref{sec:preliminary_analysis} to a comprehensive suite of multilingual domains and datasets.
% \textbf{Benchmarks and Setup.} 
We extend the preliminary bias evaluation (\S~\ref{sec:preliminary_analysis}) to a broader suite of domains and datasets.
We evaluate performance on three diverse benchmarks: \textsc{Belebele} (machine reading comprehension)~\citep{bandarkar-etal-2024-belebele}, \textsc{Aya}~\citep{singh-etal-2024-aya} (open-ended instruction following), and \textsc{XL-Sum}~\citep{hasan-etal-2021-xl} (summarization). This selection allows us to assess translationese bias across constrained formats and realistic, open-ended interactions. Human-authored references are used as ground-truth targets, while negative samples (rejected responses) are generated via back-translation as described in \S~\ref{sec:preliminary_analysis}. 
To investigate the impact of data scarcity, we stratify languages into High-, Mid-, and Low-Resource tiers ($n=10$ languages per tier). We benchmark \textsc{DIBJudge} against three baselines: the Base model, Vanilla SFT, and a Vanilla IB variant. We quantify efficacy using the \textit{Bias Severity} metric ($\mathcal{S}_{\text{bias}}$) defined in Equation~\ref{eq:bias_metric}. More details in Appendix.~\ref{apdx:bias_benchmark_construction}

\begin{figure}[t]
    \centering
    \includegraphics[width=1\linewidth]{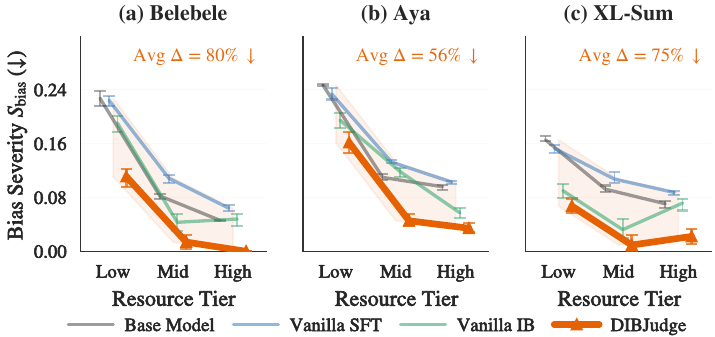}
    \caption{\textbf{Bias severity across resource tiers.}
    $\mathcal{S}_{\text{bias}}$ (lower is better) on
    \textsc{Belebele}, \textsc{Aya}, and \textsc{XL-Sum}.
    \textsc{DIBJudge} reduces bias across all tiers, with average reductions of
    $80\%$, $56\%$, and $75\%$, and the strongest improvements in Low-Resource settings.
    Error bars show std over 3 runs; Avg $\Delta$ is relative to \textit{Vanilla SFT}.
    }
    \label{fig:dibjudge-bias-severity-tiers}
    \vspace{-7mm}
\end{figure}

% \textbf{Results.} 
% Figure~\ref{fig:dibjudge-bias-severity-tiers} demonstrates that \textsc{DIBJudge} effectively mitigates translationese bias across diverse resource settings. On \textsc{Belebele}, the method achieves an $80\%$ average reduction in bias severity, bringing $\mathcal{S}_{\text{bias}}$ to near-zero for the Mid- and High-Resource tiers. Similarly, for the generative tasks \textsc{Aya} and \textsc{XL-Sum}, we observe reductions of $56\%$ and $75\%$, respectively. Notably, \textsc{DIBJudge} significantly diminishes the disparity between resource tiers. While Vanilla SFT exhibits marked bias in the Low-Resource tier (e.g., $\mathcal{S}_{\text{bias}} \approx 0.24$ on \textsc{Aya}), \textsc{DIBJudge} substantially suppresses these values. These findings confirm that our method specifically targets the bias amplification disproportionately affecting under-represented languages, rather than merely enhancing general instruction-following capabilities.

Figure~\ref{fig:dibjudge-bias-severity-tiers} demonstrates the efficacy of \textsc{DIBJudge} in mitigating translationese bias across diverse language resource levels. 
On the benchmark \textsc{Belebele}, DIBJudge achieves a drastic reduction in bias severity, approaching near-zero levels across the mid- and high-resource tiers. 
This trend extends to generative tasks such as \textsc{Aya} and \textsc{XL-Sum}, where we observe consistent bias suppression. 
Crucially, \textsc{DIBJudge} significantly reduces disparity across resource tiers; whereas vanilla SFT retains marked bias in low-resource settings, our approach effectively dampens these spurious correlations. 
These findings confirm that \textsc{DIBJudge} targets the bias amplification that disproportionately affects underrepresented languages, rather than merely enhancing general instruction-following capabilities.

\begin{figure}[htbp]
    \vspace{-1mm}
    \centering
    \includegraphics[width=3.0in, height=1.65in]{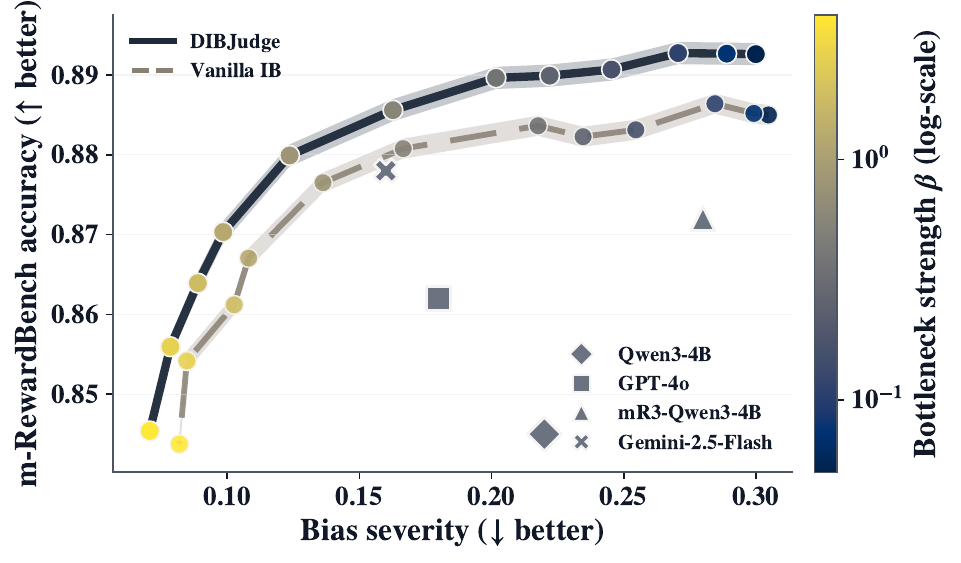}
    % \caption{\textbf{Pareto Frontier of Robustness vs. Utility.} The trade-off between Bias Severity ($\downarrow$ better) and m-RewardBench Accuracy ($\uparrow$ better). The color gradient indicates the bottleneck strength $\beta$ (log-scale). The solid curve represents \textsc{DIBJudge}, which consistently outperforms the \textsc{Vanilla IB} baseline (dashed line). Moreover, \textsc{DIBJudge} dominates state-of-the-art models (e.g., GPT-4o) by achieving a superior balance of low bias and high accuracy.}
    \caption{\textbf{Bias--utility Pareto Frontier.}
Trade-off between Bias Severity ($\downarrow$; x-axis) and m-RewardBench accuracy ($\uparrow$; y-axis).
Each point corresponds to a bottleneck strength $\beta$ (log-scaled, color-coded).
The resulting Pareto frontiers are traced by \textsc{DIBJudge} (solid) and the \textsc{Vanilla IB} baseline (dashed).
\textsc{DIBJudge} consistently achieves higher accuracy at comparable bias levels across $\beta$, yielding a uniformly superior bias--utility trade-off.
Markers indicate representative SOTA models, which \textsc{DIBJudge} outperforms in terms of lower bias and higher accuracy.
}
    \label{fig:pareto_frontier}
    \vspace{-1mm}
\end{figure}

% \subsection{RQ2: The Robustness-Utility Trade-off}
% \label{sec:pareto_frontier}

% To address \textbf{RQ2}, we investigate the tension between robustness and utility by modulating the information bottleneck regularization coefficient, $\beta$. Specifically, we aim to characterize the Pareto frontier of this trade-off. We posit that increasing $\beta$ enforces tighter constraints that filter spurious correlations; while this enhances robustness, it may compromise the semantic granularity required for high-utility reward modeling.

% Figure~\ref{fig:pareto_frontier} visualizes this trade-off, plotting Bias Severity (robustness) against m-RewardBench Accuracy (utility). The results demonstrate that \textsc{DIBJudge} establishes a superior Pareto frontier compared to the \textsc{Vanilla IB} baseline, indicating that our specific discretization mechanism more effectively disentangles robust features from noise. Furthermore, the \textsc{DIBJudge} frontier strictly dominates strong static baselines, including GPT-4o, Gemini-2.5-Flash, and mR3-Qwen3-4B. Notably, for any fixed level of bias severity exhibited by these baselines, our method consistently achieves higher reward modeling accuracy. This confirms that discrete information bottlenecks can successfully mitigate bias without the severe utility degradation observed in standard baselines.
% \subsection{RQ2: The Robustness-Utility Trade-off}
% \label{sec:pareto_frontier}
\textbf{RQ2: The Robustness-Utility Trade-off.}
We investigate the tension between robustness and utility by modulating the coefficient $\beta$ of the compression term in Equation~\ref{eq:dib_objective}. Specifically, we aim to characterize the Pareto Frontier of this trade-off.
Figure~\ref{fig:pareto_frontier} illustrates this dynamic by plotting bias severity ($\mathcal{S}_{\text{bias}}$) against m-RewardBench accuracy. \textsc{DIBJudge} achieves a consistently better Pareto frontier than the \textsc{Vanilla IB} baseline, indicating that it learns a more compact and robust representation without discarding key semantic features.
Furthermore, \textsc{DIBJudge} strictly dominates its base model and strong proprietary baselines (e.g., GPT-4o, Gemini-2.5-Flash), consistently achieving higher accuracy across all fixed levels of bias severity. These findings confirm that the proposed method mitigates translationese bias without substantial utility degradation.

\textbf{RQ3: Disentanglement of Latent Representations.}
% \subsection{RQ3: Disentanglement of Latent Representations}
% \label{sec:visualization}
% We visualize the geometric structure of the learned factorized representations using t-SNE~\citep{tsne}. We extract bias representations $\mathbf{Z}_b$ and robust representations $\mathbf{Z}_r$ from the held-out bias evaluation set containing both Human-written and Machine-translated texts.
% As shown in Figure~\ref{fig:tsne_separation}, the latent spaces exhibit divergent structures. The bias space (Fig.~\ref{fig:tsne_bias}) displays distinct clustering based on text origin, confirming that $\mathbf{Z}_b$ captures translationese artifacts. Conversely, the robust space (Fig.~\ref{fig:tsne_robust}) shows a high degree of overlap between domains. This indistinguishability provides visual evidence that $\mathbf{Z}_r$ has achieved invariance to translationese features while retaining the underlying semantic content necessary for the task.
We visualize the geometry of the learned representations using t-SNE~\citep{tsne}, extracting bias ($\mathbf{Z}_b$) and robust ($\mathbf{Z}_r$) features from a held-out evaluation set comprising human and machine-translated texts. As shown in Figure~\ref{fig:tsne_separation}, the latent spaces exhibit divergent topologies. The bias space (Fig.~\ref{fig:tsne_bias}) forms distinct clusters based on text origin, confirming that $\mathbf{Z}_b$ encodes translationese artifacts. Conversely, the robust space (Fig.~\ref{fig:tsne_robust}) demonstrates substantial domain overlap. This phenomenon demonstrates that $\mathbf{Z}_r$ achieves invariance to translationese artifacts, effectively disentangling them from the underlying semantic content.

\begin{figure}[t]
    \centering
    \begin{subfigure}[b]{0.48\linewidth}
        \centering
        \includegraphics[width=\linewidth]{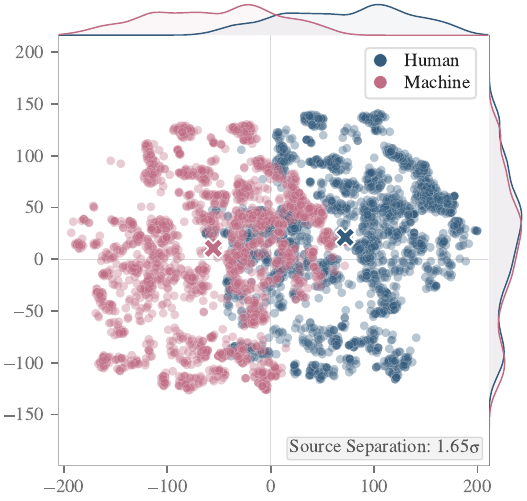}
        % \framebox[\linewidth]{\rule{0pt}{100pt} \textbf{Clustered by Origin}}
        \caption{Bias Representations ($\mathbf{Z}_b$)}
        \label{fig:tsne_bias}
    \end{subfigure}
    \hfill
    \begin{subfigure}[b]{0.48\linewidth}
        \centering
        \includegraphics[width=\linewidth]{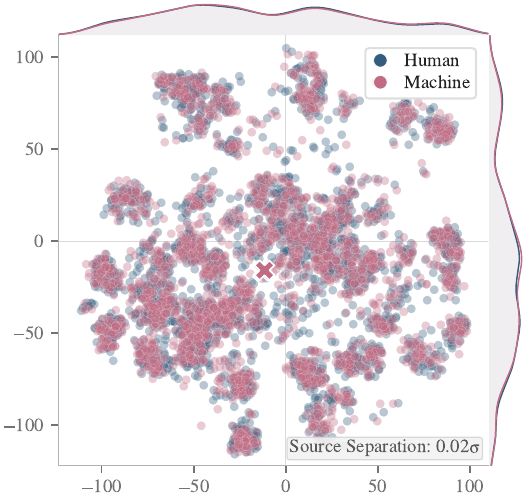}
        % \framebox[\linewidth]{\rule{0pt}{100pt} \textbf{Homogeneous Mixing}}
        \caption{Robust Representations ($\mathbf{Z}_r$)}
        \label{fig:tsne_robust}
    \end{subfigure}
    \caption{\textbf{Visualization of Latent Representation Disentanglement.} t-SNE projections of embeddings for Human (Blue) vs. Machine (Red) text. \textbf{(a)} $\mathbf{Z}_b$ clearly separates domains. \textbf{(b)} $\mathbf{Z}_r$ shows a mixed distribution, corroborating domain invariance.}
    \label{fig:tsne_separation}
    \vspace{-1mm}
\end{figure}

\textbf{RQ4: Zero-Shot Generalization to Unseen Biases.}
We evaluate the generalization capability of \textsc{DIBJudge} against biases not encountered during training—specifically length~\citep{saito2023verbosity} and self-preference~\citep{wataoka2024selfpreference}—using a held-out subset of \textit{Skywork-Reward-Preference-80K}~\citep{liu2024skyworkrewardbagtricksreward}. Length bias is quantified via the Spearman rank correlation ($\rho$) between response length and predicted ratings, while self-preference is evaluated using bias severity ($\mathcal{S_{\text{bias}}}$) as defined in Eq.~\ref{eq:bias_metric}, specifically favoring the model's own generations. As detailed in Table~\ref{tab:generalization_biases}, \textsc{DIBJudge} demonstrates superior robustness compared to vanilla SFT and IB baselines. Our method significantly reduces the correlation between response length and response bias and minimizes self-preference bias. These results suggest that \textsc{DIBJudge} successfully learns to filter superficial heuristics (e.g., verbosity) rather than merely memorizing specific artifacts such as translationese.

\begin{table}[t]
\centering
\caption{\textbf{Zero-Shot Generalization to Unseen Biases.} We evaluate performance on a held-out subset containing biases not encountered during training (Out-of-Distribution). \textsc{DIBJudge} achieves the lowest bias scores across both in-distribution (Translationese) and unseen heuristics (Length, Self-Preference).}
\label{tab:generalization_biases}

\small % Changed from scriptsize for better readability
\setlength{\tabcolsep}{2pt} % Adjusted for balance
\renewcommand{\arraystretch}{1.0} % Slight increase for breathing room

\definecolor{gStrong}{RGB}{198, 228, 201}

\begin{tabular}{l c|c c}
\toprule
& \textbf{ID} & \multicolumn{2}{c}{\textbf{OOD (Unseen Biases)}} \\
\cmidrule(lr){2-2}\cmidrule(lr){3-4}
\textbf{Method}
& \textit{Trans.} $\mathcal{S}_{\text{bias}}$ $\downarrow$
& \textit{Length} $\rho$ $\downarrow$
& \textit{Self-Pref.} $\mathcal{S}_{\text{bias}}$ $\downarrow$ \\
\midrule
Vanilla SFT & 0.247 & 0.553 & 0.314 \\
Vanilla IB  & 0.168 & 0.482 & 0.276 \\
\midrule
\textsc{\textbf{DIBJudge}}
& \cellcolor{gStrong}\textbf{0.083}
& \cellcolor{gStrong}\textbf{0.314}
& \cellcolor{gStrong}\textbf{0.219} \\
\bottomrule
\end{tabular}%
\vspace{-3mm}
\end{table}

\textbf{RQ5: Impact of DIB Objective Components and Spurious Proxy Tasks.}
To assess the individual contributions of the components in the DIB objective (Eq.~\ref{eq:dib_objective}), we evaluate various combinations of the compression, bias-capture, and disentanglement objectives. As shown in Table~\ref{tab:dib_ablation}, isolated objectives yield suboptimal results. While the disentanglement term is particularly effective at reducing the bias score relative to the baseline, it remains insufficient to maximize predictive accuracy. The highest performance across both metrics is achieved by integrating all three terms, suggesting that the synergy between information-bottleneck-driven compression and latent-space disentanglement is critical for balancing fairness and model utility.
Furthermore, we investigate the impact of the proposed spurious proxy tasks: Cross-Lingual Alignment (CLA) and Log-Probability Bin Classification (LPBC). Table~\ref{tab:ablation_tasks} illustrates that omitting these tasks leads to a significant degradation in bias mitigation. The substantial improvement in $\mathcal{S}_{\text{bias}}$ when both tasks are employed confirms their effectiveness in capturing spurious correlations related to translationese. Detailed proxy task ablation studies are presented in Appendix~\ref{sec:ablation_analysis_proxy}.

\begin{table}[t]
    \centering
    \caption{\textbf{Ablation study of the DIB objectives.} We report the bias score ($\mathcal{S}_{\text{bias}}$, lower is better) and Accuracy (higher is better). The combination of all terms achieves the best trade-off.}
    \label{tab:dib_ablation}
    
    % Colors for heatmap-style highlighting
    \definecolor{gStrong}{RGB}{198, 228, 201}
    \definecolor{gMid}{RGB}{225, 240, 200}
    \definecolor{yMid}{RGB}{250, 244, 187}
    
    \setlength{\tabcolsep}{4pt} % Increased slightly for readability
    \renewcommand{\arraystretch}{0.8} % Improved vertical spacing
    \footnotesize % 'tiny' is usually too small for main text tables; 'small' or 'footnotesize' is standard
    
    \resizebox{0.85\columnwidth}{!}{%
    \begin{tabular}{ccc|cc}
        \toprule
        \multicolumn{3}{c|}{\textbf{Objectives}} & \multicolumn{2}{c}{\textbf{Metrics}} \\
        \cmidrule(lr){1-3} \cmidrule(lr){4-5}
        Compression & Bias & Disentangle & $\mathcal{S}_{\text{bias}}(\downarrow)$ & Acc. ($\uparrow$) \\
        \midrule
        \checkmark & & & 0.124 & 85.25 \\
         & \checkmark & & 0.150 & 86.60 \\
         & & \checkmark & 0.053 & 85.85 \\
        \midrule
        \checkmark & \checkmark & & 0.091 & 88.20 \\
        \checkmark & & \checkmark & \cellcolor{yMid}0.039 & \cellcolor{yMid}88.55 \\
         & \checkmark & \checkmark & \cellcolor{gMid}0.035 & \cellcolor{gMid}89.10 \\
        \midrule
        \checkmark & \checkmark & \checkmark & \cellcolor{gStrong}\textbf{0.031} & \cellcolor{gStrong}\textbf{89.85} \\
        \bottomrule
    \end{tabular}%
    }
    \vspace{-1mm}
\end{table}

\begin{table}[t]
    \centering
    \caption{\textbf{Ablation study of proxy tasks.} CLA indicates the cross-lingual alignment proxy task, and LPBC stands for log-probability bin classification proxy task.}
    \label{tab:ablation_tasks}

    \definecolor{gStrong}{RGB}{198, 228, 201}

    \setlength{\tabcolsep}{5pt}
    \renewcommand{\arraystretch}{0.9}
    \footnotesize

    \begin{tabular}{c|cccc}
        \toprule
        \textbf{Configuration} 
        & None 
        & CLA 
        & LPBC 
        & CLA + LPBC \\
        \midrule
        $\mathcal{S}_{\text{bias}}(\downarrow)$ 
        & 0.421 
        & 0.312 
        & 0.279 
        & \cellcolor{gStrong}\textbf{0.147} \\
        Acc. ($\uparrow$) 
        & 87.12 
        & 87.86 
        & 88.43 
        & \cellcolor{gStrong}\textbf{89.18} \\
        \bottomrule
    \end{tabular}
    \vspace{-3mm}
\end{table}

\vspace{-1mm}
\section{Related Work}

\textbf{LLM-as-a-Judge.} 
The \textit{LLM-as-a-Judge} paradigm marks a fundamental shift from traditional $n$-gram (e.g., BLEU~\citep{papineni-etal-2002-bleu}, ROUGE~\citep{lin-2004-rouge}) and embedding-based metrics (e.g., BERTScore~\citep{zhangbertscore,rei-etal-2020-comet}) toward generative evaluation~\citep{gu2024survey,li-etal-2025-generation}. While early adoption relied on proprietary models like GPT-4 due to their high correlation with human judgment~\citep{liu-etal-2023-g,zheng2023judging}, concerns regarding cost and transparency have catalyzed a transition to open-weight evaluators~\citep{wang2024pandalm,wang2024self,wang-etal-2024-interpretable,kim2024prometheus,kim-etal-2024-prometheus}. Recently, this paradigm has further evolved from direct generative to incorporating explicit reasoning steps to enhance reliability~\citep{chen2025judgelrm,guo2025reward,chen2025rm}. However, despite these advancements, LLM judges remain susceptible to systematic biases~\citep{ye2025justice,wang-etal-2024-large-language-models-fair,zheng2024large}, such as position bias~\citep{shi-etal-2025-judging,ko-etal-2020-look}, verbosity bias~\citep{saito2023verbosity}, and self- preference bias~\citep{wang2024self}. In contrast to prior work that primarily studies bias in English-centric settings, we investigate \emph{translationese bias} in multilingual contexts and analyze the spurious correlations underlying it.

\textbf{Multilingual Judges.} Compared to the English context, multilingual LLM-as-a-Judge remains significantly underexplored. Initial efforts to bridge this gap, such as Hercule~\citep{doddapaneni-etal-2025-cross} and M-Prometheus~\citep{pombal2025mprometheus}, rely heavily on fine-tuning with translated or synthetic instruction sets. More recently, approaches like mR3~\citep{anugraha2025mr3} and Think-as-Locals~\citep{zhang2025evaluating} have advanced the field by integrating reasoning capabilities, employing Chain-of-Thought (CoT)~\citep{wei2022chain} distillation and reinforcement learning to enhance multilingual reward modeling. However, despite these achieving promising results, the robustness of these evaluators remains unexamined. Crucially, existing frameworks fail to account for the systematic artifacts introduced by translation-based training data. To address this reliability gap, our work provides the first dedicated mitigation of \textit{translationese bias}, resolving specific failures in cross-lingual evaluation that prior methodologies overlook.
% \textbf{Multilingual Judges.} The application of the LLM-as-a-Judge paradigm in \textbf{multilingual settings} remains underexplored, with the vast majority of evaluations relying exclusively on English data~\citep{emirtekin2025large}. Recent efforts to bridge this gap include models like \textit{Hercule} \citep{doddapaneni-etal-2025-cross} and \textit{M-Prometheus} \citep{pombal2025mprometheus}, which are fine-tuned on translated datasets, \textit{HelpSteer3}~\citep{wang-etal-2025-helpsteer3}, which improves multilingual reward modeling by imposing high quality multilingual preference data. Similarly, the mR3 series~\citep{anugraha2025mr3} introduces reward reasoning by distilling chain-of-thought feedback from advanced LLMs to construct large-scale multilingual preference datasets. Think-as-Locals~\citep{zhang2025evaluating} leverage RLVR to finetune multilingual reward reasoning model. However, despite these advancements, no prior work has systematically investigated the biases inherent to these multilingual frameworks. Existing judges often depend on translated training data and coarse-grained benchmarks, overlooking the nuances of cross-lingual fairness. In contrast, our work addresses this reliability gap by specifically investigating translationese bias, with a distinct emphasis on low-resource languages.

\textbf{Information Bottleneck in LLMs.}
Originally formulated to extract minimal sufficient statistics~\citep{tishby2000information}, the Information Bottleneck (IB) principle has recently emerged as a vital framework for analyzing and optimizing LLMs, spanning diverse objectives including enhancing interpretability by mapping hidden states to human-readable concepts~\citep{sun2025concept,li-etal-2023-explanation}, optimizing CoT reasoning paths to be invariant to prompt nuances~\citep{lei2025revisiting}, and compressing contexts in Retrieval-Augmented Generation to filter noise~\citep{zhu-etal-2024-information}. In safety domains, methods such as IBProtector~\citep{liu2024protecting} leverage IB to strip adversarial triggers. Diverging from these approaches, we present the first application of a disentangled IB designed to debias LLM judges.
\section{Conclusion}

In this study, we systematically investigated translationese bias in multilingual LLM-as-a-Judge frameworks and identified key spurious factors that undermine reliable evaluation, particularly in low-resource languages. Guided by these insights, we proposed \textsc{DIBJUDGE}, a disentangled information bottleneck–based fine-tuning framework that separates judgment-critical semantics from spurious translationese attributes. Extensive experiments across multilingual reward modeling benchmarks and dedicated bias evaluations demonstrate that \textsc{DIBJUDGE} substantially mitigates translationese bias while maintaining strong utility.
% In future work, we plan to extend this framework to broader bias types and explore more lightweight training and inference strategies to further improve scalability and practical deployment.

\section*{Impact Statement}

This paper presents work whose goal is to advance the field of machine learning, with a focus on improving the robustness of multilingual evaluation using large language models. The methods proposed in this work are intended for model evaluation and benchmarking rather than direct user-facing applications. While improved evaluation may have downstream benefits for the development of more reliable and inclusive multilingual systems, we do not foresee significant or immediate negative societal impacts arising from this work.

\bibliography{example_paper}

@article{gu2024survey,
  title={A survey on llm-as-a-judge},
  author={Gu, Jiawei and Jiang, Xuhui and Shi, Zhichao and Tan, Hexiang and Zhai, Xuehao and Xu, Chengjin and Li, Wei and Shen, Yinghan and Ma, Shengjie and Liu, Honghao and others},
  journal={The Innovation},
  year={2024},
  publisher={Elsevier}
}

@inproceedings{li-etal-2025-generation,
    title = "From Generation to Judgment: Opportunities and Challenges of {LLM}-as-a-judge",
    author = "Li, Dawei  and
      Jiang, Bohan  and
      Huang, Liangjie  and
      Beigi, Alimohammad  and
      Zhao, Chengshuai  and
      Tan, Zhen  and
      Bhattacharjee, Amrita  and
      Jiang, Yuxuan  and
      Chen, Canyu  and
      Wu, Tianhao  and
      Shu, Kai  and
      Cheng, Lu  and
      Liu, Huan",
    editor = "Christodoulopoulos, Christos  and
      Chakraborty, Tanmoy  and
      Rose, Carolyn  and
      Peng, Violet",
    booktitle = "Proceedings of the 2025 Conference on Empirical Methods in Natural Language Processing",
    month = nov,
    year = "2025",
    address = "Suzhou, China",
    publisher = "Association for Computational Linguistics",
    doi = "10.18653/v1/2025.emnlp-main.138",
    pages = "2757--2791",
    ISBN = "979-8-89176-332-6"
}

@article{son2024mm,
  title={MM-Eval: A Multilingual Meta-Evaluation Benchmark for LLM-as-a-Judge and Reward Models},
  author={Son, Guijin and Yoon, Dongkeun and Suk, Juyoung and Aula-Blasco, Javier and Aslan, Mano and Kim, Vu Trong and Islam, Shayekh Bin and Prats-Cristi{\`a}, Jaume and Tormo-Ba{\~n}uelos, Luc{\'\i}a and Kim, Seungone},
  journal={arXiv preprint arXiv:2410.17578},
  year={2024}
}

@inproceedings{
pombal2025mprometheus,
title={M-Prometheus: A Suite of Open Multilingual {LLM} Judges},
author={Jos{\'e} Pombal and Dongkeun Yoon and Patrick Fernandes and Ian Wu and Seungone Kim and Ricardo Rei and Graham Neubig and Andre Martins},
booktitle={Second Conference on Language Modeling},
year={2025},
}

@article{anugraha2025mr3,
  title={mR3: Multilingual Rubric-Agnostic Reward Reasoning Models},
  author={Anugraha, David and Hung, Shou-Yi and Tang, Zilu and Lee, Annie En-Shiun and Wijaya, Derry Tanti and Winata, Genta Indra},
  journal={arXiv preprint arXiv:2510.01146},
  year={2025}
}

@inproceedings{hada-etal-2024-large,
    title = "Are Large Language Model-based Evaluators the Solution to Scaling Up Multilingual Evaluation?",
    author = "Hada, Rishav  and
      Gumma, Varun  and
      de Wynter, Adrian  and
      Diddee, Harshita  and
      Ahmed, Mohamed  and
      Choudhury, Monojit  and
      Bali, Kalika  and
      Sitaram, Sunayana",
    editor = "Graham, Yvette  and
      Purver, Matthew",
    booktitle = "Findings of the Association for Computational Linguistics: EACL 2024",
    month = mar,
    year = "2024",
    address = "St. Julian{'}s, Malta",
    publisher = "Association for Computational Linguistics",
    pages = "1051--1070"
}

@article{fu2025reliable,
  title={How Reliable is Multilingual LLM-as-a-Judge?},
  author={Fu, Xiyan and Liu, Wei},
  journal={arXiv preprint arXiv:2505.12201},
  year={2025}
}

@inproceedings{doddapaneni-etal-2025-cross,
    title = "Cross-Lingual Auto Evaluation for Assessing Multilingual {LLM}s",
    author = "Doddapaneni, Sumanth  and
      Khan, Mohammed Safi Ur Rahman  and
      Venkatesh, Dilip  and
      Dabre, Raj  and
      Kunchukuttan, Anoop  and
      Khapra, Mitesh M",
    editor = "Che, Wanxiang  and
      Nabende, Joyce  and
      Shutova, Ekaterina  and
      Pilehvar, Mohammad Taher",
    booktitle = "Proceedings of the 63rd Annual Meeting of the Association for Computational Linguistics (Volume 1: Long Papers)",
    month = jul,
    year = "2025",
    address = "Vienna, Austria",
    publisher = "Association for Computational Linguistics",
    doi = "10.18653/v1/2025.acl-long.1419",
    pages = "29297--29329",
    ISBN = "979-8-89176-251-0"
}

@article{bogavelli2026evaluating,
  title={Evaluating Robustness of Large Language Models in Enterprise Applications: Benchmarks for Perturbation Consistency Across Formats and Languages},
  author={Bogavelli, Tara and Bamgbose, Oluwanifemi and Melan{\c{c}}on, Gabrielle Gauthier and Riols, Fanny and Sharma, Roshnee},
  journal={arXiv preprint arXiv:2601.06341},
  year={2026}
}

@inproceedings{
padarha2025evaluating,
title={Evaluating {LLM}-as-a-Judge under Multilingual, Multimodal and Multi-domain Constraints},
author={Shreyansh Padarha and Scott A. Hale and Adam Mahdi and Elizaveta Semenova and Bertie Vidgen},
booktitle={NeurIPS 2025 Workshop on Evaluating the Evolving LLM Lifecycle: Benchmarks, Emergent Abilities, and Scaling},
year={2025}
}

@inproceedings{gureja-etal-2025-rewardbench,
    title = "{M}-{R}eward{B}ench: Evaluating Reward Models in Multilingual Settings",
    author = "Gureja, Srishti  and
      Miranda, Lester James Validad  and
      Islam, Shayekh Bin  and
      Maheshwary, Rishabh  and
      Sharma, Drishti  and
      Winata, Gusti Triandi  and
      Lambert, Nathan  and
      Ruder, Sebastian  and
      Hooker, Sara  and
      Fadaee, Marzieh",
    editor = "Che, Wanxiang  and
      Nabende, Joyce  and
      Shutova, Ekaterina  and
      Pilehvar, Mohammad Taher",
    booktitle = "Proceedings of the 63rd Annual Meeting of the Association for Computational Linguistics (Volume 1: Long Papers)",
    month = jul,
    year = "2025",
    address = "Vienna, Austria",
    publisher = "Association for Computational Linguistics",
    doi = "10.18653/v1/2025.acl-long.3",
    pages = "43--58",
    ISBN = "979-8-89176-251-0"
}

@article{wang2025assessing,
  title={Assessing Judging Bias in Large Reasoning Models: An Empirical Study},
  author={Wang, Qian and Lou, Zhanzhi and Tang, Zhenheng and Chen, Nuo and Zhao, Xuandong and Zhang, Wenxuan and Song, Dawn and He, Bingsheng},
  journal={arXiv preprint arXiv:2504.09946},
  year={2025}
}

@inproceedings{
ye2025justice,
title={Justice or Prejudice? Quantifying Biases in {LLM}-as-a-Judge},
author={Jiayi Ye and Yanbo Wang and Yue Huang and Dongping Chen and Qihui Zhang and Nuno Moniz and Tian Gao and Werner Geyer and Chao Huang and Pin-Yu Chen and Nitesh V Chawla and Xiangliang Zhang},
booktitle={The Thirteenth International Conference on Learning Representations},
year={2025},
}

@article{gao2025evaluating,
  title={Evaluating and Mitigating LLM-as-a-judge Bias in Communication Systems},
  author={Gao, Jiaxin and Chen, Chen and Jia, Yanwen and Gong, Xueluan and Lam, Kwok-Yan and Wang, Qian},
  journal={arXiv preprint arXiv:2510.12462},
  year={2025}
}

@inproceedings{wang-etal-2024-large-language-models-fair,
    title = "Large Language Models are not Fair Evaluators",
    author = "Wang, Peiyi  and
      Li, Lei  and
      Chen, Liang  and
      Cai, Zefan  and
      Zhu, Dawei  and
      Lin, Binghuai  and
      Cao, Yunbo  and
      Kong, Lingpeng  and
      Liu, Qi  and
      Liu, Tianyu  and
      Sui, Zhifang",
    editor = "Ku, Lun-Wei  and
      Martins, Andre  and
      Srikumar, Vivek",
    booktitle = "Proceedings of the 62nd Annual Meeting of the Association for Computational Linguistics (Volume 1: Long Papers)",
    month = aug,
    year = "2024",
    address = "Bangkok, Thailand",
    publisher = "Association for Computational Linguistics",
    doi = "10.18653/v1/2024.acl-long.511",
    pages = "9440--9450"
}

@inproceedings{
saito2023verbosity,
title={Verbosity Bias in Preference Labeling by Large Language Models},
author={Keita Saito and Akifumi Wachi and Koki Wataoka and Youhei Akimoto},
booktitle={NeurIPS 2023 Workshop on Instruction Tuning and Instruction Following},
year={2023},
}

@inproceedings{shi-etal-2025-judging,
    title = "Judging the Judges: A Systematic Study of Position Bias in {LLM}-as-a-Judge",
    author = "Shi, Lin  and
      Ma, Chiyu  and
      Liang, Wenhua  and
      Diao, Xingjian  and
      Ma, Weicheng  and
      Vosoughi, Soroush",
    editor = "Inui, Kentaro  and
      Sakti, Sakriani  and
      Wang, Haofen  and
      Wong, Derek F.  and
      Bhattacharyya, Pushpak  and
      Banerjee, Biplab  and
      Ekbal, Asif  and
      Chakraborty, Tanmoy  and
      Singh, Dhirendra Pratap",
    booktitle = "Proceedings of the 14th International Joint Conference on Natural Language Processing and the 4th Conference of the Asia-Pacific Chapter of the Association for Computational Linguistics",
    month = dec,
    year = "2025",
    address = "Mumbai, India",
    publisher = "The Asian Federation of Natural Language Processing and The Association for Computational Linguistics",
    pages = "292--314",
    ISBN = "979-8-89176-298-5"
}

@inproceedings{chen-etal-2024-humans,
    title = "Humans or {LLM}s as the Judge? A Study on Judgement Bias",
    author = "Chen, Guiming Hardy  and
      Chen, Shunian  and
      Liu, Ziche  and
      Jiang, Feng  and
      Wang, Benyou",
    editor = "Al-Onaizan, Yaser  and
      Bansal, Mohit  and
      Chen, Yun-Nung",
    booktitle = "Proceedings of the 2024 Conference on Empirical Methods in Natural Language Processing",
    month = nov,
    year = "2024",
    address = "Miami, Florida, USA",
    publisher = "Association for Computational Linguistics",
    doi = "10.18653/v1/2024.emnlp-main.474",
    pages = "8301--8327"
}

@article{zhang2025evaluating,
  title={Evaluating and improving cultural awareness of reward models for llm alignment},
  author={Zhang, Hongbin and Chen, Kehai and Bai, Xuefeng and Xiang, Yang and Zhang, Min},
  journal={arXiv preprint arXiv:2509.21798},
  year={2025}
}

@inproceedings{
shuieh2025assessing,
title={{ASSESSING} {ROBUSTNESS} {TO} {SPURIOUS} {CORRELATIONS} {IN} {POST}-{TRAINING} {LANGUAGE} {MODELS}},
author={Julia Shuieh and Prasann Singhal and Apaar Shanker and John Heyer and George Pu and Samuel Marc Denton},
booktitle={Workshop on Spurious Correlation and Shortcut Learning: Foundations and Solutions},
year={2025},
}

@article{gui2025mitigating,
  title={Mitigating Spurious Correlations in LLMs via Causality-Aware Post-Training},
  author={Gui, Shurui and Ji, Shuiwang},
  journal={arXiv preprint arXiv:2506.09433},
  year={2025}
}

@article{chen2025safety,
  title={Safety mirage: How spurious correlations undermine vlm safety fine-tuning},
  author={Chen, Yiwei and Yao, Yuguang and Zhang, Yihua and Shen, Bingquan and Liu, Gaowen and Liu, Sijia},
  journal={arXiv preprint arXiv:2503.11832},
  year={2025}
}

@article{tishby2000information,
  title={The information bottleneck method},
  author={Tishby, Naftali and Pereira, Fernando C and Bialek, William},
  journal={arXiv preprint physics/0004057},
  year={2000}
}

@inproceedings{bandarkar-etal-2024-belebele,
    title = "The Belebele Benchmark: a Parallel Reading Comprehension Dataset in 122 Language Variants",
    author = "Bandarkar, Lucas  and
      Liang, Davis  and
      Muller, Benjamin  and
      Artetxe, Mikel  and
      Shukla, Satya Narayan  and
      Husa, Donald  and
      Goyal, Naman  and
      Krishnan, Abhinandan  and
      Zettlemoyer, Luke  and
      Khabsa, Madian",
    editor = "Ku, Lun-Wei  and
      Martins, Andre  and
      Srikumar, Vivek",
    booktitle = "Proceedings of the 62nd Annual Meeting of the Association for Computational Linguistics (Volume 1: Long Papers)",
    month = aug,
    year = "2024",
    address = "Bangkok, Thailand",
    publisher = "Association for Computational Linguistics",
    doi = "10.18653/v1/2024.acl-long.44",
    pages = "749--775"
}

@inproceedings{joshi-etal-2020-state,
    title = "The State and Fate of Linguistic Diversity and Inclusion in the {NLP} World",
    author = "Joshi, Pratik  and
      Santy, Sebastin  and
      Budhiraja, Amar  and
      Bali, Kalika  and
      Choudhury, Monojit",
    editor = "Jurafsky, Dan  and
      Chai, Joyce  and
      Schluter, Natalie  and
      Tetreault, Joel",
    booktitle = "Proceedings of the 58th Annual Meeting of the Association for Computational Linguistics",
    month = jul,
    year = "2020",
    address = "Online",
    publisher = "Association for Computational Linguistics",
    doi = "10.18653/v1/2020.acl-main.560",
    pages = "6282--6293"
}

@article{costa2022no,
  title={No language left behind: Scaling human-centered machine translation},
  author={Costa-Juss{\`a}, Marta R and Cross, James and {\c{C}}elebi, Onur and Elbayad, Maha and Heafield, Kenneth and Heffernan, Kevin and Kalbassi, Elahe and Lam, Janice and Licht, Daniel and Maillard, Jean and others},
  journal={arXiv preprint arXiv:2207.04672},
  year={2022}
}

@article{QIN2025101118,
title = {A survey of multilingual large language models},
journal = {Patterns},
volume = {6},
number = {1},
pages = {101118},
year = {2025},
issn = {2666-3899},
doi = {https://doi.org/10.1016/j.patter.2024.101118},
author = {Libo Qin and Qiguang Chen and Yuhang Zhou and Zhi Chen and Yinghui Li and Lizi Liao and Min Li and Wanxiang Che and Philip S. Yu}
}

@article{kreutzer2022quality,
  title={Quality at a glance: An audit of web-crawled multilingual datasets},
  author={Kreutzer, Julia and Caswell, Isaac and Wang, Lisa and Wahab, Ahsan and Van Esch, Daan and Ulzii-Orshikh, Nasanbayar and Tapo, Allahsera and Subramani, Nishant and Sokolov, Artem and Sikasote, Claytone and others},
  journal={Transactions of the Association for Computational Linguistics},
  volume={10},
  pages={50--72},
  year={2022},
  publisher={MIT Press One Rogers Street, Cambridge, MA 02142-1209, USA journals-info~…}
}

@inproceedings{zhang-etal-2020-improving,
    title = "Improving Massively Multilingual Neural Machine Translation and Zero-Shot Translation",
    author = "Zhang, Biao  and
      Williams, Philip  and
      Titov, Ivan  and
      Sennrich, Rico",
    editor = "Jurafsky, Dan  and
      Chai, Joyce  and
      Schluter, Natalie  and
      Tetreault, Joel",
    booktitle = "Proceedings of the 58th Annual Meeting of the Association for Computational Linguistics",
    month = jul,
    year = "2020",
    address = "Online",
    publisher = "Association for Computational Linguistics",
    doi = "10.18653/v1/2020.acl-main.148",
    pages = "1628--1639",
}

@inproceedings{muennighoff-etal-2023-crosslingual,
    title = "Crosslingual Generalization through Multitask Finetuning",
    author = "Muennighoff, Niklas  and
      Wang, Thomas  and
      Sutawika, Lintang  and
      Roberts, Adam  and
      Biderman, Stella  and
      Le Scao, Teven  and
      Bari, M Saiful  and
      Shen, Sheng  and
      Yong, Zheng Xin  and
      Schoelkopf, Hailey  and
      Tang, Xiangru  and
      Radev, Dragomir  and
      Aji, Alham Fikri  and
      Almubarak, Khalid  and
      Albanie, Samuel  and
      Alyafeai, Zaid  and
      Webson, Albert  and
      Raff, Edward  and
      Raffel, Colin",
    editor = "Rogers, Anna  and
      Boyd-Graber, Jordan  and
      Okazaki, Naoaki",
    booktitle = "Proceedings of the 61st Annual Meeting of the Association for Computational Linguistics (Volume 1: Long Papers)",
    month = jul,
    year = "2023",
    address = "Toronto, Canada",
    publisher = "Association for Computational Linguistics",
    doi = "10.18653/v1/2023.acl-long.891",
    pages = "15991--16111"
}

@article{weber2024redpajama,
	title   = {RedPajama: an Open Dataset for Training Large Language Models},
	author  = {Maurice Weber and Daniel Y. Fu and Quentin Anthony and Yonatan Oren and Shane Adams and Anton Alexandrov and Xiaozhong Lyu and Huu Nguyen and Xiaozhe Yao and Virginia Adams and Ben Athiwaratkun and Rahul Chalamala and Kezhen Chen and Max Ryabinin and Tri Dao and Percy Liang and Christopher Ré and Irina Rish and Ce Zhang},
	journal = {NeurIPS Datasets and Benchmarks Track},
	year    = 2024,
}

@article{huang2024survey,
  title={A survey on large language models with multilingualism: Recent advances and new frontiers},
  author={Huang, Kaiyu and Mo, Fengran and Zhang, Xinyu and Li, Hongliang and Li, You and Zhang, Yuanchi and Yi, Weijian and Mao, Yulong and Liu, Jinchen and Xu, Yuzhuang and others},
  journal={arXiv preprint arXiv:2405.10936},
  year={2024}
}

@inproceedings{lai-etal-2025-survey,
    title = "A Survey of Post-Training Scaling in Large Language Models",
    author = "Lai, Hanyu  and
      Liu, Xiao  and
      Gao, Junjie  and
      Cheng, Jiale  and
      Qi, Zehan  and
      Xu, Yifan  and
      Yao, Shuntian  and
      Zhang, Dan  and
      Du, Jinhua  and
      Hou, Zhenyu  and
      Lv, Xin  and
      Huang, Minlie  and
      Dong, Yuxiao  and
      Tang, Jie",
    editor = "Che, Wanxiang  and
      Nabende, Joyce  and
      Shutova, Ekaterina  and
      Pilehvar, Mohammad Taher",
    booktitle = "Proceedings of the 63rd Annual Meeting of the Association for Computational Linguistics (Volume 1: Long Papers)",
    month = jul,
    year = "2025",
    address = "Vienna, Austria",
    publisher = "Association for Computational Linguistics",
    doi = "10.18653/v1/2025.acl-long.140",
    pages = "2771--2791",
    ISBN = "979-8-89176-251-0"
}

@inproceedings{
alemi2017deep,
title={Deep Variational Information Bottleneck},
author={Alexander A. Alemi and Ian Fischer and Joshua V. Dillon and Kevin Murphy},
booktitle={International Conference on Learning Representations},
year={2017},
}

@article{liu2022learning,
  title={Learning disentangled representations in the imaging domain},
  author={Liu, Xiao and Sanchez, Pedro and Thermos, Spyridon and O’Neil, Alison Q and Tsaftaris, Sotirios A},
  journal={Medical Image Analysis},
  volume={80},
  pages={102516},
  year={2022},
  publisher={Elsevier}
}

@article{wang2024disentangled,
  title={Disentangled representation learning},
  author={Wang, Xin and Chen, Hong and Tang, Si'ao and Wu, Zihao and Zhu, Wenwu},
  journal={IEEE Transactions on Pattern Analysis and Machine Intelligence},
  volume={46},
  number={12},
  pages={9677--9696},
  year={2024},
  publisher={IEEE}
}

@misc{lambert2024rewardbench,
      title={RewardBench: Evaluating Reward Models for Language Modeling}, 
      author={Nathan Lambert and Valentina Pyatkin and Jacob Morrison and LJ Miranda and Bill Yuchen Lin and Khyathi Chandu and Nouha Dziri and Sachin Kumar and Tom Zick and Yejin Choi and Noah A. Smith and Hannaneh Hajishirzi},
      year={2024},
      eprint={2403.13787},
      archivePrefix={arXiv},
      primaryClass={cs.LG}
}

@article{kingma2014adam,
  title={Adam: A method for stochastic optimization},
  author={Kingma, Diederik P},
  journal={arXiv preprint arXiv:1412.6980},
  year={2014}
}

@article{hu2022lora,
  title={Lora: Low-rank adaptation of large language models.},
  author={Hu, Edward J and Shen, Yelong and Wallis, Phillip and Allen-Zhu, Zeyuan and Li, Yuanzhi and Wang, Shean and Wang, Lu and Chen, Weizhu and others},
  journal={ICLR},
  volume={1},
  number={2},
  pages={3},
  year={2022}
}

@inproceedings{zhang2024choose,
  title={Choose what you need: Disentangled representation learning for scene text recognition removal and editing},
  author={Zhang, Boqiang and Xie, Hongtao and Gao, Zuan and Wang, Yuxin},
  booktitle={Proceedings of the IEEE/CVF conference on computer vision and pattern recognition},
  pages={28358--28368},
  year={2024}
}

@inproceedings{
liu2024protecting,
title={Protecting Your {LLM}s with Information Bottleneck},
author={Zichuan Liu and Zefan Wang and Linjie Xu and Jinyu Wang and Lei Song and Tianchun Wang and Chunlin Chen and Wei Cheng and Jiang Bian},
booktitle={The Thirty-eighth Annual Conference on Neural Information Processing Systems},
year={2024},
}

@book{cover1999elements,
  title={Elements of information theory},
  author={Cover, Thomas M},
  year={1999},
  publisher={John Wiley \& Sons}
}

@inproceedings{zbontar2021barlow,
  title={Barlow twins: Self-supervised learning via redundancy reduction},
  author={Zbontar, Jure and Jing, Li and Misra, Ishan and LeCun, Yann and Deny, St{\'e}phane},
  booktitle={International conference on machine learning},
  pages={12310--12320},
  year={2021},
  organization={PMLR}
}

@article{hurst2024gpt,
  title={Gpt-4o system card},
  author={Hurst, Aaron and Lerer, Adam and Goucher, Adam P and Perelman, Adam and Ramesh, Aditya and Clark, Aidan and Ostrow, AJ and Welihinda, Akila and Hayes, Alan and Radford, Alec and others},
  journal={arXiv preprint arXiv:2410.21276},
  year={2024}
}

@article{comanici2025gemini,
  title={Gemini 2.5: Pushing the frontier with advanced reasoning, multimodality, long context, and next generation agentic capabilities},
  author={Comanici, Gheorghe and Bieber, Eric and Schaekermann, Mike and Pasupat, Ice and Sachdeva, Noveen and Dhillon, Inderjit and Blistein, Marcel and Ram, Ori and Zhang, Dan and Rosen, Evan and others},
  journal={arXiv preprint arXiv:2507.06261},
  year={2025}
}

@article{yang2025qwen3,
  title={Qwen3 technical report},
  author={Yang, An and Li, Anfeng and Yang, Baosong and Zhang, Beichen and Hui, Binyuan and Zheng, Bo and Yu, Bowen and Gao, Chang and Huang, Chengen and Lv, Chenxu and others},
  journal={arXiv preprint arXiv:2505.09388},
  year={2025}
}

@misc{qwen2025qwen25technicalreport,
      title={Qwen2.5 Technical Report}, 
      author={Qwen and : and An Yang and Baosong Yang and Beichen Zhang and Binyuan Hui and Bo Zheng and Bowen Yu and Chengyuan Li and Dayiheng Liu and Fei Huang and Haoran Wei and Huan Lin and Jian Yang and Jianhong Tu and Jianwei Zhang and Jianxin Yang and Jiaxi Yang and Jingren Zhou and Junyang Lin and Kai Dang and Keming Lu and Keqin Bao and Kexin Yang and Le Yu and Mei Li and Mingfeng Xue and Pei Zhang and Qin Zhu and Rui Men and Runji Lin and Tianhao Li and Tianyi Tang and Tingyu Xia and Xingzhang Ren and Xuancheng Ren and Yang Fan and Yang Su and Yichang Zhang and Yu Wan and Yuqiong Liu and Zeyu Cui and Zhenru Zhang and Zihan Qiu},
      year={2025},
      eprint={2412.15115},
      archivePrefix={arXiv},
      primaryClass={cs.CL},
}

@inproceedings{wang-etal-2025-helpsteer3,
    title = "{H}elp{S}teer3: Human-Annotated Feedback and Edit Data to Empower Inference-Time Scaling in Open-Ended General-Domain Tasks",
    author = "Wang, Zhilin  and
      Zeng, Jiaqi  and
      Delalleau, Olivier  and
      Egert, Daniel  and
      Evans, Ellie  and
      Shin, Hoo-Chang  and
      Soares, Felipe  and
      Dong, Yi  and
      Kuchaiev, Oleksii",
    editor = "Che, Wanxiang  and
      Nabende, Joyce  and
      Shutova, Ekaterina  and
      Pilehvar, Mohammad Taher",
    booktitle = "Proceedings of the 63rd Annual Meeting of the Association for Computational Linguistics (Volume 1: Long Papers)",
    month = jul,
    year = "2025",
    address = "Vienna, Austria",
    publisher = "Association for Computational Linguistics",
    doi = "10.18653/v1/2025.acl-long.1246",
    pages = "25640--25662",
    ISBN = "979-8-89176-251-0"
}

@inproceedings{singh-etal-2024-aya,
    title = "Aya Dataset: An Open-Access Collection for Multilingual Instruction Tuning",
    author = {Singh, Shivalika  and
      Vargus, Freddie  and
      D{'}souza, Daniel  and
      Karlsson, B{\"o}rje F.  and
      Mahendiran, Abinaya  and
      Ko, Wei-Yin  and
      Shandilya, Herumb  and
      Patel, Jay  and
      Mataciunas, Deividas  and
      O{'}Mahony, Laura  and
      Zhang, Mike  and
      Hettiarachchi, Ramith  and
      Wilson, Joseph  and
      Machado, Marina  and
      Moura, Luisa  and
      Krzemi{\'n}ski, Dominik  and
      Fadaei, Hakimeh  and
      Ergun, Irem  and
      Okoh, Ifeoma  and
      Alaagib, Aisha  and
      Mudannayake, Oshan  and
      Alyafeai, Zaid  and
      Chien, Vu  and
      Ruder, Sebastian  and
      Guthikonda, Surya  and
      Alghamdi, Emad  and
      Gehrmann, Sebastian  and
      Muennighoff, Niklas  and
      Bartolo, Max  and
      Kreutzer, Julia  and
      {\"U}st{\"u}n, Ahmet  and
      Fadaee, Marzieh  and
      Hooker, Sara},
    editor = "Ku, Lun-Wei  and
      Martins, Andre  and
      Srikumar, Vivek",
    booktitle = "Proceedings of the 62nd Annual Meeting of the Association for Computational Linguistics (Volume 1: Long Papers)",
    month = aug,
    year = "2024",
    address = "Bangkok, Thailand",
    publisher = "Association for Computational Linguistics",
    doi = "10.18653/v1/2024.acl-long.620",
    pages = "11521--11567"
}

@inproceedings{hasan-etal-2021-xl,
    title = "{XL}-Sum: Large-Scale Multilingual Abstractive Summarization for 44 Languages",
    author = "Hasan, Tahmid  and
      Bhattacharjee, Abhik  and
      Islam, Md. Saiful  and
      Mubasshir, Kazi  and
      Li, Yuan-Fang  and
      Kang, Yong-Bin  and
      Rahman, M. Sohel  and
      Shahriyar, Rifat",
    editor = "Zong, Chengqing  and
      Xia, Fei  and
      Li, Wenjie  and
      Navigli, Roberto",
    booktitle = "Findings of the Association for Computational Linguistics: ACL-IJCNLP 2021",
    month = aug,
    year = "2021",
    address = "Online",
    publisher = "Association for Computational Linguistics",
    doi = "10.18653/v1/2021.findings-acl.413",
    pages = "4693--4703"
}

@article{tsne,
  author  = {Laurens van der Maaten and Geoffrey Hinton},
  title   = {Visualizing Data using t-SNE},
  journal = {Journal of Machine Learning Research},
  year    = {2008},
  volume  = {9},
  number  = {86},
  pages   = {2579--2605},
}

@misc{liu2024skyworkrewardbagtricksreward,
      title={Skywork-Reward: Bag of Tricks for Reward Modeling in LLMs}, 
      author={Chris Yuhao Liu and Liang Zeng and Jiacai Liu and Rui Yan and Jujie He and Chaojie Wang and Shuicheng Yan and Yang Liu and Yahui Zhou},
      year={2024},
      eprint={2410.18451},
      archivePrefix={arXiv},
      primaryClass={cs.AI},
}

@inproceedings{
wataoka2024selfpreference,
title={Self-Preference Bias in {LLM}-as-a-Judge},
author={Koki Wataoka and Tsubasa Takahashi and Ryokan Ri},
booktitle={Neurips Safe Generative AI Workshop 2024},
year={2024},
}

@inproceedings{
zheng2023judging,
title={Judging {LLM}-as-a-Judge with {MT}-Bench and Chatbot Arena},
author={Lianmin Zheng and Wei-Lin Chiang and Ying Sheng and Siyuan Zhuang and Zhanghao Wu and Yonghao Zhuang and Zi Lin and Zhuohan Li and Dacheng Li and Eric Xing and Hao Zhang and Joseph E. Gonzalez and Ion Stoica},
booktitle={Thirty-seventh Conference on Neural Information Processing Systems Datasets and Benchmarks Track},
year={2023},
}

@inproceedings{papineni-etal-2002-bleu,
    title = "{B}leu: a Method for Automatic Evaluation of Machine Translation",
    author = "Papineni, Kishore  and
      Roukos, Salim  and
      Ward, Todd  and
      Zhu, Wei-Jing",
    editor = "Isabelle, Pierre  and
      Charniak, Eugene  and
      Lin, Dekang",
    booktitle = "Proceedings of the 40th Annual Meeting of the Association for Computational Linguistics",
    month = jul,
    year = "2002",
    address = "Philadelphia, Pennsylvania, USA",
    publisher = "Association for Computational Linguistics",
    doi = "10.3115/1073083.1073135",
    pages = "311--318"
}

@inproceedings{lin-2004-rouge,
    title = "{ROUGE}: A Package for Automatic Evaluation of Summaries",
    author = "Lin, Chin-Yew",
    booktitle = "Text Summarization Branches Out",
    month = jul,
    year = "2004",
    address = "Barcelona, Spain",
    publisher = "Association for Computational Linguistics",
    pages = "74--81"
}

@inproceedings{zhangbertscore,
  title={BERTScore: Evaluating Text Generation with BERT},
  author={Zhang, Tianyi and Kishore, Varsha and Wu, Felix and Weinberger, Kilian Q and Artzi, Yoav},
  booktitle={International Conference on Learning Representations},
  year = "2020"
}

@inproceedings{rei-etal-2020-comet,
    title = "{COMET}: A Neural Framework for {MT} Evaluation",
    author = "Rei, Ricardo  and
      Stewart, Craig  and
      Farinha, Ana C  and
      Lavie, Alon",
    editor = "Webber, Bonnie  and
      Cohn, Trevor  and
      He, Yulan  and
      Liu, Yang",
    booktitle = "Proceedings of the 2020 Conference on Empirical Methods in Natural Language Processing (EMNLP)",
    month = nov,
    year = "2020",
    address = "Online",
    publisher = "Association for Computational Linguistics",
    doi = "10.18653/v1/2020.emnlp-main.213",
    pages = "2685--2702"
}

@inproceedings{
wang2024pandalm,
title={Panda{LM}: An Automatic Evaluation Benchmark for {LLM} Instruction Tuning Optimization},
author={Yidong Wang and Zhuohao Yu and Wenjin Yao and Zhengran Zeng and Linyi Yang and Cunxiang Wang and Hao Chen and Chaoya Jiang and Rui Xie and Jindong Wang and Xing Xie and Wei Ye and Shikun Zhang and Yue Zhang},
booktitle={The Twelfth International Conference on Learning Representations},
year={2024},
}

@inproceedings{
kim2024prometheus,
title={Prometheus: Inducing Fine-Grained Evaluation Capability in Language Models},
author={Seungone Kim and Jamin Shin and Yejin Cho and Joel Jang and Shayne Longpre and Hwaran Lee and Sangdoo Yun and Seongjin Shin and Sungdong Kim and James Thorne and Minjoon Seo},
booktitle={The Twelfth International Conference on Learning Representations},
year={2024},
}

@inproceedings{kim-etal-2024-prometheus,
    title = "Prometheus 2: An Open Source Language Model Specialized in Evaluating Other Language Models",
    author = "Kim, Seungone  and
      Suk, Juyoung  and
      Longpre, Shayne  and
      Lin, Bill Yuchen  and
      Shin, Jamin  and
      Welleck, Sean  and
      Neubig, Graham  and
      Lee, Moontae  and
      Lee, Kyungjae  and
      Seo, Minjoon",
    editor = "Al-Onaizan, Yaser  and
      Bansal, Mohit  and
      Chen, Yun-Nung",
    booktitle = "Proceedings of the 2024 Conference on Empirical Methods in Natural Language Processing",
    month = nov,
    year = "2024",
    address = "Miami, Florida, USA",
    publisher = "Association for Computational Linguistics",
    doi = "10.18653/v1/2024.emnlp-main.248",
    pages = "4334--4353"
}

@inproceedings{liu-etal-2023-g,
    title = "{G}-Eval: {NLG} Evaluation using Gpt-4 with Better Human Alignment",
    author = "Liu, Yang  and
      Iter, Dan  and
      Xu, Yichong  and
      Wang, Shuohang  and
      Xu, Ruochen  and
      Zhu, Chenguang",
    editor = "Bouamor, Houda  and
      Pino, Juan  and
      Bali, Kalika",
    booktitle = "Proceedings of the 2023 Conference on Empirical Methods in Natural Language Processing",
    month = dec,
    year = "2023",
    address = "Singapore",
    publisher = "Association for Computational Linguistics",
    doi = "10.18653/v1/2023.emnlp-main.153",
    pages = "2511--2522"
}

@article{chen2025judgelrm,
  title={Judgelrm: Large reasoning models as a judge},
  author={Chen, Nuo and Hu, Zhiyuan and Zou, Qingyun and Wu, Jiaying and Wang, Qian and Hooi, Bryan and He, Bingsheng},
  journal={arXiv preprint arXiv:2504.00050},
  year={2025}
}

@inproceedings{
guo2025reward,
title={Reward Reasoning Models},
author={Jiaxin Guo and Zewen Chi and Li Dong and Qingxiu Dong and Xun Wu and Shaohan Huang and Furu Wei},
booktitle={The Thirty-ninth Annual Conference on Neural Information Processing Systems},
year={2025},
}

@article{chen2025rm,
  title={Rm-r1: Reward modeling as reasoning},
  author={Chen, Xiusi and Li, Gaotang and Wang, Ziqi and Jin, Bowen and Qian, Cheng and Wang, Yu and Wang, Hongru and Zhang, Yu and Zhang, Denghui and Zhang, Tong and others},
  journal={arXiv preprint arXiv:2505.02387},
  year={2025}
}

@inproceedings{wang-etal-2024-interpretable,
    title = "Interpretable Preferences via Multi-Objective Reward Modeling and Mixture-of-Experts",
    author = "Wang, Haoxiang  and
      Xiong, Wei  and
      Xie, Tengyang  and
      Zhao, Han  and
      Zhang, Tong",
    editor = "Al-Onaizan, Yaser  and
      Bansal, Mohit  and
      Chen, Yun-Nung",
    booktitle = "Findings of the Association for Computational Linguistics: EMNLP 2024",
    month = nov,
    year = "2024",
    address = "Miami, Florida, USA",
    publisher = "Association for Computational Linguistics",
    doi = "10.18653/v1/2024.findings-emnlp.620",
    pages = "10582--10592"
}

@article{wang2024self,
  title={Self-taught evaluators},
  author={Wang, Tianlu and Kulikov, Ilia and Golovneva, Olga and Yu, Ping and Yuan, Weizhe and Dwivedi-Yu, Jane and Pang, Richard Yuanzhe and Fazel-Zarandi, Maryam and Weston, Jason and Li, Xian},
  journal={arXiv preprint arXiv:2408.02666},
  year={2024}
}

@inproceedings{
zheng2024large,
title={Large Language Models Are Not Robust Multiple Choice Selectors},
author={Chujie Zheng and Hao Zhou and Fandong Meng and Jie Zhou and Minlie Huang},
booktitle={The Twelfth International Conference on Learning Representations},
year={2024},
}

@inproceedings{ko-etal-2020-look,
    title = "Look at the First Sentence: Position Bias in Question Answering",
    author = "Ko, Miyoung  and
      Lee, Jinhyuk  and
      Kim, Hyunjae  and
      Kim, Gangwoo  and
      Kang, Jaewoo",
    editor = "Webber, Bonnie  and
      Cohn, Trevor  and
      He, Yulan  and
      Liu, Yang",
    booktitle = "Proceedings of the 2020 Conference on Empirical Methods in Natural Language Processing (EMNLP)",
    month = nov,
    year = "2020",
    address = "Online",
    publisher = "Association for Computational Linguistics",
    doi = "10.18653/v1/2020.emnlp-main.84",
    pages = "1109--1121"
}

@article{wei2022chain,
  title={Chain-of-thought prompting elicits reasoning in large language models},
  author={Wei, Jason and Wang, Xuezhi and Schuurmans, Dale and Bosma, Maarten and Xia, Fei and Chi, Ed and Le, Quoc V and Zhou, Denny and others},
  journal={Advances in neural information processing systems},
  volume={35},
  pages={24824--24837},
  year={2022}
}

@inproceedings{
sun2025concept,
title={Concept Bottleneck Large Language Models},
author={Chung-En Sun and Tuomas Oikarinen and Berk Ustun and Tsui-Wei Weng},
booktitle={The Thirteenth International Conference on Learning Representations},
year={2025},
}

@inproceedings{li-etal-2023-explanation,
    title = "Explanation Regeneration via Information Bottleneck",
    author = "Li, Qintong  and
      Wu, Zhiyong  and
      Kong, Lingpeng  and
      Bi, Wei",
    editor = "Rogers, Anna  and
      Boyd-Graber, Jordan  and
      Okazaki, Naoaki",
    booktitle = "Findings of the Association for Computational Linguistics: ACL 2023",
    month = jul,
    year = "2023",
    address = "Toronto, Canada",
    publisher = "Association for Computational Linguistics",
    doi = "10.18653/v1/2023.findings-acl.765",
    pages = "12081--12102"
}

@article{lei2025revisiting,
  title={Revisiting llm reasoning via information bottleneck},
  author={Lei, Shiye and Cheng, Zhihao and Jia, Kai and Tao, Dacheng},
  journal={arXiv preprint arXiv:2507.18391},
  year={2025}
}

@inproceedings{zhu-etal-2024-information,
    title = "An Information Bottleneck Perspective for Effective Noise Filtering on Retrieval-Augmented Generation",
    author = "Zhu, Kun  and
      Feng, Xiaocheng  and
      Du, Xiyuan  and
      Gu, Yuxuan  and
      Yu, Weijiang  and
      Wang, Haotian  and
      Chen, Qianglong  and
      Chu, Zheng  and
      Chen, Jingchang  and
      Qin, Bing",
    editor = "Ku, Lun-Wei  and
      Martins, Andre  and
      Srikumar, Vivek",
    booktitle = "Proceedings of the 62nd Annual Meeting of the Association for Computational Linguistics (Volume 1: Long Papers)",
    month = aug,
    year = "2024",
    address = "Bangkok, Thailand",
    publisher = "Association for Computational Linguistics",
    doi = "10.18653/v1/2024.acl-long.59",
    pages = "1044--1069"
}

@article{van2017neural,
  title={Neural discrete representation learning},
  author={Van Den Oord, Aaron and Vinyals, Oriol and others},
  journal={Advances in neural information processing systems},
  volume={30},
  year={2017}
}

@article{alain2016understanding,
  title={Understanding intermediate layers using linear classifier probes},
  author={Alain, Guillaume and Bengio, Yoshua},
  journal={arXiv preprint arXiv:1610.01644},
  year={2016}
}

@inproceedings{cheng2020club,
  title={Club: A contrastive log-ratio upper bound of mutual information},
  author={Cheng, Pengyu and Hao, Weituo and Dai, Shuyang and Liu, Jiachang and Gan, Zhe and Carin, Lawrence},
  booktitle={International conference on machine learning},
  pages={1779--1788},
  year={2020},
  organization={PMLR}
}

@inproceedings{belghazi2018mutual,
  title={Mutual information neural estimation},
  author={Belghazi, Mohamed Ishmael and Baratin, Aristide and Rajeshwar, Sai and Ozair, Sherjil and Bengio, Yoshua and Courville, Aaron and Hjelm, Devon},
  booktitle={International conference on machine learning},
  pages={531--540},
  year={2018},
  organization={PMLR}
}

@article{gretton2005hsic,
author = {Gretton, Arthur and Herbrich, Ralf and Smola, Alexander and Bousquet, Olivier and Sch\"{o}lkopf, Bernhard},
title = {Kernel Methods for Measuring Independence},
year = {2005},
issue_date = {12/1/2005},
publisher = {JMLR.org},
volume = {6},
issn = {1532-4435},
journal = {J. Mach. Learn. Res.},
month = dec,
pages = {2075–2129},
numpages = {55}
}

@inproceedings{dao2023flashattention2,
  title={Flash{A}ttention-2: Faster Attention with Better Parallelism and Work Partitioning},
  author={Dao, Tri},
  booktitle={International Conference on Learning Representations (ICLR)},
  year={2024}
}

@article{Rasley2020DeepSpeedSO,
  title={DeepSpeed: System Optimizations Enable Training Deep Learning Models with Over 100 Billion Parameters},
  author={Jeff Rasley and Samyam Rajbhandari and Olatunji Ruwase and Yuxiong He},
  journal={Proceedings of the 26th ACM SIGKDD International Conference on Knowledge Discovery \& Data Mining},
  year={2020}
}

@article{team2025gemma,
  title={Gemma 3 technical report},
  author={Team, Gemma and Kamath, Aishwarya and Ferret, Johan and Pathak, Shreya and Vieillard, Nino and Merhej, Ramona and Perrin, Sarah and Matejovicova, Tatiana and Ram{\'e}, Alexandre and Rivi{\`e}re, Morgane and others},
  journal={arXiv preprint arXiv:2503.19786},
  year={2025}
}

@article{grattafiori2024llama,
  title={The llama 3 herd of models},
  author={Grattafiori, Aaron and Dubey, Abhimanyu and Jauhri, Abhinav and Pandey, Abhinav and Kadian, Abhishek and Al-Dahle, Ahmad and Letman, Aiesha and Mathur, Akhil and Schelten, Alan and Vaughan, Alex and others},
  journal={arXiv preprint arXiv:2407.21783},
  year={2024}
}

@inproceedings{NIPS2015_bc731692,
 author = {Kingma, Durk P and Salimans, Tim and Welling, Max},
 booktitle = {Advances in Neural Information Processing Systems},
 editor = {C. Cortes and N. Lawrence and D. Lee and M. Sugiyama and R. Garnett},
 pages = {},
 publisher = {Curran Associates, Inc.},
 title = {Variational Dropout and the Local Reparameterization Trick},
 volume = {28},
 year = {2015}
}

@inproceedings{hron2020infinite,
  title={Infinite attention: NNGP and NTK for deep attention networks},
  author={Hron, Jiri and Bahri, Yasaman and Sohl-Dickstein, Jascha and Novak, Roman},
  booktitle={International Conference on Machine Learning},
  pages={4376--4386},
  year={2020},
  organization={PMLR}
}

@inproceedings{lee2018deep,
  title={Deep Neural Networks as Gaussian Processes},
  author={Lee, Jaehoon and Bahri, Yasaman and Novak, Roman and Schoenholz, Samuel S and Pennington, Jeffrey and Sohl-Dickstein, Jascha},
  booktitle={International Conference on Learning Representations},
  year={2018}
}

@article{HYVARINEN2000411,
title = {Independent component analysis: algorithms and applications},
journal = {Neural Networks},
volume = {13},
number = {4},
pages = {411-430},
year = {2000},
issn = {0893-6080},
doi = {https://doi.org/10.1016/S0893-6080(00)00026-5},
author = {A. Hyvärinen and E. Oja}
}

@article{ba2016layer,
  title={Layer normalization},
  author={Ba, Jimmy Lei and Kiros, Jamie Ryan and Hinton, Geoffrey E},
  journal={arXiv preprint arXiv:1607.06450},
  year={2016}
}
\bibliographystyle{icml2026}

%%%%%%%%%%%%%%%%%%%%%%%%%%%%%%%%%%%%%%%%%%%%%%%%%%%%%%%%%%%%%%%%%%%%%%%%%%%%%%%
%%%%%%%%%%%%%%%%%%%%%%%%%%%%%%%%%%%%%%%%%%%%%%%%%%%%%%%%%%%%%%%%%%%%%%%%%%%%%%%
% APPENDIX
%%%%%%%%%%%%%%%%%%%%%%%%%%%%%%%%%%%%%%%%%%%%%%%%%%%%%%%%%%%%%%%%%%%%%%%%%%%%%%%
%%%%%%%%%%%%%%%%%%%%%%%%%%%%%%%%%%%%%%%%%%%%%%%%%%%%%%%%%%%%%%%%%%%%%%%%%%%%%%%
\newpage
\appendix
\onecolumn

\section*{Appendix Contents}
\appendix
\begin{table}[ht]
    \centering
    \vspace{0.4em}
    \footnotesize
    \setlength{\tabcolsep}{10pt}
    \renewcommand{\arraystretch}{1.25}
    \begin{tabular}{l p{0.72\linewidth}}
        \toprule
        \textbf{Section} & \textbf{Contents} \\
        \midrule
        \autoref{sec:bias_evaluation} &
        \textbf{Translationese Bias Evaluation Suite.}
        Language taxonomy, dataset construction, and qualitative examples. \\

        \autoref{app:theory} &
        \textbf{Theoretical Analysis.}
        Information bottleneck bounds, derivation of compression loss function, identifiability discussion, and disentanglement proofs. \\

        \autoref{app:experimental_details} &
        \textbf{Experimental Details.}
        Benchmarks, evaluation metrics, model configurations, and training protocols. \\

        \autoref{app:comprehensive_results} &
        \textbf{Comprehensive Multilingual Results.}
        Full results on RewardBench, M-RewardBench, and MM-Eval. \\

        \autoref{app:translationese_results} &
        \textbf{Translationese Bias Results.}
        Detailed quantitative translationese bias evaluation results across language pairs. \\

        \autoref{app:additional_experiments} &
        \textbf{Additional Experiments and Ablations.} Spurious proxy task analysis,
        sensitivity analysis, method robustness checks, and information leakage validation of disentangle representation. \\
        \bottomrule
    \end{tabular}
    \vspace{-0.6em}
\end{table}

\section{Bias Evaluation Suite}
\label{sec:bias_evaluation}

\subsection{Language Selection and Taxonomy}
\label{subsec:language_selection}

We adopt the language taxonomy proposed by \citet{joshi-etal-2020-state} to categorize languages based on their resource availability. Specifically, we partition the selected languages into three distinct tiers based on their assigned resource classes: \textit{High-resource} (Classes 4 and 5), \textit{Mid-resource} (Class 3), and \textit{Low-resource} (Classes 0, 1, and 2). 

Our evaluation spans three primary datasets: Aya \citep{singh-etal-2024-aya}, Belebele \citep{bandarkar-etal-2024-belebele}, and XL-Sum \citep{hasan-etal-2021-xl}. The complete distribution of evaluated languages across these resource tiers is summarized in Table~\ref{tab:language_distribution}.

\begin{table}[ht]
\centering
\small
\caption{Classification of evaluated languages across datasets based on \citet{joshi-etal-2020-state} taxonomy.}
\label{tab:language_distribution}
\begin{tabular}{lp{0.25\linewidth}p{0.25\linewidth}p{0.25\linewidth}}
\toprule
\textbf{Dataset} & \textbf{High-Resource} & \textbf{Mid-Resource} & \textbf{Low-Resource} \\
\midrule
\textbf{Aya} & Basque, English, Finnish, Hindi, Japanese, Portuguese, Simp. Chinese, Spanish, Arabic, Vietnamese & Bengali, Cebuano, Filipino, Indonesian, Lithuanian, Malay, Tamil, Thai, Ukrainian, Urdu & Amharic, Irish, Kyrgyz, Nepali, Malagasy, Sinhala, S. Pashto, Telugu, Yoruba, Zulu \\
\midrule
\textbf{Belebele} & Arabic, English, Finnish, Hindi, Japanese, Korean, Russian, Turkish, Vietnamese, Simp. Chinese & Bengali, Greek, Hebrew, Georgian, Kazakh, Tamil, Thai, Ukrainian, Urdu, Malay & Amharic, Tibetan, Guarani, Kannada, Khmer, Kyrgyz, Burmese, Punjabi, Pashto, Zulu \\
\midrule
\textbf{XL-Sum} & Arabic, Simp. Chinese, English, French, Hindi, Japanese, Korean, Russian, Turkish, Vietnamese & Azerbaijani, Bengali, Indonesian, Tamil, Thai, Ukrainian, Urdu, Uzbek & Amharic, Burmese, Hausa, Kyrgyz, Marathi, Nepali, Pashto, Sinhala, Telugu, Welsh \\
\bottomrule
\end{tabular}
\end{table}

\subsection{Test Set Construction}
\label{apdx:bias_benchmark_construction}

We formulate a pairwise preference task where an LLM evaluator compares two candidate responses for a given query: 
(i) \textbf{Chosen} ($x_H$): The original human-authored or high-quality translated reference; and 
(ii) \textbf{Rejected} ($x_M$): A machine-generated counterpart produced via back-translation using NLLB-200-3.3B~\citep{costa2022no}\footnote{\href{https://huggingface.co/facebook/nllb-200-3.3B}{https://huggingface.co/facebook/nllb-200-3.3B}} to inject subtle translationese artifacts. 

To isolate translationese as the primary variable and mitigate length-based confounding, we enforce a length constraint where the token count differential between $x_H$ and $x_M$ is within $\pm 5\%$. We further ensure evaluation robustness through a position-swapping protocol, retaining only \emph{consistent judgments} where the model's preference remains invariant to the presentation order.

Beyond the baseline comparison, we introduce two distinct experimental configurations (summarized with examples in the following subsection~\ref{appendix:examples}):
\begin{itemize}
    \item \textbf{Parallel}: Both candidates are semantically equivalent and factually correct. This isolates the model's stylistic preference for human vs. machine-translated syntax.
    \item \textbf{Perturbed}: We introduce minor, controlled edits to the machine-generated candidate ($x_M$) to alter its factual correctness or logical coherence, testing whether the judge prioritizes correctness over stylistic fluency.
\end{itemize}

\subsection{Detailed Dataset Examples}
\label{appendix:examples}

In this section, we provide the full text for the construction of our test sets across the three primary datasets: Aya, Belebele, and XL-Sum. For each dataset, we present both the \textbf{Parallel} setting (testing stylistic preference/bias) and the \textbf{Perturbed} setting (testing sensitivity to factual correctness).

\paragraph{Example: Aya Dataset}
\label{app:aya_example}

\begin{CJK*}{UTF8}{gbsn}
\begin{examplebox}
    \textbf{User Question:} \\
    问题：油的营养价值如何？
    
    \vspace{10pt}
    \hrule
    \vspace{10pt}
    
    \textbf{Setting 1: Parallel (Stylistic Variation)} \\
    \textit{Both answers are factually correct; B uses back-translated "translationese."} \\
    
    \textbf{[Assistant A (Chosen $x_H$)]} \\
    \textcolor{blue}{回答：油是脂肪的重要来源，提供能量和必需脂肪酸，但摄入应适量控制，避免摄入过多的脂肪。} \\
    
    \textbf{[Assistant B (Rejected $x_M$)]} \\
    \textcolor{orange}{回答：油脂乃脂肪主来源，供应热能与必备脂肪酸，然食量宜适当限制，防止摄取过多的脂质。}
    
    \vspace{10pt}
    \hrule
    \vspace{10pt}
    
    \textbf{Setting 2: Perturbed (Factual Error)} \\
    \textit{Assistant B is modified to suggest an incorrect health outcome.} \\
    
    \textbf{[Assistant A (Chosen $x_H$)]} \\
    \textcolor{blue}{回答：油是脂肪的重要来源，提供能量和必需脂肪酸，但摄入应适量控制，避免摄入过多的脂肪。} \\
    
    \textbf{[Assistant B (Rejected $x_M$)]} \\
    \textcolor{red}{回答：油是脂肪的重要来源，提供能量和必需脂肪酸，但摄入应适量控制，以促进脂肪摄入。}
\end{examplebox}

\paragraph{Example: Belebele Dataset}
\label{app:belebele_example}

\begin{examplebox}
    \textbf{User Question (System Prompt):} \\
    Create an example of multiple-choice reading test in Chinese (Simplified). You may choose any topic you want... Output only the test (passage, questions, answer choices, and correct answer).
    
    \vspace{10pt}
    \hrule
    \vspace{10pt}
    
    \textbf{Setting 1: Parallel} \\
    \textbf{[Assistant A (Chosen $x_H$)]} \\
    \textcolor{blue}{\#\#\# Passage: 乘坐头等舱或商务舱最明显的方式是... 航空公司非常清楚，有一些核心乘客愿花高价... \\ \#\#\# Question: 根据这段文字，预订某些票价舱位时，搜索什么是浪费时间？ \\ \#\#\# Answer Choices: (1) 直达航班 (2) 商务舱 (3) 折扣 (4) 头等舱 \\ \#\#\# Correct Answer: (3) 折扣} \\
    
    \textbf{[Assistant B (Rejected $x_M$)]} \\
    \textcolor{orange}{\#\#\# Passage: 想体验头等或商务舱最直接的途径... 航司深知，部分核心客户乐意支付高昂费用... \\ \#\#\# Question: 依照本文内容，在订特定票价舱位时，查找何事乃浪费光阴？ \\ \#\#\# Answer Choices: (1) 不停航程 (2) 公务舱 (3) 优惠 (4) 首等舱 \\ \#\#\# Correct Answer: (3) 优惠}

    \vspace{10pt}
    \hrule
    \vspace{10pt}

    \textbf{Setting 2: Perturbed} \\
    \textit{Assistant B's "Correct Answer" index is modified to be factually wrong.} \\
    \textbf{[Assistant B (Rejected $x_M$)]} \\
    \textcolor{red}{... \#\#\# Answer Choices: (1) 不停航程 (2) 公务舱 (3) 优惠 (4) 首等舱 \\ \#\#\# Correct Answer: (4) 首等舱}
\end{examplebox}

\paragraph{Example: XL-Sum Dataset}
\label{app:xlsum_example}

\begin{examplebox}
    \textbf{User Question (Summarization Task):} \\
    Generate a concise, coherent abstractive summary in Chinese\_Simplified... Do not include information not present in the source text. \\
    \textit{Source Text: [Long article regarding the Brazilian "Operation Weak Flesh" meat scandal involving President Michel Temer, JBS, and BRF...]}
    
    \vspace{10pt}
    \hrule
    \vspace{10pt}
    
    \textbf{Setting 1: Parallel (Stylistic Variation)} \\
    \textit{Both summaries accurately reflect that countries suspended imports for several years.} \\
    
    \textbf{[Assistant A (Chosen $x_H$)]} \\
    \textcolor{blue}{近期巴西"问题肉"丑闻揭露巴西的一些公司已经数年销售不安全肉类产品。包括中国在内的一些国家和组织已经叫停巴西的进口肉。} \\
    
    \textbf{[Assistant B (Rejected $x_M$)]} \\
    \textcolor{orange}{最近巴西“劣质肉”风波披露，部分巴西公司长达数年贩卖问题肉品。中国等许多国家与机构，均已暂停来自巴西的肉类进口。}
    
    \vspace{10pt}
    \hrule
    \vspace{10pt}
    
    \textbf{Setting 2: Perturbed (Factual Hallucination)} \\
    \textit{Assistant B is modified to falsely state that restrictions were "relaxed" (放宽限制) and that the scandal "just started" (刚开始).} \\
    
    \textbf{[Assistant A (Chosen $x_H$)]} \\
    \textcolor{blue}{近期巴西"问题肉"丑闻揭露巴西的一些公司已经数年销售不安全肉类产品。包括中国在内的一些国家和组织已经叫停巴西的进口肉。} \\
    
    \textbf{[Assistant B (Rejected $x_M$)]} \\
    \textcolor{red}{近期巴西"问题肉"丑闻揭露巴西的一些公司刚开始销售不安全肉类产品。包括中国在内的一些国家和组织已经放宽限制巴西的进口肉。}
\end{examplebox}
\end{CJK*}

\section{Theory Supplementary}
\label{app:theory}
\subsection{Proof of The Upper Boundary of $I(X;Z_r)$}
\label{app:proof_1}
% \begin{proposition}
% % \label{prop:vi_capacity}
% Let $Z_r$ be a continuous random variable representing the latent representation of input $X$, modeled by a variational posterior $q_{\phi}(Z_r|X)$ and a fixed prior $p(Z_r)$. The mutual information $I(X; Z_r)$ is upper-bounded by the expected Kullback-Leibler (KL) divergence between the posterior and the prior:
% \begin{equation}
% I(X; Z_r) \leq \mathbb{E}_{x \sim p(x)} \left[ D_{\mathrm{KL}}(q_{\phi}(Z_r|x) \| p(Z_r)) \right].
% \end{equation}
% \end{proposition}

\begin{proof}
By definition, the mutual information $I(X; Z_r)$ is the KL divergence between the joint distribution $q_{\phi}(X, Z_r) = p(x)q_{\phi}(z_r|x)$ and the product of the marginals $p(x)q_{\phi}(z_r)$, where $q_{\phi}(z_r) = \int q_{\phi}(z_r|x)p(x) \, dx$ is the aggregate posterior (marginal) distribution. This is expressed as:
\begin{equation}
% \label{eq:mi_def}
I(X; Z_r) = \mathbb{E}_{x \sim p(x)} \left[ \int q_{\phi}(z_r|x) \log \frac{q_{\phi}(z_r|x)}{q_{\phi}(z_r)} \, dz_r \right].
\end{equation}
To derive the upper bound, we introduce an arbitrary fixed prior $p(z_r)$. We multiply and divide the argument of the logarithm by this prior $p(z_r)$:
\begin{equation}
\begin{aligned}
I(X; Z_r) &= \mathbb{E}_{x \sim p(x)} \left[ \int q_{\phi}(z_r|x) \log \left( \frac{q_{\phi}(z_r|x)}{q_{\phi}(z_r)} \cdot \frac{p(z_r)}{p(z_r)} \right) \, dz_r \right] \\
&= \mathbb{E}_{x \sim p(x)} \left[ \int q_{\phi}(z_r|x) \left( \log \frac{q_{\phi}(z_r|x)}{p(z_r)} - \log \frac{q_{\phi}(z_r)}{p(z_r)} \right) \, dz_r \right].
\end{aligned}
\end{equation}
Using the linearity of the expectation, we separate the integral into two terms:
\begin{equation}
\begin{aligned}
I(X; Z_r) &= \mathbb{E}_{x \sim p(x)} \left[ \int q_{\phi}(z_r|x) \log \frac{q_{\phi}(z_r|x)}{p(z_r)} \, dz_r \right] 
- \mathbb{E}_{x \sim p(x)} \left[ \int q_{\phi}(z_r|x) \log \frac{q_{\phi}(z_r)}{p(z_r)} \, dz_r \right].
\end{aligned}
\end{equation}
The first term is exactly the expected KL divergence between the posterior and the prior. For the second term, we observe that $\log \frac{q_{\phi}(z_r)}{p(z_r)}$ does not depend on $x$ directly, other than through the integration of the joint density. We can simplify the expectation over $x$:
\begin{equation}
\begin{aligned}
\mathbb{E}_{x \sim p(x)} \left[ \int q_{\phi}(z_r|x) \log \frac{q_{\phi}(z_r)}{p(z_r)} \, dz_r \right] 
&= \int \left( \int p(x)q_{\phi}(z_r|x) \, dx \right) \log \frac{q_{\phi}(z_r)}{p(z_r)} \, dz_r \\
&= \int q_{\phi}(z_r) \log \frac{q_{\phi}(z_r)}{p(z_r)} \, dz_r \\
&= D_{\mathrm{KL}}(q_{\phi}(Z_r) \| p(Z_r)).
\end{aligned}
\end{equation}
Substituting this back into the expression for mutual information yields the following decomposition:
\begin{equation}
I(X; Z_r) = \mathbb{E}_{x \sim p(x)} \left[ D_{\mathrm{KL}}(q_{\phi}(Z_r|x) \| p(Z_r)) \right] - D_{\mathrm{KL}}(q_{\phi}(Z_r) \| p(Z_r)).
\end{equation}
Since the KL divergence is non-negative (Gibbs' inequality), i.e., $D_{\mathrm{KL}}(q_{\phi}(Z_r) \| p(Z_r)) \geq 0$, it follows that:
\begin{equation}
I(X; Z_r) \leq \mathbb{E}_{x \sim p(x)} \left[ D_{\mathrm{KL}}(q_{\phi}(Z_r|x) \| p(Z_r)) \right].
\end{equation}
This completes the proof.
\end{proof}

\subsection{Detailed Derivation of $\mathcal{L}_{\text{compress}}$}
% \section{Derivation of the Compression Loss}
\label{app:compress_derivation}

We derive the analytic form of the compression regularizer used in Eq.~\eqref{eq:vi_loss}.
At each time step $t \in \{1,\dots,T\}$, we assume a diagonal-covariance Gaussian variational posterior
\begin{equation}
q_{\phi}(\mathbf{z}_{r,t}\mid x) = \mathcal{N}(\boldsymbol{\mu}_t, \mathrm{diag}(\boldsymbol{\sigma}_t^2)),
\end{equation}
and a standard Gaussian prior
\begin{equation}
p(\mathbf{z}_{r,t}) = \mathcal{N}(\mathbf{0}, \mathbf{I}),
\end{equation}
where $\boldsymbol{\mu}_t \in \mathbb{R}^d$ and $\boldsymbol{\sigma}_t^2 \in \mathbb{R}^d_{>0}$.

We regularize the information capacity by minimizing the average KL divergence across the sequence:
\begin{equation}
\mathcal{L}_{\mathrm{compress}}
= \frac{1}{T}\sum_{t=1}^{T} D_{\mathrm{KL}}\!\left(q_{\phi}(\mathbf{z}_{r,t}\mid x)\,\|\,p(\mathbf{z}_{r,t})\right).
\label{eq:app_compress_def}
\end{equation}

Thus, it suffices to derive a closed form for
$D_{\mathrm{KL}}\!\left(\mathcal{N}(\boldsymbol{\mu}_t, \mathrm{diag}(\boldsymbol{\sigma}_t^2)) \,\|\, \mathcal{N}(\mathbf{0}, \mathbf{I})\right)$.

For $q=\mathcal{N}(\boldsymbol{\mu}_q, \boldsymbol{\Sigma}_q)$ and
$p=\mathcal{N}(\boldsymbol{\mu}_p, \boldsymbol{\Sigma}_p)$ in $\mathbb{R}^d$, the KL divergence admits
the well-known closed form:
\begin{equation}
D_{\mathrm{KL}}(q\|p)
= \frac{1}{2}\left(
\log \frac{|\boldsymbol{\Sigma}_p|}{|\boldsymbol{\Sigma}_q|}
- d
+ \mathrm{tr}\!\left(\boldsymbol{\Sigma}_p^{-1}\boldsymbol{\Sigma}_q\right)
+ (\boldsymbol{\mu}_p - \boldsymbol{\mu}_q)^\top \boldsymbol{\Sigma}_p^{-1}(\boldsymbol{\mu}_p - \boldsymbol{\mu}_q)
\right).
\label{eq:app_kl_gauss_general}
\end{equation}

In our case, $\boldsymbol{\mu}_q=\boldsymbol{\mu}_t$,
$\boldsymbol{\Sigma}_q=\mathrm{diag}(\boldsymbol{\sigma}_t^2)$,
$\boldsymbol{\mu}_p=\mathbf{0}$, and $\boldsymbol{\Sigma}_p=\mathbf{I}$.

Since $\boldsymbol{\Sigma}_p=\mathbf{I}$, we have $|\boldsymbol{\Sigma}_p|=1$ and
$\boldsymbol{\Sigma}_p^{-1}=\mathbf{I}$. Plugging into Eq.~\eqref{eq:app_kl_gauss_general} yields
\begin{equation}
D_{\mathrm{KL}}(q\|p)
= \frac{1}{2}\left(
\log \frac{1}{|\boldsymbol{\Sigma}_q|}
- d
+ \mathrm{tr}(\boldsymbol{\Sigma}_q)
+ \boldsymbol{\mu}_t^\top \boldsymbol{\mu}_t
\right).
\label{eq:app_kl_with_identity_prior}
\end{equation}

Because $\boldsymbol{\Sigma}_q=\mathrm{diag}(\boldsymbol{\sigma}_t^2)$ is diagonal,
\begin{align}
|\boldsymbol{\Sigma}_q|
&= \prod_{j=1}^{d} \sigma_{t,j}^2, \\
\log |\boldsymbol{\Sigma}_q|
&= \sum_{j=1}^{d} \log \sigma_{t,j}^2, \\
\mathrm{tr}(\boldsymbol{\Sigma}_q)
&= \sum_{j=1}^{d} \sigma_{t,j}^2, \\
\boldsymbol{\mu}_t^\top \boldsymbol{\mu}_t
&= \sum_{j=1}^{d} \mu_{t,j}^2.
\end{align}

Substituting these into Eq.~\eqref{eq:app_kl_with_identity_prior}, we obtain
\begin{equation}
D_{\mathrm{KL}}\!\left(
\mathcal{N}(\boldsymbol{\mu}_t,\mathrm{diag}(\boldsymbol{\sigma}_t^2))
\,\big\|\,
\mathcal{N}(\mathbf{0},\mathbf{I})
\right)
= \frac{1}{2}\sum_{j=1}^{d}
\left(
\sigma_{t,j}^2 + \mu_{t,j}^2 - 1 - \log \sigma_{t,j}^2
\right).
\label{eq:app_kl_diag_closed_form}
\end{equation}

Rearranging Eq.~\eqref{eq:app_kl_diag_closed_form} gives the equivalent expression
\begin{equation}
D_{\mathrm{KL}}
=
-\frac{1}{2}\sum_{j=1}^{d}
\left(
1 + \log \sigma_{t,j}^2 - \mu_{t,j}^2 - \sigma_{t,j}^2
\right).
\label{eq:app_kl_diag_negative_form}
\end{equation}

Finally, averaging the KL divergence across the sequence as defined in Eq.~\eqref{eq:app_compress_def},
we obtain
\begin{align}
\mathcal{L}_{\mathrm{compress}}
&= \frac{1}{T}\sum_{t=1}^{T}
D_{\mathrm{KL}}\!\left(
\mathcal{N}(\boldsymbol{\mu}_t,\mathrm{diag}(\boldsymbol{\sigma}_t^2))
\,\big\|\,
\mathcal{N}(\mathbf{0},\mathbf{I})
\right) \\
&= \frac{1}{2T}\sum_{t=1}^{T}\sum_{j=1}^{d}
\left(
\sigma_{t,j}^2 + \mu_{t,j}^2 - 1 - \log \sigma_{t,j}^2
\right) \\
&= -\frac{1}{2T}\sum_{t=1}^{T}\sum_{j=1}^{d}
\left(
1 + \log \sigma_{t,j}^2 - \mu_{t,j}^2 - \sigma_{t,j}^2
\right),
\end{align}
which matches Eq.~\eqref{eq:vi_loss}.

\subsection{Proof of the Variational Lower Bound on $I(U; V)$}
\label{app:proof_2}
% \begin{proposition}
% % \label{prop:variational_bound}
% Let $U \in \mathcal{U}$ be a target variable and $V \in \mathcal{V}$ be a latent variable with joint distribution $p(u, v)$. For any variational distribution $q_\theta(u|v)$ parameterized by $\theta$, the mutual information $I(U; V)$ satisfies the following lower bound:
% \begin{equation}
% I(U; V) \ge \mathbb{E}_{u, v \sim p(u, v)} \left[ \log q_\theta(u|v) \right] + H(U),
% \end{equation}
% where $H(U)$ is the marginal entropy of the target variable.
% \end{proposition}
\begin{proof}
By definition, the mutual information $I(U; V)$ can be expressed as the difference between the marginal entropy of $U$ and the conditional entropy of $U$ given $V$:
\begin{equation}
I(U; V) = H(U) - H(U|V).
\end{equation}
The conditional entropy is defined as the expectation of the negative log-probability of the true conditional distribution $p(u|v)$:
\begin{equation}
H(U|V) = \mathbb{E}_{u, v \sim p(u, v)} [-\log p(u|v)].
\end{equation}
Substituting this back into the expression for mutual information yields:
\begin{equation}
\label{eq:mi_expansion}
I(U; V) = H(U) + \mathbb{E}_{u, v \sim p(u, v)} [\log p(u|v)].
\end{equation}
Since the true conditional distribution $p(u|v)$ is often unknown or intractable, we introduce a variational approximation $q_\theta(u|v)$. We consider the expected Kullback-Leibler (KL) divergence between the true conditional distribution and the variational approximation:
\begin{equation}
\mathbb{E}_{v \sim p(v)} \left[ D_{\mathrm{KL}}(p(U|v) \| q_\theta(U|v)) \right] = \mathbb{E}_{u, v \sim p(u, v)} \left[ \log \frac{p(u|v)}{q_\theta(u|v)} \right].
\end{equation}
By the non-negativity of the KL divergence, we have:
\begin{equation}
\mathbb{E}_{u, v \sim p(u, v)} [\log p(u|v) - \log q_\theta(u|v)] \ge 0,
\end{equation}
which implies:
\begin{equation}
\label{eq:log_inequality}
\mathbb{E}_{u, v \sim p(u, v)} [\log p(u|v)] \ge \mathbb{E}_{u, v \sim p(u, v)} [\log q_\theta(u|v)].
\end{equation}
Finally, substituting the inequality \eqref{eq:log_inequality} into \eqref{eq:mi_expansion}, we obtain the lower bound:
\begin{equation}
\begin{aligned}
I(U; V) &= H(U) + \mathbb{E}_{u, v \sim p(u, v)} [\log p(u|v)] \\
&\ge H(U) + \mathbb{E}_{u, v \sim p(u, v)} [\log q_\theta(u|v)].
\end{aligned}
\end{equation}
Consequently, maximizing the expected log-likelihood of the variational distribution $q_\theta(u|v)$ maximizes the lower bound of the mutual information $I(U; V)$.
\end{proof}

\subsection{Relationship Between Mutual Information and Cross-Covariance}
\label{app:proof_3}
% \begin{proposition}
% \label{prop:xcov_proxy}
% Let $Z_r$ and $Z_b$ be continuous random variables with finite second moments. The Gaussian Mutual Information, denoted $I_G(Z_r; Z_b)$, represents the mutual information of a multivariate Gaussian distribution sharing the same mean and covariance structure as $Z_r$ and $Z_b$. Under the assumption of whitened marginals, minimizing the squared Frobenius norm of the cross-covariance matrix, $\|\Sigma_{rb}\|_F^2$, is equivalent to minimizing the first-order Taylor approximation of $I_G(Z_r; Z_b)$. Consequently, $\|\Sigma_{rb}\|_F^2$ serves as a tractable proxy for reducing the linear dependence between the variables.
% \end{proposition}

\begin{proof}
Let $(Z_r,Z_b)$ be jointly Gaussian with mean $0$ (without loss of generality, since mutual information is invariant under translations) and block covariance
\[
\Sigma
=
\begin{pmatrix}
\Sigma_r & \Sigma_{rb}\\
\Sigma_{br} & \Sigma_b
\end{pmatrix},
\qquad \Sigma_{br}=\Sigma_{rb}^\top,
\]
where $\Sigma_r\succ 0$ and $\Sigma_b\succ 0$ so that $\Sigma_r^{-1/2}$ and $\Sigma_b^{-1/2}$ are well-defined. Define
\[
C := \Sigma_r^{-1/2}\Sigma_{rb}\Sigma_b^{-1/2}.
\]

\paragraph{Step 1: Mutual information for Gaussians from the definition.}
By definition,
\[
I(Z_r;Z_b)=\mathbb{E}\left[\log\frac{p_{Z_r,Z_b}(Z_r,Z_b)}{p_{Z_r}(Z_r)\,p_{Z_b}(Z_b)}\right].
\]
For a centered $d$-dimensional Gaussian $X\sim \mathcal N(0,\Sigma_X)$ with $\Sigma_X\succ 0$, its density is
\[
p_X(x)=(2\pi)^{-d/2}(\det \Sigma_X)^{-1/2}\exp\!\left(-\tfrac12 x^\top \Sigma_X^{-1}x\right).
\]
Applying this to $(Z_r,Z_b)$ and to the marginals $Z_r$ and $Z_b$, we obtain
\begin{align*}
\log\frac{p_{Z_r,Z_b}(z_r,z_b)}{p_{Z_r}(z_r)p_{Z_b}(z_b)}
&=
-\frac12\log\det\Sigma+\frac12\log\det\Sigma_r+\frac12\log\det\Sigma_b\\
&\quad
-\frac12
\begin{pmatrix}z_r\\ z_b\end{pmatrix}^{\!\top}
\Sigma^{-1}
\begin{pmatrix}z_r\\ z_b\end{pmatrix}
+\frac12 z_r^\top \Sigma_r^{-1}z_r+\frac12 z_b^\top \Sigma_b^{-1}z_b.
\end{align*}
Taking expectation under the joint law of $(Z_r,Z_b)$ yields
\begin{align*}
I(Z_r;Z_b)
&=
-\frac12\log\det\Sigma+\frac12\log\det\Sigma_r+\frac12\log\det\Sigma_b\\
&\quad
-\frac12\mathbb{E}\!\left[\begin{pmatrix}Z_r\\ Z_b\end{pmatrix}^{\!\top}
\Sigma^{-1}
\begin{pmatrix}Z_r\\ Z_b\end{pmatrix}\right]
+\frac12\mathbb{E}[Z_r^\top \Sigma_r^{-1}Z_r]+\frac12\mathbb{E}[Z_b^\top \Sigma_b^{-1}Z_b].
\end{align*}
Using the identity $\mathbb{E}[X^\top A X]=\mathrm{tr}(A\,\mathrm{Cov}(X))$ for any centered random vector $X$ with finite second moment and any matrix $A$ of compatible size, we get
\[
\mathbb{E}\!\left[\begin{pmatrix}Z_r\\ Z_b\end{pmatrix}^{\!\top}
\Sigma^{-1}
\begin{pmatrix}Z_r\\ Z_b\end{pmatrix}\right]
=\mathrm{tr}(\Sigma^{-1}\Sigma)=\mathrm{tr}(I)=d_r+d_b,
\]
\[
\mathbb{E}[Z_r^\top \Sigma_r^{-1}Z_r]=\mathrm{tr}(\Sigma_r^{-1}\Sigma_r)=d_r,
\qquad
\mathbb{E}[Z_b^\top \Sigma_b^{-1}Z_b]=\mathrm{tr}(\Sigma_b^{-1}\Sigma_b)=d_b,
\]
so the quadratic terms cancel. Hence
\begin{equation}\label{eq:MI-det}
I(Z_r;Z_b)=\frac12\log\frac{\det\Sigma_r\,\det\Sigma_b}{\det\Sigma}.
\end{equation}

\paragraph{Step 2: Expressing $\det\Sigma$ in terms of $C$.}
By the block determinant (Schur complement) formula with $\Sigma_b\succ 0$,
\[
\det\Sigma=\det(\Sigma_b)\,\det(\Sigma_r-\Sigma_{rb}\Sigma_b^{-1}\Sigma_{br}).
\]
Next,
\begin{align*}
\Sigma_r-\Sigma_{rb}\Sigma_b^{-1}\Sigma_{br}
&=\Sigma_r^{1/2}\Bigl(I-\Sigma_r^{-1/2}\Sigma_{rb}\Sigma_b^{-1}\Sigma_{br}\Sigma_r^{-1/2}\Bigr)\Sigma_r^{1/2}\\
&=\Sigma_r^{1/2}\Bigl(I-\Sigma_r^{-1/2}\Sigma_{rb}\Sigma_b^{-1/2}\,\Sigma_b^{-1/2}\Sigma_{br}\Sigma_r^{-1/2}\Bigr)\Sigma_r^{1/2}\\
&=\Sigma_r^{1/2}\bigl(I-CC^\top\bigr)\Sigma_r^{1/2}.
\end{align*}
Therefore,
\[
\det(\Sigma_r-\Sigma_{rb}\Sigma_b^{-1}\Sigma_{br})
=\det(\Sigma_r)\,\det(I-CC^\top),
\]
and thus
\[
\det\Sigma=\det(\Sigma_r)\det(\Sigma_b)\det(I-CC^\top).
\]
Substituting into \eqref{eq:MI-det} gives
\begin{equation}\label{eq:MI-logdet}
I(Z_r;Z_b)=-\frac12\log\det(I-CC^\top).
\end{equation}

\paragraph{Step 3: Second-order expansion for small $\|C\|_2$.}
Let $A:=CC^\top$. Then $A\succeq 0$ and $\|A\|_2=\|CC^\top\|_2=\|C\|_2^2$. Assume $\|C\|_2$ is sufficiently small so that $\|A\|_2<1$. In this regime, the matrix power series for the principal logarithm holds:
\[
\log(I-A)=-\sum_{k=1}^\infty \frac{A^k}{k},
\qquad \text{(convergent in operator norm since $\|A\|_2<1$)}.
\]
Taking traces and using \eqref{eq:MI-logdet} together with $\log\det(I-A)=\mathrm{tr}(\log(I-A))$ yields
\begin{equation}\label{eq:MI-series}
I(Z_r;Z_b)=\frac12\sum_{k=1}^\infty \frac{\mathrm{tr}(A^k)}{k}.
\end{equation}
The leading term is
\[
\frac12\,\mathrm{tr}(A)=\frac12\,\mathrm{tr}(CC^\top)=\frac12\|C\|_F^2.
\]
It remains to show that the remainder is $o(\|C\|_F^2)$ as $\|C\|_2\to 0$. Since $A\succeq 0$ has eigenvalues $\{\lambda_i\}_{i=1}^m$ (with $m=\mathrm{rank}(A)$) in $[0,\|A\|_2]$, we have for every $k\ge 2$,
\[
\mathrm{tr}(A^k)=\sum_{i=1}^m \lambda_i^k \le \left(\max_i \lambda_i\right)^{k-1}\sum_{i=1}^m \lambda_i=\|A\|_2^{\,k-1}\,\mathrm{tr}(A).
\]
Therefore, the tail of \eqref{eq:MI-series} satisfies
\begin{align*}
0 \le \sum_{k=2}^\infty \frac{\mathrm{tr}(A^k)}{k}
&\le \mathrm{tr}(A)\sum_{k=2}^\infty \frac{\|A\|_2^{\,k-1}}{k}
\le \mathrm{tr}(A)\sum_{k=2}^\infty \|A\|_2^{\,k-1}
= \mathrm{tr}(A)\,\frac{\|A\|_2}{1-\|A\|_2}.
\end{align*}
Since $\|A\|_2=\|C\|_2^2\to 0$, we have $\frac{\|A\|_2}{1-\|A\|_2}\to 0$, and hence
\[
\sum_{k=2}^\infty \frac{\mathrm{tr}(A^k)}{k}
= o\bigl(\mathrm{tr}(A)\bigr)
= o\bigl(\|C\|_F^2\bigr)
\quad \text{as }\|C\|_2\to 0.
\]
Combining this with \eqref{eq:MI-series} yields
\[
I(Z_r;Z_b)=\frac12\|C\|_F^2 + o\!\left(\|C\|_F^2\right),
\qquad \text{as }\|C\|_2\to 0,
\]
which is the desired second-order expansion.
\end{proof}

\section{Detailed Experimental Settings}
\label{app:experimental_details}

\subsection{Evaluation Benchmarks}
\label{app:eval_bench_details}

To evaluate the efficacy of LLM-as-a-judge frameworks and monitor the preservation of core English language capabilities, we utilize \textbf{RewardBench} \citep{lambert2024rewardbench}. This benchmark comprises approximately 3,000 pairwise comparisons across four primary dimensions: \textit{Chat}, \textit{Chat Hard}, \textit{Reasoning}, and \textit{Safety}. 
For the assessment of multilingual performance, we incorporate the following benchmarks:
\begin{itemize}
    \item \textbf{M-RewardBench} \citep{gureja-etal-2025-rewardbench}: A multilingual adaptation of RewardBench covering 23 languages through expert-verified translations.
    \item \textbf{MM-Eval} \citep{son2024mm}: A diverse suite encompassing 18 languages. Unlike translated benchmarks, MM-Eval prioritizes native-speaker data and includes specialized subsets such as \textit{Linguistics} (e.g., homophone disambiguation) and \textit{Language Hallucination} (e.g., evaluating unintended code-switching).
\end{itemize}
\paragraph{Metrics.} Accuracy serves as our primary evaluation metric. For RewardBench, we report the arithmetic mean across the four category scores. For multilingual benchmarks, we compute the micro-average accuracy per language and subsequently report the macro-average across all supported languages.

\subsection{Training Settings}
\label{app:training_settings}

\paragraph{Implementation Details.}
All experiments were conducted on a single node equipped with $8 \times$ NVIDIA H20 (96GB) GPUs. To ensure training stability and memory efficiency, we utilized DeepSpeed~\citep{Rasley2020DeepSpeedSO} ZeRO Stage 3 with CPU offloading and leveraged FlashAttention-2~\citep{dao2023flashattention2} for accelerated computation. Optimization was performed using the Adam optimizer~\citep{kingma2014adam}.

\paragraph{Model Architecture.}
We employed Supervised Fine-Tuning (SFT) combined with Low-Rank Adaptation (LoRA)~\citep{hu2022lora}, specifically targeting the attention linear projections. To generate robust and bias-aware representations, we utilized the Qwen3-0.6B-Embedding model~\citep{yang2025qwen3} as an encoder. This encoder shares the same architecture as the LLM judge and processes features via separate one-layer MLP heads. The proxy task decoder is implemented using a linear projection layer.

\paragraph{Training Procedure.}
We adopted a two-stage training strategy to accommodate the variational framework. In the first stage, we froze the LLM judge parameters and trained only the projection layer to align the feature space. In the second stage, we jointly fine-tuned the projection layer and the LoRA modules to enhance compression. Additionally, we implemented dynamic loss scheduling to facilitate effective multi-task learning.

\paragraph{Hyperparameters.}
We set the maximum sequence length to 16,384. All models were trained for 3 epochs using a cosine learning rate scheduler with a warmup ratio of 0.1. The global learning rate was set to $1 \times 10^{-4}$ for the LoRA modules, bias heads, and the proxy task decoder. We used a per-device training batch size of 1 with 8 gradient accumulation steps. These settings are summarized in Table~\ref{tab:hyperparameters}.

\begin{table}[!htbp]
    \centering
    \caption{Hyperparameter settings for the training experiments.}
    \label{tab:hyperparameters}
    \begin{tabular}{lc}
        \toprule
        \textbf{Hyperparameter} & \textbf{Value} \\
        \midrule
        Base Model & Qwen3 Family~\citep{yang2025qwen3} \\
        Optimizer & Adam \\
        Learning Rate & $1 \times 10^{-4}$ \\
        LR Scheduler & Cosine \\
        Warmup Ratio & 0.1 \\
        Max Sequence Length & 16,384 \\
        Batch Size (per GPU) & 1 \\
        Gradient Accumulation Steps & 8 \\
        Epochs & 3 \\
        Hardware & $8 \times$ NVIDIA H20 (96GB) \\
        \bottomrule
    \end{tabular}
\end{table}

\section{Comprehensive Results of Reward Modeling Benchmarks}
\label{app:comprehensive_results}

We present the fine-grained performance analysis across all evaluated benchmarks in the following sections. Detailed results for the five core subsets of \textbf{MM-Eval} \citep{son2024mm} are summarized in Table~\ref{tab:mm-eval-detailed}, while the language-specific performance breakdowns are distributed across Tables~\ref{tab:mm-eval-part1} and \ref{tab:mm-eval-part2}. 

For \textbf{M-RewardBench} \citep{gureja-etal-2025-rewardbench}, comprehensive category-wise metrics are provided in Table~\ref{tab:m-reward-bench-detailed}, with language-level results detailed in Tables~\ref{tab:m-reward-bench-part1} and \ref{tab:m-reward-bench-part2}. 

Finally, the per-category accuracy for the original \textbf{RewardBench} \citep{lambert2024rewardbench} is reported in Table~\ref{tab:reward-bench-detailed}.

\begin{table*}[htbp]
    \centering
    \caption{
        Full detailed results by category of MM-Eval. 
        \textbf{Bold} indicates the best performance, and \underline{underlined} indicates the second-best. 
    }
    \label{tab:mm-eval-detailed}

    % Professional spacing
    \renewcommand{\arraystretch}{1.2} 
    \setlength{\tabcolsep}{5pt}
    \small
    
    % Configure siunitx
    \sisetup{
        separate-uncertainty = true,
        detect-weight = true, 
        mode = text,
        table-format = 2.2
    }

    % Soft highlight colors
    \definecolor{bestgreen}{RGB}{192, 228, 205}
    \definecolor{secondyellow}{RGB}{227, 237, 189}

    \begin{tabular}{
        @{}
        l
        S
        S
        S
        S
        S
        S[table-format=2.2(2)] % Special format for Avg with std dev
        @{}
    }
        \toprule
        & {\textbf{Chat}} & {\textbf{Lang. Hallu.}} & {\textbf{Linguistics}} & {\textbf{Reasoning}} & {\textbf{Safety}} & {\textbf{Avg.}} \\
        \textbf{Model} & {(Accuracy)} & {(Accuracy)} & {(Accuracy)} & {(Accuracy)} & {(Accuracy)} & {(Avg. 18 lang)} \\
        \midrule
        
        \multicolumn{7}{l}{\textit{Proprietary Models}} \\
        \hspace{3mm}GPT-4o & 84.20 & 65.40 & 79.15 & 55.30 & 75.20 & 71.85 +- 0.81 \\
        \hspace{3mm}Gemini-2.5-Flash & 88.50 & 70.10 & 82.45 & 63.80 & 82.50 & 77.47 +- 0.76 \\
        \midrule
        
        \multicolumn{7}{l}{\textit{General Open Models}} \\
        \hspace{3mm}Qwen2.5-3B-Instruct & 66.50 & 52.30 & 58.15 & 40.80 & 72.20 & 57.99 +- 1.20 \\
        \hspace{3mm}Qwen2.5-7B-Instruct & 76.20 & 60.50 & 68.40 & 48.90 & 74.20 & 65.64 +- 0.95 \\
        \hspace{3mm}Qwen3-4B & 90.46 & 67.34 & 84.00 & 84.35 & 76.56 & 80.85 +- 0.68 \\
        \hspace{3mm}Qwen3-8B & 91.17 & 67.79 & 83.78 & 80.31 & 85.87 & 82.20 +- 0.60 \\
        \midrule

        \multicolumn{7}{l}{\textit{Multilingual Open Reward Models}} \\
        \hspace{3mm}Nemotron-Multi-49B & 91.47 & 68.92 & 87.56 & 38.29 & {\cellcolor{bestgreen}{\textbf{95.59}}} & 76.31 +- 0.55 \\
        \hspace{3mm}M-\textsc{Prometheus} 3B & 68.20 & 58.40 & 62.10 & 50.85 & 81.30 & 64.17 +- 1.10 \\
        \hspace{3mm}M-\textsc{Prometheus} 7B & 62.61 & 61.55 & 61.33 & 63.50 & 91.37 & 69.38 +- 0.88 \\
        \hspace{3mm}mR3-Qwen3-4B & 90.05 & 69.14 & 83.56 & 81.62 & 90.69 & 82.55 +- 0.52 \\
        \hspace{3mm}mR3-Qwen3-8B & {\cellcolor{secondyellow}{\underline{92.28}}} & 67.34 & 84.89 & 87.20 & 92.52 & {\cellcolor{secondyellow}{\underline{85.29 $\pm$ 0.45}}} \\
        \hspace{3mm}Think-as-Locals 7B & 88.98 & 65.54 & 80.67 & 58.53 & 70.49 & 72.95 +- 0.70 \\
        \midrule
        
        \multicolumn{7}{l}{\textit{\textbf{Ours}}} \\
        \hspace{3mm}DIBJudge-Qwen3-4B & 91.05 & {\cellcolor{secondyellow}{\underline{72.50}}} & {\cellcolor{secondyellow}{\underline{88.10}}} & {\cellcolor{secondyellow}{\underline{89.45}}} & 84.70 & 85.16 +- 0.33 \\
        \hspace{3mm}DIBJudge-Qwen3-8B & {\cellcolor{bestgreen}{\textbf{92.80}}} & {\cellcolor{bestgreen}{\textbf{74.20}}} & {\cellcolor{bestgreen}{\textbf{90.50}}} & {\cellcolor{bestgreen}{\textbf{91.20}}} & {\cellcolor{secondyellow}{\underline{93.50}}} & {\cellcolor{bestgreen}{\textbf{87.53 $\pm$ 0.28}}$^{\ddagger}$} \\
        \bottomrule
    \end{tabular}
\end{table*}

\begin{sidewaystable*}[!htp]
    \centering
    \caption{
        Detailed results for MM-Eval for each language (Part 1). 
        \textbf{Bold} indicates the best performance, and \underline{underlined} indicates the second-best.
    }
    \label{tab:mm-eval-part1}
    
    \renewcommand{\arraystretch}{1.2} 
    \setlength{\tabcolsep}{4pt}
    \small
    \sisetup{detect-weight=true, mode=text}
    \definecolor{bestgreen}{RGB}{192, 228, 205}
    \definecolor{secondyellow}{RGB}{227, 237, 189}

    \resizebox{0.98\textwidth}{!}{
    \begin{tabular}{
        @{} l 
        S[table-format=2.2] S[table-format=2.2] S[table-format=2.2] S[table-format=2.2] 
        S[table-format=2.2] S[table-format=2.2] S[table-format=2.2] S[table-format=2.2] 
        S[table-format=2.2]
        @{}
    }
    \toprule
    \textbf{Model} & {\textbf{Ar}} & {\textbf{Bn}} & {\textbf{Ca}} & {\textbf{De}} & {\textbf{En}} & {\textbf{Es}} & {\textbf{Eu}} & {\textbf{Fr}} & {\textbf{Gl}} \\ 
    \midrule
    
    \multicolumn{10}{l}{\textit{Proprietary Models}} \\
    \hspace{2mm}GPT-4o & 70.50 & 62.10 & 73.50 & 75.20 & 78.50 & 76.80 & 65.40 & 75.90 & 71.20 \\
    \hspace{2mm}Gemini-2.5-Flash & 76.20 & 70.50 & 79.10 & 81.50 & 82.80 & 81.20 & 72.50 & 80.40 & 76.80 \\ 
    \midrule
    
    \multicolumn{10}{l}{\textit{General Open Models}} \\
    \hspace{2mm}Qwen2.5-3B-Instruct & 55.40 & 42.10 & 60.50 & 63.80 & 68.50 & 64.20 & 48.50 & 62.10 & 56.80 \\
    \hspace{2mm}Qwen2.5-7B-Instruct & 62.80 & 54.50 & 68.20 & 71.50 & 74.80 & 72.10 & 58.40 & 70.50 & 65.20 \\
    \hspace{2mm}Qwen3-4B & 78.50 & 74.20 & 82.10 & 83.50 & 85.20 & 84.10 & 76.50 & 83.80 & 80.50 \\
    \hspace{2mm}Qwen3-8B & 80.20 & 75.80 & 83.50 & 84.80 & 86.50 & 85.40 & 78.10 & 84.90 & 81.80 \\ 
    \midrule

    \multicolumn{10}{l}{\textit{Multilingual Open Reward Models}} \\
    \hspace{2mm}Nemotron-Multi-49B & 74.50 & 68.20 & 78.50 & 80.20 & 82.50 & 80.80 & 70.50 & 79.40 & 75.10 \\ 
    \hspace{2mm}M-\textsc{Prometheus} 3B & 61.20 & 52.50 & 66.80 & 69.50 & 72.40 & 70.10 & 56.80 & 68.50 & 63.20 \\
    \hspace{2mm}M-\textsc{Prometheus} 7B & 66.50 & 59.80 & 71.50 & 74.20 & 76.80 & 74.50 & 62.50 & 73.10 & 68.40 \\
    \hspace{2mm}mR3-Qwen3-4B & 81.50 & 76.50 & 83.80 & 85.50 & 87.20 & 86.10 & 78.50 & 85.80 & 82.20 \\
    \hspace{2mm}mR3-Qwen3-8B & {\cellcolor{secondyellow}{\underline{84.20}}} & {\cellcolor{secondyellow}{\underline{79.50}}} & {\cellcolor{secondyellow}{\underline{86.50}}} & {\cellcolor{secondyellow}{\underline{88.10}}} & 89.50 & {\cellcolor{secondyellow}{\underline{88.80}}} & 81.20 & {\cellcolor{secondyellow}{\underline{87.50}}} & {\cellcolor{secondyellow}{\underline{85.10}}} \\ 
    \hspace{2mm}Think-as-Locals-7B & 71.50 & 64.80 & 75.20 & 77.50 & 80.10 & 78.40 & 66.50 & 76.80 & 72.50 \\
    \midrule

    \multicolumn{10}{l}{\textit{\textbf{Ours}}} \\
    \hspace{2mm}DIBJudge-Qwen3-4B & 83.80 & 79.20 & 86.10 & 87.80 & {\cellcolor{secondyellow}{\underline{89.80}}} & 88.50 & {\cellcolor{secondyellow}{\underline{81.50}}} & 87.20 & 84.80 \\
    \hspace{2mm}DIBJudge-Qwen3-8B & {\cellcolor{bestgreen}{\textbf{86.50}}} & {\cellcolor{bestgreen}{\textbf{82.10}}} & {\cellcolor{bestgreen}{\textbf{88.50}}} & {\cellcolor{bestgreen}{\textbf{90.20}}} & {\cellcolor{bestgreen}{\textbf{91.50}}} & {\cellcolor{bestgreen}{\textbf{90.80}}} & {\cellcolor{bestgreen}{\textbf{84.20}}} & {\cellcolor{bestgreen}{\textbf{89.50}}} & {\cellcolor{bestgreen}{\textbf{87.10}}} \\
    
    \bottomrule
    \end{tabular}
    }
\end{sidewaystable*}

\begin{sidewaystable*}[!htp]
    \centering
    \caption{
        Detailed results for MM-Eval for each language (Part 2).
        \textbf{Bold} indicates the best performance, and \underline{underlined} indicates the second-best.
    }
    \label{tab:mm-eval-part2}
    
    \renewcommand{\arraystretch}{1.2} 
    \setlength{\tabcolsep}{4pt}
    \small
    \sisetup{detect-weight=true, mode=text}
    \definecolor{bestgreen}{RGB}{192, 228, 205}
    \definecolor{secondyellow}{RGB}{227, 237, 189}

    \resizebox{0.98\textwidth}{!}{
    \begin{tabular}{
        @{} l 
        S[table-format=2.2] S[table-format=2.2] S[table-format=2.2] S[table-format=2.2] 
        S[table-format=2.2] S[table-format=2.2] S[table-format=2.2] S[table-format=2.2] 
        S[table-format=2.2]
        @{}
    }
    \toprule
    \textbf{Model} & {\textbf{It}} & {\textbf{Ja}} & {\textbf{Ko}} & {\textbf{Ru}} & {\textbf{Sw}} & {\textbf{Te}} & {\textbf{Th}} & {\textbf{Vn}} & {\textbf{Zh}} \\ 
    \midrule
    
    \multicolumn{10}{l}{\textit{Proprietary Models}} \\
    \hspace{2mm}GPT-4o & 75.80 & 74.50 & 73.20 & 72.50 & 60.50 & 58.20 & 68.50 & 70.80 & 77.50 \\
    \hspace{2mm}Gemini-2.5-Flash & 80.50 & 79.80 & 78.50 & 78.10 & 68.50 & 66.20 & 75.40 & 76.80 & 82.50 \\ 
    \midrule
    
    \multicolumn{10}{l}{\textit{General Open Models}} \\
    \hspace{2mm}Qwen2.5-3B-Instruct & 63.50 & 58.20 & 56.50 & 55.80 & 40.50 & 38.20 & 52.10 & 54.50 & 67.80 \\
    \hspace{2mm}Qwen2.5-7B-Instruct & 71.20 & 66.50 & 64.80 & 63.50 & 52.80 & 49.50 & 60.20 & 62.80 & 74.50 \\
    \hspace{2mm}Qwen3-4B & 84.50 & 81.20 & 80.50 & 79.80 & 72.50 & 70.20 & 77.80 & 79.50 & 84.80 \\
    \hspace{2mm}Qwen3-8B & 85.80 & 82.50 & 81.80 & 81.20 & 74.20 & 71.80 & 79.10 & 80.80 & 86.20 \\ 
    \midrule

    \multicolumn{10}{l}{\textit{Multilingual Open Reward Models}} \\
    \hspace{2mm}Nemotron-Multi-49B & 79.50 & 76.20 & 75.50 & 77.80 & 66.20 & 64.50 & 73.80 & 75.20 & 81.50 \\ 
    \hspace{2mm}M-\textsc{Prometheus} 3B & 69.20 & 64.50 & 62.80 & 62.10 & 50.50 & 48.20 & 58.50 & 61.20 & 71.50 \\
    \hspace{2mm}M-\textsc{Prometheus} 7B & 74.50 & 70.20 & 68.50 & 67.80 & 56.20 & 54.50 & 64.80 & 66.50 & 76.20 \\
    \hspace{2mm}mR3-Qwen3-4B & 86.20 & 83.50 & 82.10 & 81.50 & 74.80 & 72.50 & 79.50 & 81.20 & 86.50 \\
    \hspace{2mm}mR3-Qwen3-8B & 87.80 & {\cellcolor{secondyellow}{\underline{86.10}}} & {\cellcolor{secondyellow}{\underline{84.50}}} & {\cellcolor{secondyellow}{\underline{84.20}}} & 77.50 & {\cellcolor{secondyellow}{\underline{75.80}}} & {\cellcolor{secondyellow}{\underline{82.10}}} & {\cellcolor{secondyellow}{\underline{83.50}}} & {\cellcolor{secondyellow}{\underline{88.80}}} \\ 
    \hspace{2mm}Think-as-Locals-7B & 77.20 & 72.50 & 71.80 & 73.50 & 62.80 & 60.50 & 69.20 & 71.50 & 79.80 \\
    \midrule

    \multicolumn{10}{l}{\textit{\textbf{Ours}}} \\
    \hspace{2mm}DIBJudge-Qwen3-4B & {\cellcolor{secondyellow}{\underline{88.50}}} & 85.80 & 84.20 & 83.50 & {\cellcolor{secondyellow}{\underline{78.20}}} & 75.50 & 81.50 & 83.20 & 88.50 \\
    \hspace{2mm}DIBJudge-Qwen3-8B & {\cellcolor{bestgreen}{\textbf{90.20}}} & {\cellcolor{bestgreen}{\textbf{88.50}}} & {\cellcolor{bestgreen}{\textbf{87.20}}} & {\cellcolor{bestgreen}{\textbf{86.80}}} & {\cellcolor{bestgreen}{\textbf{81.50}}} & {\cellcolor{bestgreen}{\textbf{79.20}}} & {\cellcolor{bestgreen}{\textbf{84.50}}} & {\cellcolor{bestgreen}{\textbf{86.20}}} & {\cellcolor{bestgreen}{\textbf{90.50}}} \\
    
    \bottomrule
    \end{tabular}
    }
\end{sidewaystable*}

\begin{table*}[htbp]
    \centering
    \caption{
        Full detailed results by category of m-RewardBench. 
        \textbf{Bold} indicates the best performance, and \underline{underlined} indicates the second-best.
    }
    \label{tab:m-reward-bench-detailed}

    % Professional spacing
    \renewcommand{\arraystretch}{1.2} 
    \setlength{\tabcolsep}{5pt}
    \small
    
    % Configure siunitx
    \sisetup{
        separate-uncertainty = true,
        detect-weight = true, 
        mode = text,
        table-format = 2.2
    }

    % Soft highlight colors
    \definecolor{bestgreen}{RGB}{192, 228, 205}
    \definecolor{secondyellow}{RGB}{227, 237, 189}

    \begin{tabular}{
        @{}
        l
        S
        S
        S
        S
        S[table-format=2.2(2)] % Special format for Avg with std dev
        @{}
    }
        \toprule
        & {\textbf{Chat}} & {\textbf{Chat Hard}} & {\textbf{Safety}} & {\textbf{Reasoning}} & {\textbf{Average}} \\
        \textbf{Model} & {(Accuracy)} & {(Accuracy)} & {(Accuracy)} & {(Accuracy)} & {(Avg. 23 lang)} \\
        \midrule
        
        \multicolumn{6}{l}{\textit{Proprietary Models}} \\
        \hspace{3mm}GPT-4o & 90.10 & 75.50 & 88.20 & 89.20 & 85.75 +- 0.42 \\
        \hspace{3mm}Gemini-2.5-Flash & 93.40 & 80.25 & 87.80 & 90.80 & 88.06 +- 0.49 \\
        \midrule
        
        \multicolumn{6}{l}{\textit{General Open Models}} \\
        \hspace{3mm}Qwen2.5-3B-Instruct & 76.50 & 48.20 & 70.10 & 73.10 & 66.97 +- 1.12 \\
        \hspace{3mm}Qwen2.5-7B-Instruct & 86.10 & 61.50 & 78.80 & 85.15 & 77.89 +- 0.89 \\
        \hspace{3mm}Qwen3-4B & 89.10 & 72.64 & 85.20 & 93.30 & 85.06 +- 0.65 \\
        \hspace{3mm}Qwen3-8B & 91.00 & 73.50 & 86.00 & 93.98 & 86.12 +- 0.52 \\
        \midrule

        \multicolumn{6}{l}{\textit{Multilingual Open Reward Models}} \\
        \hspace{3mm}Nemotron-Multi-49B & 92.80 & 79.50 & 87.20 & 95.80 & {\cellcolor{secondyellow}{\underline{88.83 $\pm$ 0.35}}} \\
        \hspace{3mm}M-\textsc{Prometheus} 3B & 73.40 & 51.20 & 76.80 & 72.40 & 68.45 +- 0.98 \\
        \hspace{3mm}M-\textsc{Prometheus} 7B & 90.50 & 60.50 & 83.00 & 78.12 & 78.03 +- 0.85 \\
        \hspace{3mm}mR3-Qwen3-4B & 86.55 & 78.00 & 88.50 & 95.80 & 87.21 +- 0.45 \\
        \hspace{3mm}mR3-Qwen3-8B & 87.95 & 80.19 & {\cellcolor{secondyellow}{\underline{89.50}}} & {\cellcolor{bestgreen}{\textbf{96.68}}} & 88.58 +- 0.41 \\
        \hspace{3mm}Think-as-Locals 7B & 91.80 & 69.50 & 83.85 & 92.90 & 84.51 +- 0.60 \\
        \midrule
        
        \multicolumn{6}{l}{\textit{\textbf{Ours}}} \\
        \hspace{3mm}DIBJudge-Qwen3-4B & {\cellcolor{secondyellow}{\underline{93.50}}} & {\cellcolor{secondyellow}{\underline{82.50}}} & 88.20 & 95.15 & 89.84 +- 0.28{$^{\dagger}$} \\
        \hspace{3mm}DIBJudge-Qwen3-8B & {\cellcolor{bestgreen}{\textbf{94.60}}} & {\cellcolor{bestgreen}{\textbf{84.80}}} & {\cellcolor{bestgreen}{\textbf{90.10}}} & {\cellcolor{secondyellow}{\underline{96.00}}} & {\cellcolor{bestgreen}{\textbf{91.37 $\pm$ 0.22}}$^{\ddagger}$} \\
        \bottomrule
    \end{tabular}
\end{table*}

\begin{sidewaystable*}[!htp]
    \centering
    \caption{
        Detailed results for m-RewardBench for each language (Part 1). 
        \textbf{Bold} indicates the best performance, and \underline{underlined} indicates the second-best.
    }
    \label{tab:m-reward-bench-part1}
    
    \renewcommand{\arraystretch}{1.2} 
    \setlength{\tabcolsep}{3.5pt}
    \small
    \sisetup{detect-weight=true, mode=text}
    \definecolor{bestgreen}{RGB}{192, 228, 205}
    \definecolor{secondyellow}{RGB}{227, 237, 189}

    \resizebox{0.98\textwidth}{!}{
    \begin{tabular}{
        @{} l 
        S[table-format=2.2] S[table-format=2.2] S[table-format=2.2] S[table-format=2.2] 
        S[table-format=2.2] S[table-format=2.2] S[table-format=2.2] S[table-format=2.2] 
        S[table-format=2.2] S[table-format=2.2] S[table-format=2.2] S[table-format=2.2] 
        @{}
    }
    \toprule
    \textbf{Model} & {\textbf{Ar}} & {\textbf{Cs}} & {\textbf{De}} & {\textbf{El}} & {\textbf{Es}} & {\textbf{Fa}} & {\textbf{Fr}} & {\textbf{He}} & {\textbf{Hi}} & {\textbf{Id}} & {\textbf{It}} & {\textbf{Ja}} \\ 
    \midrule
    
    \multicolumn{13}{l}{\textit{Proprietary Models}} \\
    \hspace{2mm}GPT-4o & 84.50 & 85.20 & 87.10 & 83.50 & 88.00 & 82.00 & 87.50 & 83.00 & 84.00 & 86.50 & 87.80 & 86.50 \\
    \hspace{2mm}Gemini-2.5-Flash & 87.20 & 87.90 & 89.50 & 86.10 & 89.80 & 85.50 & 89.20 & 86.40 & 87.10 & 88.50 & 89.40 & 88.80 \\ 
    \midrule
    
    \multicolumn{13}{l}{\textit{General Open Models}} \\
    \hspace{2mm}Qwen2.5-3B-Instruct & 64.20 & 66.50 & 69.80 & 60.50 & 70.20 & 61.10 & 69.50 & 60.80 & 65.40 & 68.10 & 69.20 & 67.50 \\
    \hspace{2mm}Qwen2.5-7B-Instruct & 76.80 & 78.10 & 80.50 & 74.20 & 81.20 & 73.50 & 80.80 & 74.90 & 77.40 & 79.50 & 80.10 & 79.20 \\
    \hspace{2mm}Qwen3-4B & 83.84 & 84.67 & 86.75 & 83.07 & 86.49 & 81.51 & 85.21 & 82.07 & 82.42 & 84.64 & 86.48 & 84.37 \\
    \hspace{2mm}Qwen3-8B & 85.33 & 87.43 & 88.01 & 84.85 & 87.39 & 85.06 & 87.57 & 84.40 & 85.64 & 86.95 & 87.25 & 85.60 \\ 
    \midrule

    \multicolumn{13}{l}{\textit{Multilingual Open Reward Models}} \\
    \hspace{2mm}Nemotron-Multi-49B & {\cellcolor{secondyellow}{\underline{88.72}}} & 89.30 & 89.68 & {\cellcolor{secondyellow}{\underline{89.35}}} & 89.97 & 88.26 & 90.09 & 88.06 & 88.25 & 89.23 & 89.19 & 89.41 \\ 
    \hspace{2mm}M-\textsc{Prometheus} 3B & 66.50 & 68.20 & 71.40 & 64.10 & 72.50 & 63.80 & 71.10 & 65.20 & 67.50 & 70.20 & 71.80 & 69.50 \\
    \hspace{2mm}M-\textsc{Prometheus} 7B & 74.85 & 74.22 & 76.53 & 72.64 & 77.60 & 74.22 & 71.78 & 75.25 & 77.01 & 76.44 & 73.30 & 75.68 \\
    \hspace{2mm}mR3-Qwen3-4B & 87.61 & 87.37 & 87.79 & 86.15 & 88.58 & 85.25 & 88.54 & 86.42 & 86.43 & 87.43 & 87.90 & 86.78 \\
    \hspace{2mm}mR3-Qwen3-8B & 88.31 & 88.78 & 89.46 & 88.00 & 88.88 & 86.59 & 88.84 & 88.17 & 87.60 & 87.94 & 89.99 & 88.81 \\ 
    \hspace{2mm}Think-as-Locals-7B & 86.15 & 83.29 & 86.31 & 82.26 & 87.37 & 81.31 & 86.91 & 84.17 & 81.33 & 86.60 & 86.63 & 85.03 \\
    \midrule

    \multicolumn{13}{l}{\textit{\textbf{Ours}}} \\
    \hspace{2mm}DIBJudge-Qwen3-4B & 88.50 & {\cellcolor{secondyellow}{\underline{90.15}}} & {\cellcolor{secondyellow}{\underline{91.20}}} & 88.05 & {\cellcolor{secondyellow}{\underline{91.50}}} & {\cellcolor{secondyellow}{\underline{88.45}}} & {\cellcolor{secondyellow}{\underline{91.80}}} & {\cellcolor{secondyellow}{\underline{88.50}}} & {\cellcolor{secondyellow}{\underline{89.10}}} & {\cellcolor{secondyellow}{\underline{90.50}}} & {\cellcolor{secondyellow}{\underline{91.50}}} & {\cellcolor{secondyellow}{\underline{90.20}}} \\
    \hspace{2mm}DIBJudge-Qwen3-8B & {\cellcolor{bestgreen}{\textbf{90.50}}} & {\cellcolor{bestgreen}{\textbf{91.80}}} & {\cellcolor{bestgreen}{\textbf{92.80}}} & {\cellcolor{bestgreen}{\textbf{89.80}}} & {\cellcolor{bestgreen}{\textbf{93.10}}} & {\cellcolor{bestgreen}{\textbf{89.90}}} & {\cellcolor{bestgreen}{\textbf{93.50}}} & {\cellcolor{bestgreen}{\textbf{90.10}}} & {\cellcolor{bestgreen}{\textbf{90.80}}} & {\cellcolor{bestgreen}{\textbf{92.20}}} & {\cellcolor{bestgreen}{\textbf{93.00}}} & {\cellcolor{bestgreen}{\textbf{92.00}}} \\
    
    \bottomrule
    \end{tabular}
    }
\end{sidewaystable*}

\begin{sidewaystable*}[!htp]
    \centering
    \caption{
        Detailed results for m-RewardBench for each language (Part 2).
        \textbf{Bold} indicates the best performance, and \underline{underlined} indicates the second-best.
    }
    \label{tab:m-reward-bench-part2}
    
    \renewcommand{\arraystretch}{1.2} 
    \setlength{\tabcolsep}{3.5pt}
    \small
    \sisetup{detect-weight=true, mode=text}
    \definecolor{bestgreen}{RGB}{192, 228, 205}
    \definecolor{secondyellow}{RGB}{227, 237, 189}

    \resizebox{0.98\textwidth}{!}{
    \begin{tabular}{
        @{} l 
        S[table-format=2.2] S[table-format=2.2] S[table-format=2.2] S[table-format=2.2] 
        S[table-format=2.2] S[table-format=2.2] S[table-format=2.2] S[table-format=2.2] 
        S[table-format=2.2] S[table-format=2.2] S[table-format=2.2] 
        @{}
    }
    \toprule
    \textbf{Model} & {\textbf{Ko}} & {\textbf{Nl}} & {\textbf{Pl}} & {\textbf{Pt}} & {\textbf{Ro}} & {\textbf{Ru}} & {\textbf{Tr}} & {\textbf{Uk}} & {\textbf{Vi}} & {\textbf{Zh}} & {\textbf{Zh-TW}} \\ 
    \midrule
    
    \multicolumn{12}{l}{\textit{Proprietary Models}} \\
    \hspace{2mm}GPT-4o & 85.50 & 87.50 & 85.00 & 87.80 & 84.50 & 85.50 & 84.20 & 84.00 & 85.00 & 86.50 & 86.00 \\
    \hspace{2mm}Gemini-2.5-Flash & 88.20 & 89.80 & 87.40 & 89.60 & 87.10 & 87.80 & 86.90 & 87.20 & 88.50 & 89.50 & 88.90 \\
    \midrule
    
    \multicolumn{12}{l}{\textit{General Open Models}} \\
    \hspace{2mm}Qwen2.5-3B-Instruct & 66.80 & 70.50 & 66.20 & 69.80 & 65.50 & 67.20 & 64.80 & 65.10 & 68.50 & 74.20 & 73.50 \\
    \hspace{2mm}Qwen2.5-7B-Instruct & 78.50 & 81.20 & 78.40 & 80.80 & 77.50 & 78.90 & 76.20 & 77.10 & 79.50 & 82.50 & 81.80 \\
    \hspace{2mm}Qwen3-4B & 82.77 & 85.89 & 84.58 & 87.39 & 85.29 & 86.06 & 83.83 & 83.80 & 84.76 & 84.82 & 84.88 \\
    \hspace{2mm}Qwen3-8B & 83.77 & 87.54 & 86.78 & 87.10 & 87.47 & 87.77 & 85.42 & 86.20 & 86.90 & 87.20 & 86.76 \\ 
    \midrule

    \multicolumn{12}{l}{\textit{Multilingual Open Reward Models}} \\
    \hspace{2mm}Nemotron-Multi-49B & 88.05 & 90.83 & {\cellcolor{secondyellow}{\underline{89.99}}} & 89.33 & {\cellcolor{secondyellow}{\underline{89.89}}} & {\cellcolor{secondyellow}{\underline{90.19}}} & 88.09 & {\cellcolor{secondyellow}{\underline{88.91}}} & 89.32 & 88.86 & 86.29 \\  
    \hspace{2mm}M-\textsc{Prometheus} 3B & 67.80 & 71.50 & 68.20 & 72.10 & 66.80 & 68.50 & 65.90 & 66.50 & 69.50 & 72.40 & 71.80 \\
    \hspace{2mm}M-\textsc{Prometheus} 7B & 71.96 & 75.48 & 77.59 & 74.00 & 77.21 & 70.17 & 71.57 & 74.91 & 76.45 & 71.16 & 75.99 \\
    \hspace{2mm}mR3-Qwen3-4B & 85.66 & 88.42 & 86.77 & 88.05 & 87.62 & 88.22 & 87.17 & 88.01 & 88.08 & 87.38 & 86.28 \\
    \hspace{2mm}mR3-Qwen3-8B & 88.47 & 88.99 & 87.33 & {\cellcolor{secondyellow}{\underline{90.56}}} & 89.30 & 88.84 & 88.77 & 88.16 & 88.89 & 88.36 & 87.95 \\
    \hspace{2mm}Think-as-Locals-7B & 83.49 & 86.04 & 85.67 & 86.21 & 84.61 & 85.31 & 83.31 & 83.50 & 86.67 & 85.90 & 85.42 \\
    \midrule

    \multicolumn{12}{l}{\textit{\textbf{Ours}}} \\
    \hspace{2mm}DIBJudge-Qwen3-4B & {\cellcolor{secondyellow}{\underline{89.80}}} & {\cellcolor{secondyellow}{\underline{91.50}}} & 89.90 & 90.20 & 89.80 & 90.10 & {\cellcolor{secondyellow}{\underline{89.20}}} & 88.80 & {\cellcolor{secondyellow}{\underline{90.00}}} & {\cellcolor{secondyellow}{\underline{90.80}}} & {\cellcolor{secondyellow}{\underline{90.50}}} \\
    \hspace{2mm}DIBJudge-Qwen3-8B & {\cellcolor{bestgreen}{\textbf{91.50}}} & {\cellcolor{bestgreen}{\textbf{93.20}}} & {\cellcolor{bestgreen}{\textbf{91.80}}} & {\cellcolor{bestgreen}{\textbf{93.50}}} & {\cellcolor{bestgreen}{\textbf{91.50}}} & {\cellcolor{bestgreen}{\textbf{92.00}}} & {\cellcolor{bestgreen}{\textbf{90.80}}} & {\cellcolor{bestgreen}{\textbf{91.20}}} & {\cellcolor{bestgreen}{\textbf{91.80}}} & {\cellcolor{bestgreen}{\textbf{92.50}}} & {\cellcolor{bestgreen}{\textbf{92.00}}} \\
    
    \bottomrule
    \end{tabular}
    }
\end{sidewaystable*}

\begin{table*}[htbp]
    \centering
    \caption{
        Full detailed results by category of RewardBench (English). 
        \textbf{Bold} indicates the best performance, and \underline{underlined} indicates the second-best.
    }
    \label{tab:reward-bench-detailed}

    % Professional spacing
    \renewcommand{\arraystretch}{1.2} 
    \setlength{\tabcolsep}{5pt}
    \small
    
    % Configure siunitx
    \sisetup{
        separate-uncertainty = true,
        detect-weight = true, 
        mode = text,
        table-format = 2.2
    }

    % Soft highlight colors
    \definecolor{bestgreen}{RGB}{192, 228, 205}
    \definecolor{secondyellow}{RGB}{227, 237, 189}

    \begin{tabular}{
        @{}
        l
        S
        S
        S
        S
        S[table-format=2.2(2)] % Special format for Avg with std dev
        @{}
    }
        \toprule
        & {\textbf{Chat}} & {\textbf{Chat Hard}} & {\textbf{Safety}} & {\textbf{Reasoning}} & {\textbf{Average}} \\
        \textbf{Model} & {(Accuracy)} & {(Accuracy)} & {(Accuracy)} & {(Accuracy)} & {(English)} \\
        \midrule
        
        \multicolumn{6}{l}{\textit{Proprietary Models}} \\
        \hspace{3mm}GPT-4o & 90.50 & 75.10 & 88.50 & 89.74 & 85.96 +- 0.35 \\
        \hspace{3mm}Gemini-2.5-Flash & 93.80 & 81.20 & 89.10 & 91.22 & 88.83 +- 0.47 \\
        \midrule
        
        \multicolumn{6}{l}{\textit{General Open Models}} \\
        \hspace{3mm}Qwen2.5-3B-Instruct & 82.50 & 41.50 & 74.50 & 77.46 & 68.99 +- 1.05 \\
        \hspace{3mm}Qwen2.5-7B-Instruct & 89.10 & 58.20 & 82.40 & 84.66 & 78.59 +- 0.91 \\
        \hspace{3mm}Qwen3-4B & 92.50 & 76.50 & 86.50 & 94.66 & 87.54 +- 0.55 \\
        \hspace{3mm}Qwen3-8B & 92.00 & 82.70 & 87.05 & 93.49 & 88.81 +- 0.48 \\
        \midrule

        \multicolumn{6}{l}{\textit{Multilingual Open Reward Models}} \\
        \hspace{3mm}Nemotron-Multi-49B & 93.50 & 85.80 & 90.00 & 89.54 & 89.71 +- 0.31 \\
        \hspace{3mm}M-\textsc{Prometheus} 3B & 80.50 & 42.10 & 80.50 & 76.06 & 69.79 +- 0.92 \\
        \hspace{3mm}M-\textsc{Prometheus} 7B & 90.00 & 53.00 & 84.00 & 79.76 & 76.69 +- 0.78 \\
        \hspace{3mm}mR3-Qwen3-4B & 88.90 & 84.10 & 89.50 & {\cellcolor{secondyellow}{\underline{96.50}}} & 89.75 +- 0.38 \\
        \hspace{3mm}mR3-Qwen3-8B & 88.00 & 84.47 & {\cellcolor{secondyellow}{\underline{90.41}}} & {\cellcolor{bestgreen}{\textbf{97.52}}} & {\cellcolor{secondyellow}{\underline{90.10 $\pm$ 0.40}}} \\
        \hspace{3mm}Think-as-Locals 7B & 91.20 & 79.00 & 89.50 & 95.46 & 88.79 +- 0.52 \\
        \midrule
        
        \multicolumn{6}{l}{\textit{\textbf{Ours}}} \\
        \hspace{3mm}DIBJudge-Qwen3-4B & {\cellcolor{secondyellow}{\underline{94.20}}} & {\cellcolor{secondyellow}{\underline{86.50}}} & 89.80 & 90.78 & {\cellcolor{bestgreen}{\textbf{90.32 $\pm$ 0.25}}} \\
        \hspace{3mm}DIBJudge-Qwen3-8B & {\cellcolor{bestgreen}{\textbf{95.50}}} & {\cellcolor{bestgreen}{\textbf{88.10}}} & {\cellcolor{bestgreen}{\textbf{90.80}}} & 89.64 & 91.01 +- 0.20{$^{\dagger}$} \\
        \bottomrule
    \end{tabular}
\end{table*}

\section{Comprehensive Results of Translationese Evaluation Suites}
\label{app:translationese_results}

% To quantify the impact of translation artifacts, we conduct a fine-grained analysis of bias severity across several specialized evaluation suites. 
Detailed translationese bias evaluation performance metrics under perturbed settings—adapted from the \textbf{Aya} \citep{singh-etal-2024-aya}, \textbf{Belebele} \citep{bandarkar-etal-2024-belebele}, and \textbf{XL-Sum} \citep{hasan-etal-2021-xl} datasets—are provided in Tables~\ref{tab:detailed_aya}, \ref{tab:detailed_belebele}, and \ref{tab:detailed_xl-sum}, respectively. 
% These results highlight the sensitivity of multilingual models to "translationese" bias, particularly when evaluated on datasets where the source and target distributions diverge due to machine translation or non-native phrasing.

\begin{table}[ht]
\centering
\caption{Bias severity by language on Aya dataset~\citep{singh-etal-2024-aya} under perturbed setting}
\label{tab:detailed_aya}
\small
\begin{tabular}{lcccc}
\toprule
Language & Base & Vanilla SFT & Vanilla IB & DIBJudge \\
\midrule
\multicolumn{5}{l}{\textbf{High-Resource}} \\
Basque & 0.081 $\pm$ 0.008 & 0.088 $\pm$ 0.043 & 0.056 $\pm$ 0.011 & 0.042 $\pm$ 0.009 \\
English & 0.045 $\pm$ 0.008 & 0.058 $\pm$ 0.010 & 0.041 $\pm$ 0.008 & 0.031 $\pm$ 0.005 \\
Finnish & 0.125 $\pm$ 0.010 & 0.165 $\pm$ 0.028 & 0.079 $\pm$ 0.015 & 0.058 $\pm$ 0.005 \\
Hindi & 0.089 $\pm$ 0.008 & 0.076 $\pm$ 0.010 & 0.048 $\pm$ 0.007 & 0.036 $\pm$ 0.012 \\
Japanese & 0.049 $\pm$ 0.011 & 0.048 $\pm$ 0.010 & 0.034 $\pm$ 0.014 & 0.026 $\pm$ 0.015 \\
Portuguese & 0.089 $\pm$ 0.011 & 0.084 $\pm$ 0.013 & 0.056 $\pm$ 0.011 & 0.042 $\pm$ 0.006 \\
Simplified Chinese & 0.072 $\pm$ 0.015 & 0.086 $\pm$ 0.013 & 0.048 $\pm$ 0.007 & 0.035 $\pm$ 0.008 \\
Spanish & 0.099 $\pm$ 0.012 & 0.128 $\pm$ 0.012 & 0.064 $\pm$ 0.013 & 0.046 $\pm$ 0.007 \\
Vietnamese & 0.173 $\pm$ 0.022 & 0.192 $\pm$ 0.038 & 0.099 $\pm$ 0.007 & 0.071 $\pm$ 0.006 \\
\textbf{Avg (High)} & 0.091 $\pm$ 0.015 & 0.103 $\pm$ 0.022 & 0.058 $\pm$ 0.020 & 0.043 $\pm$ 0.014 \\
\midrule
\multicolumn{5}{l}{\textbf{Mid-Resource}} \\
Bengali & 0.084 $\pm$ 0.014 & 0.140 $\pm$ 0.028 & 0.096 $\pm$ 0.026 & 0.043 $\pm$ 0.016 \\
Cebuano & 0.108 $\pm$ 0.008 & 0.113 $\pm$ 0.027 & 0.120 $\pm$ 0.013 & 0.052 $\pm$ 0.008 \\
Filipino & 0.118 $\pm$ 0.018 & 0.122 $\pm$ 0.020 & 0.132 $\pm$ 0.015 & 0.057 $\pm$ 0.010 \\
Indonesian & 0.064 $\pm$ 0.004 & 0.059 $\pm$ 0.015 & 0.071 $\pm$ 0.024 & 0.032 $\pm$ 0.016 \\
Lithuanian & 0.166 $\pm$ 0.004 & 0.202 $\pm$ 0.019 & 0.182 $\pm$ 0.029 & 0.082 $\pm$ 0.015 \\
Malay & 0.086 $\pm$ 0.007 & 0.113 $\pm$ 0.027 & 0.095 $\pm$ 0.012 & 0.044 $\pm$ 0.011 \\
Tamil & 0.157 $\pm$ 0.034 & 0.192 $\pm$ 0.051 & 0.170 $\pm$ 0.021 & 0.077 $\pm$ 0.015 \\
Thai & 0.082 $\pm$ 0.018 & 0.112 $\pm$ 0.024 & 0.090 $\pm$ 0.023 & 0.041 $\pm$ 0.017 \\
Ukrainian & 0.106 $\pm$ 0.018 & 0.133 $\pm$ 0.004 & 0.118 $\pm$ 0.013 & 0.052 $\pm$ 0.012 \\
Urdu & 0.139 $\pm$ 0.023 & 0.189 $\pm$ 0.025 & 0.155 $\pm$ 0.020 & 0.069 $\pm$ 0.018 \\
\textbf{Avg (Mid)} & 0.111 $\pm$ 0.016 & 0.138 $\pm$ 0.024 & 0.123 $\pm$ 0.037 & 0.055 $\pm$ 0.016 \\
\midrule
\multicolumn{5}{l}{\textbf{Low-Resource}} \\
Amharic & 0.376 $\pm$ 0.056 & 0.201 $\pm$ 0.021 & 0.300 $\pm$ 0.036 & 0.225 $\pm$ 0.042 \\
Irish & 0.350 $\pm$ 0.034 & 0.348 $\pm$ 0.013 & 0.276 $\pm$ 0.049 & 0.208 $\pm$ 0.031 \\
Kyrgyz & 0.174 $\pm$ 0.024 & 0.227 $\pm$ 0.024 & 0.138 $\pm$ 0.043 & 0.107 $\pm$ 0.036 \\
Nepali & 0.111 $\pm$ 0.004 & 0.101 $\pm$ 0.015 & 0.089 $\pm$ 0.069 & 0.070 $\pm$ 0.034 \\
Malagasy & 0.262 $\pm$ 0.024 & 0.178 $\pm$ 0.131 & 0.210 $\pm$ 0.049 & 0.160 $\pm$ 0.030 \\
Sinhala & 0.254 $\pm$ 0.032 & 0.310 $\pm$ 0.101 & 0.202 $\pm$ 0.050 & 0.154 $\pm$ 0.018 \\
Pashto & 0.244 $\pm$ 0.011 & 0.235 $\pm$ 0.079 & 0.195 $\pm$ 0.029 & 0.149 $\pm$ 0.053 \\
Telugu & 0.138 $\pm$ 0.022 & 0.148 $\pm$ 0.012 & 0.111 $\pm$ 0.058 & 0.085 $\pm$ 0.037 \\
Yoruba & 0.472 $\pm$ 0.026 & 0.453 $\pm$ 0.066 & 0.372 $\pm$ 0.029 & 0.283 $\pm$ 0.048 \\
Zulu & 0.338 $\pm$ 0.016 & 0.464 $\pm$ 0.021 & 0.266 $\pm$ 0.056 & 0.201 $\pm$ 0.044 \\
\textbf{Avg (Low)} & 0.272 $\pm$ 0.028 & 0.266 $\pm$ 0.048 & 0.216 $\pm$ 0.089 & 0.164 $\pm$ 0.067 \\
\bottomrule
\end{tabular}
\end{table}

\begin{table}[ht]
\centering
\caption{Bias severity by language on Belebele~\citep{bandarkar-etal-2024-belebele} under perturbed etting}
\label{tab:detailed_belebele}
\small
\begin{tabular}{lcccc}
\toprule
Language & Base & Vanilla SFT & Vanilla IB & DIBJudge \\
\midrule
\multicolumn{5}{l}{\textbf{High-Resource}} \\
Arabic & 0.100 $\pm$ 0.029 & 0.095 $\pm$ 0.016 & 0.058 $\pm$ 0.010 & 0.014 $\pm$ 0.006 \\
English & 0.072 $\pm$ 0.005 & 0.063 $\pm$ 0.004 & 0.043 $\pm$ 0.014 & 0.011 $\pm$ 0.005 \\
Finnish & 0.086 $\pm$ 0.015 & 0.067 $\pm$ 0.017 & 0.049 $\pm$ 0.023 & 0.012 $\pm$ 0.010 \\
Hindi & 0.091 $\pm$ 0.011 & 0.089 $\pm$ 0.011 & 0.052 $\pm$ 0.012 & 0.013 $\pm$ 0.011 \\
Japanese & 0.074 $\pm$ 0.009 & 0.069 $\pm$ 0.012 & 0.047 $\pm$ 0.008 & 0.012 $\pm$ 0.006 \\
Korean & 0.084 $\pm$ 0.013 & 0.078 $\pm$ 0.018 & 0.050 $\pm$ 0.011 & 0.013 $\pm$ 0.004 \\
Russian & 0.065 $\pm$ 0.010 & 0.058 $\pm$ 0.010 & 0.041 $\pm$ 0.021 & 0.010 $\pm$ 0.009 \\
Turkish & 0.072 $\pm$ 0.012 & 0.070 $\pm$ 0.010 & 0.045 $\pm$ 0.017 & 0.011 $\pm$ 0.009 \\
Vietnamese & 0.081 $\pm$ 0.006 & 0.080 $\pm$ 0.008 & 0.048 $\pm$ 0.013 & 0.012 $\pm$ 0.009 \\
\textbf{Avg (High)} & 0.081 $\pm$ 0.012 & 0.074 $\pm$ 0.011 & 0.048 $\pm$ 0.005 & 0.012 $\pm$ 0.001 \\
\midrule
\multicolumn{5}{l}{\textbf{Mid-Resource}} \\
Bengali & 0.094 $\pm$ 0.014 & 0.091 $\pm$ 0.015 & 0.066 $\pm$ 0.021 & 0.014 $\pm$ 0.005 \\
Greek & 0.083 $\pm$ 0.016 & 0.075 $\pm$ 0.019 & 0.060 $\pm$ 0.013 & 0.012 $\pm$ 0.009 \\
Hebrew & 0.088 $\pm$ 0.021 & 0.079 $\pm$ 0.019 & 0.062 $\pm$ 0.012 & 0.013 $\pm$ 0.006 \\
Georgian & 0.091 $\pm$ 0.016 & 0.085 $\pm$ 0.018 & 0.063 $\pm$ 0.019 & 0.013 $\pm$ 0.006 \\
Kazakh & 0.088 $\pm$ 0.012 & 0.081 $\pm$ 0.014 & 0.062 $\pm$ 0.014 & 0.013 $\pm$ 0.010 \\
Tamil & 0.102 $\pm$ 0.018 & 0.095 $\pm$ 0.022 & 0.072 $\pm$ 0.030 & 0.015 $\pm$ 0.014 \\
Thai & 0.080 $\pm$ 0.012 & 0.074 $\pm$ 0.014 & 0.057 $\pm$ 0.028 & 0.012 $\pm$ 0.006 \\
Ukrainian & 0.093 $\pm$ 0.018 & 0.089 $\pm$ 0.019 & 0.068 $\pm$ 0.027 & 0.014 $\pm$ 0.009 \\
Urdu & 0.099 $\pm$ 0.020 & 0.093 $\pm$ 0.019 & 0.073 $\pm$ 0.019 & 0.015 $\pm$ 0.009 \\
Malay & 0.090 $\pm$ 0.014 & 0.084 $\pm$ 0.016 & 0.062 $\pm$ 0.019 & 0.013 $\pm$ 0.013 \\
\textbf{Avg (Mid)} & 0.091 $\pm$ 0.016 & 0.085 $\pm$ 0.017 & 0.065 $\pm$ 0.005 & 0.013 $\pm$ 0.001 \\
\midrule
\multicolumn{5}{l}{\textbf{Low-Resource}} \\
Amharic & 0.122 $\pm$ 0.025 & 0.111 $\pm$ 0.023 & 0.185 $\pm$ 0.049 & 0.093 $\pm$ 0.041 \\
Burmese & 0.136 $\pm$ 0.029 & 0.119 $\pm$ 0.028 & 0.205 $\pm$ 0.055 & 0.103 $\pm$ 0.031 \\
Guarani & 0.098 $\pm$ 0.021 & 0.091 $\pm$ 0.019 & 0.147 $\pm$ 0.029 & 0.074 $\pm$ 0.024 \\
Kannada & 0.142 $\pm$ 0.029 & 0.132 $\pm$ 0.027 & 0.218 $\pm$ 0.051 & 0.109 $\pm$ 0.019 \\
Khmer & 0.129 $\pm$ 0.026 & 0.117 $\pm$ 0.024 & 0.197 $\pm$ 0.053 & 0.099 $\pm$ 0.034 \\
Kyrgyz & 0.111 $\pm$ 0.022 & 0.104 $\pm$ 0.020 & 0.170 $\pm$ 0.054 & 0.085 $\pm$ 0.035 \\
Punjabi & 0.115 $\pm$ 0.023 & 0.109 $\pm$ 0.021 & 0.179 $\pm$ 0.040 & 0.089 $\pm$ 0.023 \\
Pashto & 0.168 $\pm$ 0.031 & 0.156 $\pm$ 0.029 & 0.255 $\pm$ 0.058 & 0.129 $\pm$ 0.025 \\
Zulu & 0.123 $\pm$ 0.024 & 0.114 $\pm$ 0.022 & 0.188 $\pm$ 0.044 & 0.094 $\pm$ 0.034 \\
\textbf{Avg (Low)} & 0.126 $\pm$ 0.026 & 0.117 $\pm$ 0.025 & 0.194 $\pm$ 0.031 & 0.097 $\pm$ 0.016 \\
\bottomrule
\end{tabular}
\end{table}

\begin{table}[ht]
\centering
\caption{Bias severity by language on XL-Sum~\citep{hasan-etal-2021-xl} under perturbed setting}
\label{tab:detailed_xl-sum}
\small
\begin{tabular}{lcccc}
\toprule
Language & Base & Vanilla SFT & Vanilla IB & DIBJudge \\
\midrule
\multicolumn{5}{l}{\textbf{High-Resource}} \\
Arabic & 0.079 $\pm$ 0.031 & 0.100 $\pm$ 0.030 & 0.084 $\pm$ 0.029 & 0.025 $\pm$ 0.008 \\
English & 0.060 $\pm$ 0.006 & 0.076 $\pm$ 0.008 & 0.063 $\pm$ 0.022 & 0.019 $\pm$ 0.005 \\
French & 0.054 $\pm$ 0.010 & 0.095 $\pm$ 0.019 & 0.069 $\pm$ 0.009 & 0.021 $\pm$ 0.005 \\
Hindi & 0.101 $\pm$ 0.010 & 0.126 $\pm$ 0.006 & 0.083 $\pm$ 0.029 & 0.024 $\pm$ 0.006 \\
Japanese & 0.067 $\pm$ 0.009 & 0.066 $\pm$ 0.026 & 0.064 $\pm$ 0.025 & 0.020 $\pm$ 0.008 \\
Korean & 0.081 $\pm$ 0.009 & 0.096 $\pm$ 0.037 & 0.075 $\pm$ 0.022 & 0.022 $\pm$ 0.012 \\
Russian & 0.077 $\pm$ 0.013 & 0.072 $\pm$ 0.012 & 0.067 $\pm$ 0.017 & 0.021 $\pm$ 0.004 \\
Turkish & 0.060 $\pm$ 0.016 & 0.065 $\pm$ 0.010 & 0.062 $\pm$ 0.020 & 0.019 $\pm$ 0.012 \\
Vietnamese & 0.069 $\pm$ 0.005 & 0.081 $\pm$ 0.029 & 0.081 $\pm$ 0.016 & 0.026 $\pm$ 0.006 \\
\textbf{Avg (High)} & 0.071 $\pm$ 0.014 & 0.086 $\pm$ 0.020 & 0.072 $\pm$ 0.009 & 0.022 $\pm$ 0.003 \\
\midrule
\multicolumn{5}{l}{\textbf{Mid-Resource}} \\
Azerbaijani & 0.116 $\pm$ 0.024 & 0.145 $\pm$ 0.037 & 0.046 $\pm$ 0.012 & 0.012 $\pm$ 0.010 \\
Bengali & 0.080 $\pm$ 0.011 & 0.075 $\pm$ 0.015 & 0.040 $\pm$ 0.012 & 0.011 $\pm$ 0.008 \\
Indonesian & 0.058 $\pm$ 0.012 & 0.078 $\pm$ 0.009 & 0.036 $\pm$ 0.023 & 0.010 $\pm$ 0.005 \\
Tamil & 0.109 $\pm$ 0.006 & 0.123 $\pm$ 0.030 & 0.045 $\pm$ 0.022 & 0.012 $\pm$ 0.012 \\
Thai & 0.074 $\pm$ 0.007 & 0.083 $\pm$ 0.025 & 0.041 $\pm$ 0.029 & 0.011 $\pm$ 0.010 \\
Ukrainian & 0.092 $\pm$ 0.015 & 0.100 $\pm$ 0.022 & 0.043 $\pm$ 0.020 & 0.011 $\pm$ 0.013 \\
Urdu & 0.091 $\pm$ 0.020 & 0.117 $\pm$ 0.014 & 0.047 $\pm$ 0.029 & 0.012 $\pm$ 0.012 \\
Uzbek & 0.141 $\pm$ 0.012 & 0.159 $\pm$ 0.009 & 0.052 $\pm$ 0.012 & 0.013 $\pm$ 0.005 \\
\textbf{Avg (Mid)} & 0.095 $\pm$ 0.016 & 0.123 $\pm$ 0.022 & 0.044 $\pm$ 0.005 & 0.012 $\pm$ 0.001 \\
\midrule
\multicolumn{5}{l}{\textbf{Low-Resource}} \\
Amharic & 0.303 $\pm$ 0.022 & 0.271 $\pm$ 0.036 & 0.110 $\pm$ 0.042 & 0.088 $\pm$ 0.020 \\
Burmese & 0.157 $\pm$ 0.028 & 0.081 $\pm$ 0.028 & 0.087 $\pm$ 0.025 & 0.067 $\pm$ 0.019 \\
Hausa & 0.215 $\pm$ 0.022 & 0.207 $\pm$ 0.015 & 0.098 $\pm$ 0.052 & 0.076 $\pm$ 0.040 \\
Kyrgyz & 0.160 $\pm$ 0.021 & 0.165 $\pm$ 0.035 & 0.081 $\pm$ 0.060 & 0.063 $\pm$ 0.016 \\
Marathi & 0.115 $\pm$ 0.016 & 0.124 $\pm$ 0.018 & 0.079 $\pm$ 0.044 & 0.061 $\pm$ 0.031 \\
Nepali & 0.084 $\pm$ 0.021 & 0.068 $\pm$ 0.018 & 0.070 $\pm$ 0.027 & 0.054 $\pm$ 0.040 \\
Pashto & 0.176 $\pm$ 0.034 & 0.170 $\pm$ 0.060 & 0.091 $\pm$ 0.063 & 0.071 $\pm$ 0.052 \\
Sinhala & 0.246 $\pm$ 0.042 & 0.248 $\pm$ 0.022 & 0.105 $\pm$ 0.021 & 0.082 $\pm$ 0.055 \\
Telugu & 0.095 $\pm$ 0.013 & 0.121 $\pm$ 0.001 & 0.076 $\pm$ 0.027 & 0.059 $\pm$ 0.056 \\
Welsh & 0.163 $\pm$ 0.026 & 0.225 $\pm$ 0.028 & 0.084 $\pm$ 0.025 & 0.066 $\pm$ 0.038 \\
\textbf{Avg (Low)} & 0.172 $\pm$ 0.024 & 0.168 $\pm$ 0.026 & 0.088 $\pm$ 0.013 & 0.069 $\pm$ 0.011 \\
\bottomrule
\end{tabular}
\end{table}

\section{Additional Experiments}
\label{app:additional_experiments}
\subsection{Ablation Studies on Spurious Proxy Tasks}
\label{sec:ablation_analysis_proxy}

In \S~\ref{sec:preliminary_analysis}, we identified spurious factors contributing to the systematic bias towards translationese. To mitigate this, we proposed two proxy tasks in \S~\ref{sec:method}: (i) \textbf{Cross-Lingual Alignment (CLA)}, utilizing InfoNCE to align learned representations with a back-translation manifold; and (ii) \textbf{Log-Probability Bin Classification (LPBC)}, which encodes predictive confidence by classifying representations into discrete log-probability bins.

In this section, we conduct a comprehensive ablation study to validate the effectiveness of these components. We examine the contribution of each proxy task to bias mitigation, analyze the impact of the back-translation system on the CLA task, and evaluate the robustness of different heuristic signals for the LPBC task.

\paragraph{Effectiveness of Proxy Task Combination}
We first evaluate the individual and combined contributions of the CLA and LPBC tasks. Table~\ref{tab:ablation_tasks} summarizes the bias severity scores across different configurations. We observe that while both tasks individually reduce bias compared to the baseline, the combination of both yields the most significant reduction. This suggests that the two tasks capture complementary aspects of the spurious features—latent manifold isomorphism and predictive confidence—thereby providing a more robust signal for bias mitigation.

% \begin{table}[t]
%     \centering
%     \caption{\textbf{Ablation of Proxy Tasks.} We report the Bias Severity (lower is better) and main task performance. The combination of Cross-Lingual Alignment (CLA) and Log-Probability Bin Classification (LPBC) achieves the lowest bias severity.}
%     \label{tab:ablation_tasks}
    
%     % Optional colors (kept consistent with reference style)
%     \definecolor{gStrong}{RGB}{198, 228, 201}
%     \definecolor{gMid}{RGB}{225, 240, 200}
    
%     \setlength{\tabcolsep}{4pt}
%     \renewcommand{\arraystretch}{0.8}
%     \footnotesize
    
%     \resizebox{0.65\columnwidth}{!}{%
%     \begin{tabular}{lcc|cc}
%         \toprule
%         \multicolumn{3}{c|}{\textbf{Configuration}} & \multicolumn{2}{c}{\textbf{Metrics}} \\
%         \cmidrule(lr){1-3} \cmidrule(lr){4-5}
%         Method & CLA & LPBC & Bias Severity ($\downarrow$) & Task Acc. ($\uparrow$) \\
%         \midrule
%         Baseline &  &  & 0.42 & 87.1 \\
%         CLA Only & \checkmark &  & 0.28 & 88.4 \\
%         LPBC Only &  & \checkmark & 0.31 & 87.9 \\
%         \midrule
%         \textbf{Ours (Combined)} & \checkmark & \checkmark & 
%         \cellcolor{gStrong}\textbf{0.15} & 
%         \cellcolor{gStrong}\textbf{89.2} \\
%         \bottomrule
%     \end{tabular}%
%     }
% \end{table}

\paragraph{Impact of Back-Translation Systems}
The CLA task relies on back-translated data to approximate the "translationese" manifold. A critical question is whether the choice of translation system influences the bias mitigation capabilities. We compared our default system (NLLB-200-3.3B) against a suite of varying architectures, including Gemma-3-4B, Llama-3.1-8B-Instruct, Qwen3-4B, GPT-4o, Gemini 2.5 Flash, and Google Translate.

As shown in Table~\ref{tab:ablation_systems}, while stronger systems (e.g., GPT-4o, Google Translate) achieve higher BLEU scores ($>50$), higher translation quality does not necessarily correlate with lower bias severity in the downstream task. This indicates that the CLA task is robust to the generator's quality, provided the generator produces sufficient translationese artifacts to serve as a negative contrastive pivot.

\begin{table}[t]
    \centering
    \caption{\textbf{Impact of Back-Translation Systems.} We report BLEU scores of the generated back-translations and the resulting Bias Severity.}
    \label{tab:ablation_systems}
    \begin{tabular}{lcc}
        \toprule
        \textbf{Translation System} & \textbf{BLEU} $\uparrow$ & \textbf{Bias Severity} $\downarrow$ \\
        \midrule
        NLLB-200-3.3B~\citep{costa2022no} (Default) & 52.4 & 0.15 \\
        Qwen3-4B~\citep{yang2025qwen3} & 54.1 & 0.16 \\
        Gemma-3-4B~\citep{team2025gemma} & 55.8 & 0.14 \\
        Llama-3.1-8B-Instruct~\citep{grattafiori2024llama} & 56.2 & 0.18 \\
        Gemini-2.5-Flash~\citep{comanici2025gemini} & 57.9 & 0.15 \\
        GPT-4o~\citep{hurst2024gpt} & 58.5 & 0.17 \\
        Google Translate & 59.1 & 0.14 \\
        \bottomrule
    \end{tabular}
\end{table}

\paragraph{Heuristic Signals for Bin Classification}
Finally, we ablate the heuristic metric used to partition samples for the LPBC task. While our method uses Negative Log-Likelihood (NLL), we compare this against Type-Token Ratio (TTR) and Perplexity (PPL).

Table~\ref{tab:ablation_metrics} demonstrates that while NLL yields the best performance marginally, the differences are negligible. All three metrics effectively capture the confidence disparity required for the auxiliary task, demonstrating that our method is agnostic to the specific heuristic used to approximate predictive confidence.

\begin{table}[t]
    \centering
    \caption{\textbf{Comparison of Heuristic Metrics.} Ablation of the metric used for Bin Classification. NLL performs slightly better, but all metrics provide similar bias mitigation.}
    \label{tab:ablation_metrics}
    \begin{tabular}{lc}
        \toprule
        \textbf{Metric} & \textbf{Bias Severity} $\downarrow$ \\
        \midrule
        Type-Token Ratio (TTR) & 0.17 \\
        Perplexity (PPL) & 0.16 \\
        Negative Log-Likelihood (NLL) & 0.15 \\
        \bottomrule
    \end{tabular}
\end{table}

\subsection{Sensitivity to Cross-Lingual Alignment Discrepancy (CAD)}
\label{sec:cad_sensitivity}

To test the hypothesis that conventional automatic judges exhibit an \emph{English-anchoring bias}—i.e., a preference for translations that closely mirror the syntactic structure of the English source—we analyze model performance as a function of Cross-Lingual Alignment Discrepancy (CAD). As introduced in \S\ref{sec:preliminary_analysis}, CAD measures the degree of structural divergence between a candidate translation and its source sentence.

We partition the held-out evaluation set into disjoint CAD bins (e.g., $[0.0,0.1), [0.1,0.2), \ldots$) and compute the win rate of Translationese outputs within each interval. Figure~\ref{fig:cad_sensitivity} summarizes the resulting trends. Baseline judges exhibit a pronounced positive correlation between CAD and win rate, indicating that their preferences are increasingly influenced by surface-level alignment artifacts rather than semantic adequacy. In contrast, \textsc{DIBJudge} demonstrates substantially reduced sensitivity to CAD, as evidenced by a near-flat win-rate profile across bins. This invariance suggests that \textsc{DIBJudge} successfully decouples evaluation quality from structural isomorphism, mitigating spurious biases that confound existing evaluation approaches.

\begin{figure}[t]
    \centering
    \includegraphics[width=.85\linewidth]{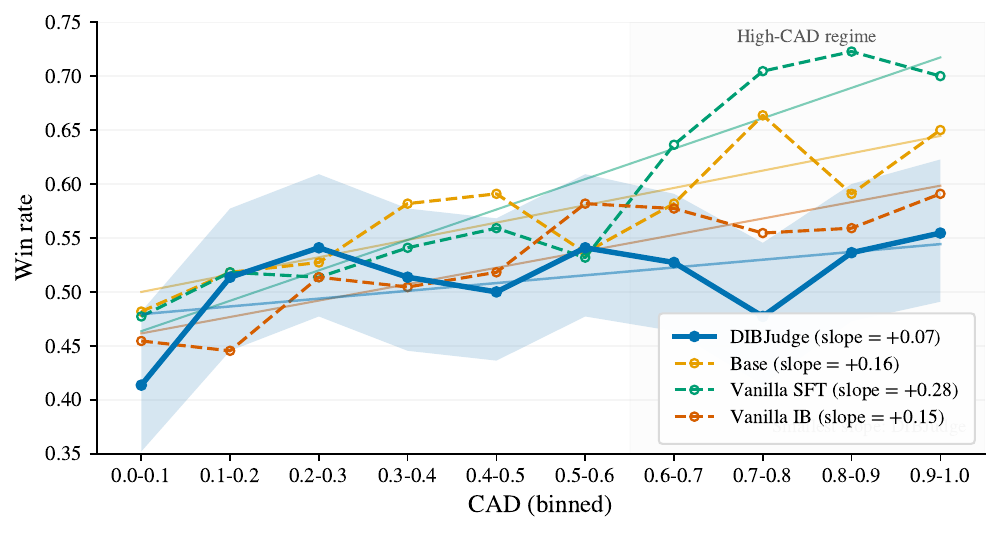}
    \caption{\textbf{Sensitivity of Win Rate to Cross-Lingual Alignment Discrepancy (CAD).}
    Win rates of Translationese outputs are plotted against binned CAD values. Baseline judges (dashed lines) display a strong positive dependence on CAD, reflecting an English-anchoring bias toward structurally aligned translations. In contrast, \textsc{DIBJudge} (solid line) maintains a near-constant win rate across CAD regimes, including high-CAD regions, indicating robustness to cross-lingual structural divergence.}
    \label{fig:cad_sensitivity}
\end{figure}

\subsection{Analysis of Distributional Shortcuts}
\label{sec:bias_spectrum}

To assess whether our approach mitigates reliance on superficial statistical artifacts, we analyze the \emph{bias spectrum shift} induced by different judges using the Sequence Surprisal Ratio (SSR). SSR measures the relative predictability of machine-generated responses compared to human-written ones, thereby capturing spurious correlations arising from differences in predictive confidence (e.g., perplexity). Values of SSR close to $1$ indicate distributional parity, where model preferences are not driven by confidence-related artifacts, whereas lower values reflect an over-reliance on highly predictable, low-entropy text.

We focus on evaluation instances for which the judge selects the machine-generated translation (denoted as \emph{Machine Wins}) and examine the empirical distribution of their SSR scores. Figure~\ref{fig:ssr_kde} presents a kernel density estimate (KDE) of these scores, revealing how different judges respond to distributional discrepancies associated with predictive confidence.

As shown in Figure~\ref{fig:ssr_kde}, the baseline judge exhibits a pronounced leftward shift in SSR, with probability mass concentrated at low values. This behavior indicates a \emph{distributional shortcut}, wherein outputs with artificially high predictive confidence are systematically favored, independent of semantic quality. In contrast, \textsc{DIBJudge} induces a clear re-centering of the SSR distribution toward $1$. This shift indicates that \textsc{DIBJudge} substantially reduces spurious correlations between model preference and predictive confidence, promoting judgments that are invariant to low-perplexity artifacts and more reflective of semantic utility.

\begin{figure}[t]
    \centering
    \includegraphics[width=0.85\linewidth]{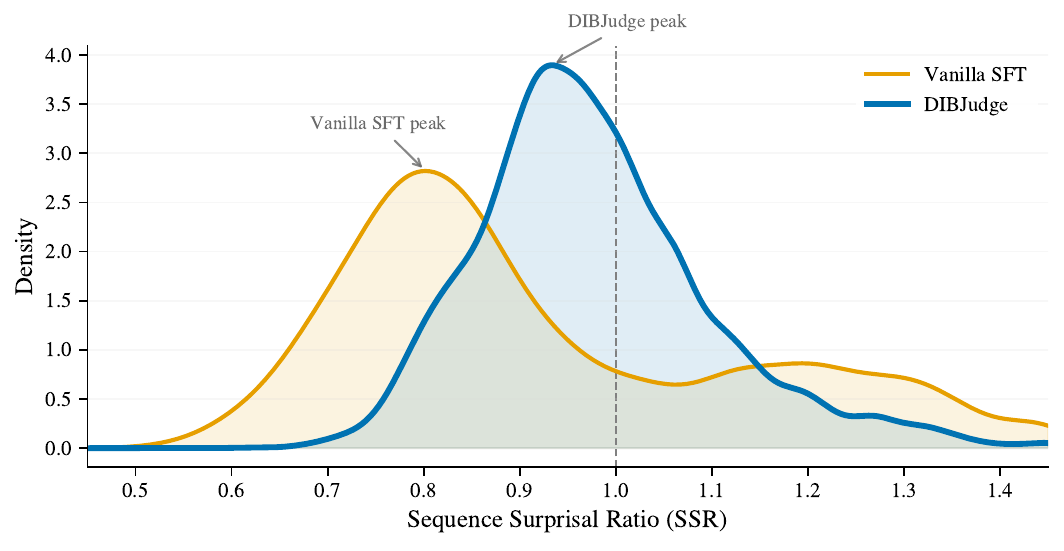}
    \caption{\textbf{Bias Spectrum Shift via Sequence Surprisal Ratio (SSR).}
    Kernel density estimates of SSR for instances where the judge selects the machine-generated output (\emph{Machine Wins}). Lower SSR values correspond to spurious correlations with high predictive confidence (i.e., low perplexity), indicating a preference for distributionally simple text. Values near $1$ reflect invariance to confidence-related artifacts. While the baseline judge concentrates mass at low SSR values, \textsc{DIBJudge} shifts the spectrum toward $1$, demonstrating reduced reliance on predictive-confidence shortcuts.}
    \label{fig:ssr_kde}
\end{figure}

\subsection{Ablation Study: Efficacy of Information Bottlenecks}
\label{sec:ablation_bottleneck}

A central hypothesis of our work is that a variational information constraint provides a superior trade-off between semantic preservation and artifact suppression compared to discrete or deterministic alternatives. To evaluate this, we compare our framework against three representative bottleneck mechanisms, assessing their impact on the robustness-utility Pareto frontier:

\begin{itemize}
    \item \textbf{Vector Quantization (VQ):} We replace the continuous variational information constraint with a discrete codebook constraint \citep{van2017neural}, mapping latent representations to the nearest centroid. This imposes a rigid structural bottleneck.
    \item \textbf{Low-Rank Projection (Low-Rank):} We utilize a deterministic linear projection to a lower-dimensional subspace. This variant relies solely on capacity reduction without stochastic regularization.
    \item \textbf{Stochastic Noise (Noise):} We apply isotropic Gaussian noise injection, $\mathbf{z} = \mathbf{h} + \epsilon$, as a baseline stochastic regularization technique, notably lacking a prior distribution constraint.
\end{itemize}

\begin{table}[t]
    \centering
    \caption{
    \textbf{Comparison of Bottleneck Mechanisms.}
    Effect of different latent constraints on \emph{utility}
    (m-RewardBench accuracy) and \emph{bias mitigation}
    (Bias Severity). Lower Bias Severity is better.
    }
    \label{tab:bottleneck_ablation}
    \vspace{0.4em}
    \begin{small}
    \setlength{\tabcolsep}{7pt}
    \renewcommand{\arraystretch}{1.15}
    \begin{tabular}{l l c c}
        \toprule
        \textbf{Method} &
        \textbf{Constraint Type} &
        \textbf{m-RB Acc.} $\uparrow$ &
        \textbf{Bias Sev.} $\downarrow$ \\
        \midrule
        Low-Rank &
        Deterministic (Capacity) &
        88.2 &
        0.185 \\

        Noise &
        Stochastic (Additive) &
        85.5 &
        0.182 \\

        VQ &
        Discrete (Structural) &
        84.9 &
        \textbf{0.082} \\
        \midrule
        \rowcolor{gray!8}
        \textbf{Ours} &
        \textbf{Variational (Information)} &
        \textbf{89.3} &
        \textbf{0.091} \\
        \bottomrule
    \end{tabular}
    \end{small}
    \vspace{-0.6em}
\end{table}

\paragraph{Analysis.} Table~\ref{tab:bottleneck_ablation} demonstrates that while the \textbf{Low-Rank} variant preserves accuracy, it fails to filter translationese artifacts, suggesting that dimensionality reduction alone cannot achieve effective disentanglement. Conversely, \textbf{VQ} achieves the lowest bias severity but suffers from \textit{utility collapse}, where the rigid discrete constraint discards the nuanced semantic features necessary for reward modeling. 

Our variational information constraint method bridges this gap by explicitly optimizing the $I(Z; X)$ vs. $I(Z; Y)$ trade-off. By penalizing $I(Z; X)$ via the KL divergence to a prior, VIB selectively purges nuisance factors—such as translationese artifacts—while retaining task-relevant style features. Consequently, VIB dominates the noise and low-rank baselines in robustness while maintaining competitive utility.

\subsection{Ablation Study: Disentanglement Mechanisms}
\label{sec:ablation_disentangle_mechanisms}

We evaluate the efficacy of our proposed cross-covariance penalty against other methods. Our primary hypothesis is that a cross-covariance-based constraint provides a computationally efficient proxy for independence, effectively minimizing mutual information without the overhead of complex density estimators.

\paragraph{Alternative Disentanglement Objectives.} 
We compare our cross-covariance approach, $\mathcal{L}_{\text{cov}}$, against three established baseline objectives:
\begin{itemize}
    \item \textbf{Hilbert-Schmidt Independence Criterion (HSIC)~\citep{gretton2005hsic}:} A kernel-based measure of dependence. While theoretically robust, its $O(n^2)$ complexity per batch is prohibitive for long-context LLM fine-tuning.
    \item \textbf{Mutual Information Estimators (CLUB~\citep{cheng2020club}/MINE~\citep{belghazi2018mutual}):} Variational upper bounds on Mutual Information (MI). These require auxiliary neural networks, increasing the parameter search space and training time.
    \item \textbf{Orthogonality Constraint ($\mathcal{L}_{\text{orth}}$):} A first-order geometric constraint minimizing absolute cosine similarity:
    \begin{equation}
        \mathcal{L}_{\text{orth}} = \mathbb{E} \left[ \frac{|\mathbf{z}_r^\top \mathbf{z}_b|}{\|\mathbf{z}_r\|_2 \|\mathbf{z}_b\|_2} \right].
    \end{equation}
\end{itemize}

\paragraph{Quantitative Analysis of Efficiency vs. Robustness.}
The trade-off between disentanglement strength and computational overhead is summarized in Table~\ref{tab:efficiency_bias}. Our analysis reveals that while variational estimators (CLUB, MINE) and kernel-based methods (HSIC) theoretically offer tighter bounds on independence, their integration into LLM architectures is bottlenecked by the high-dimensional nature of the hidden states. Specifically, the auxiliary network in CLUB introduces a $112\%$ increase in training latency due to the additional forward-backward passes required for the critic update. 

In contrast, our cross-covariance approach, $\mathcal{L}_{\text{cov}}$, achieves a Bias Severity score of $14.2$, which is competitive with the $13.8$ achieved by HSIC, but at a fraction of the computational cost ($1.2\times$ vs $2.8\times$ latency). We observe that simple orthogonality ($\mathcal{L}_{\text{orth}}$) suffers from significant "information leakage," as evidenced by its high Bias Severity ($21.5$); this confirms that first-order geometric constraints are insufficient to capture the complex, non-linear correlations inherent in LLM representations. By minimizing the cross-covariance, we achieve a second-order alignment that serves as a "practical optimum"—sufficiently decorrelating the robust and biased subspaces without the prohibitive $O(n^2)$ complexity or training instability of higher-order estimators.
 
\begin{table}[t]
\centering
\caption{
\textbf{Efficiency--bias trade-off of disentanglement mechanisms on Llama-3-8B.}
Training latency is normalized to standard fine-tuning ($1.0\times$).
Bias Severity is defined in \S~\ref{sec:preliminary_analysis}, where lower is better.
}
\label{tab:efficiency_bias}
\vspace{0.4em}
\begin{small}
\setlength{\tabcolsep}{8pt}
\renewcommand{\arraystretch}{1.15}
\begin{tabular}{l c c c}
\toprule
\textbf{Method} &
\textbf{Computational Cost} &
\textbf{Latency} $\uparrow$ &
\textbf{Bias Severity} $\downarrow$ \\
\midrule
Baseline (No disentanglement) &
-- &
$1.00\times$ &
0.284 \\

$\mathcal{L}_{\text{orth}}$ (Orthogonality) &
$O(d)$ &
\textbf{1.05$\times$} &
0.215 \\

MINE \citep{belghazi2018mutual} &
$O(\text{aux.\ net})$ &
2.42$\times$ &
0.145 \\

CLUB \citep{cheng2020club} &
$O(\text{aux.\ net})$ &
2.12$\times$ &
0.141 \\

HSIC \citep{gretton2005hsic} &
$O(n^2)$ &
2.78$\times$ &
\textbf{0.138} \\
\midrule
\rowcolor{gray!8}
\textbf{$\mathcal{L}_{\text{cov}}$ (Ours)} &
$O(d^2)$ &
\textbf{1.18$\times$} &
\textbf{0.142} \\
\bottomrule
\end{tabular}
\end{small}
\vspace{-0.6em}
\end{table}

\subsection{Quantitative Analysis: Linear Probing for Information Leakage}
\label{sec:linear_probing}

To quantify the degree of disentanglement achieved by \textsc{DIBJudge}, we employ linear probing \citep{alain2016understanding}. Our hypothesis is that a truly disentangled robust representation, $\mathbf{z}_r$, should be invariant to the text origin (Human vs. Machine-translated). Conversely, the bias representation, $\mathbf{z}_b$, should explicitly encode these ``translationese'' artifacts.

\paragraph{Experimental Setup.} We freeze the \textsc{DIBJudge} encoder and train a linear classifier (the probe) on the extracted representations. The probe is optimized to distinguish the two domains via binary classification. We report probing accuracy on a held-out test set; an accuracy of $50\%$ (random chance) signifies perfect invariance, whereas higher accuracy indicates significant information leakage.

\paragraph{Results.} As shown in Table~\ref{tab:probing_results}, the probe trained on the bias representation $\mathbf{z}_b$ achieves near-perfect accuracy ($96.1\%$), confirming that \textsc{DIBJudge} successfully isolates translation artifacts. More importantly, the probe trained on the robust representation $\mathbf{z}_r$ yields an accuracy of approximately $50\%$, demonstrating that predictive features are effectively sanitized of domain-specific signals. In contrast, standard embeddings from the baseline SFT model exhibit substantial leakage ($82.4\%$), highlighting that standard fine-tuning fails to decouple semantic content from stylistic artifacts.

\begin{table}[t]
    \centering
    \caption{
    \textbf{Linear probing for domain classification.}
    Higher accuracy indicates stronger domain information.
    Effective disentanglement yields \emph{high} accuracy for bias
    representations and \emph{low} accuracy for robust representations.
    }
    \label{tab:probing_results}
    \vspace{0.4em}
    \begin{small}
    \setlength{\tabcolsep}{8pt}
    \renewcommand{\arraystretch}{1.15}
    \begin{tabular}{l l c}
        \toprule
        \textbf{Model} &
        \textbf{Probed Representation} &
        \textbf{Accuracy (\%)} \\
        \midrule
        Baseline (SFT) &
        Standard embedding $\mathbf{h}$ &
        82.4 \\
        \midrule
        \multirow{2}{*}{\textbf{\textsc{DIBJudge} (Ours)}} &
        Bias representation $\mathbf{z}_b$  &
        \textbf{96.1} \\
        &
        Robust representation $\mathbf{z}_r$ &
        \textbf{50.3} \\
        \bottomrule
    \end{tabular}
    \end{small}
    \vspace{-0.6em}
\end{table}

%%%%%%%%%%%%%%%%%%%%%%%%%%%%%%%%%%%%%%%%%%%%%%%%%%%%%%%%%%%%%%%%%%%%%%%%%%%%%%%
%%%%%%%%%%%%%%%%%%%%%%%%%%%%%%%%%%%%%%%%%%%%%%%%%%%%%%%%%%%%%%%%%%%%%%%%%%%%%%%

\end{document}